\documentclass{article} 
\usepackage{arxiv}

\usepackage{amssymb}
\usepackage{amsmath}
\usepackage{graphicx}
\usepackage[mathscr]{euscript}
\usepackage{lineno}
\usepackage{subcaption}
\captionsetup{compatibility=false}
\usepackage[utf8]{inputenc}
\usepackage{float}
\usepackage[normalem]{ulem}
\usepackage{multicol}
\usepackage{lmodern}
\usepackage{lipsum}
\usepackage{marvosym}
\usepackage{mathtools}
\usepackage{soul}
\usepackage{bm}
\usepackage{amsthm}
\usepackage{xargs}                      
\usepackage{xcolor}
\usepackage{bbm}
\usepackage{hyperref}
\usepackage[title]{appendix}
\usepackage{mathtools}

\newcommand{\argmin}{\operatornamewithlimits{argmin}}

\newcommand{\Rea}{\mathbb{R}}

\newcommand{\R}{\mathbb{R}}

\newcommand{\set}[1]{\left \{ #1\right \}}
\newcommand{\norm}[1]{\lVert#1\rVert}
\newcommand{\abs}[1]{\left\lvert#1\right\rvert}

\DeclareMathOperator*{\emb}{emb}

\newcommand{\Nat}{\mathbb{N}}

\newcommand{\wasstwo}[2]{\wasstwoop \left( #1, #2\right)}
\newcommand{\wasstwoop}{\operatorname{W}_2}

\newcommand{\relu}{\operatorname{ReLU}}

\newcommand{\paren}[1]{\left ( #1\right)}

\newcommand{\pushf}[2]{{#1}{}_{\#}{#2}}

\makeatletter
\def\mathcolor#1#{\@mathcolor{#1}}
\def\@mathcolor#1#2#3{%
  \protect\leavevmode
  \begingroup
    \color#1{#2}#3%
  \endgroup
}
\makeatother

\renewcommand{\eqref}[1]{{Eq.~\ref{#1}}}

\DeclareMathOperator*{\range}{Range}

\DeclareMathOperator{\lochom}{lochom}
\DeclareMathOperator{\diff}{dif}
\DeclareMathOperator{\locdiff}{locdif}

\DeclareMathOperator{\lip}{lip}

\DeclareMathOperator{\locbilip}{locbilip}

\newcommand\restr[2]{{
  \left.\kern-\nulldelimiterspace 
  #1 
  \vphantom{
  \big|
  } 
  \right|_{#2} 
  }}
  
\makeatletter
\@ifclassloaded{beamer}{}{
    \newtheorem{lemma}{Lemma}
    \newtheorem{theorem}{Theorem}
    \newtheorem{corollary}{Corollary}

    \theoremstyle{definition}
    \newtheorem{definition}{Definition}
    \newtheorem{example}{Example}
    
    \theoremstyle{remark}
    
}
\makeatother

\def\bszero{\boldsymbol{0}}

\makeatletter
\renewcommand{\boxed}[1]{\text{\fboxsep=.2em\fbox{\m@th$\displaystyle#1$}}}
\makeatother

\newcommand{\twopartpiecewise}[4]{\begin{cases} #1 & \text{if } #2 \\  #3 & \text{if } #4 \end{cases}}

\newcommand{\infseq}[3]{\paren{#1{}_{#2}}^\infty_{#2 = #3}}


\definecolor{orange}{RGB}{255,127,0}
\definecolor{blackchocolate}{HTML}{191102}
\definecolor{rosewood}{HTML}{510d0a}
\definecolor{rufous}{HTML}{A31B14}
\definecolor{citron}{HTML}{a29f15}
\definecolor{orangeyellow}{HTML}{f3b61f}
\definecolor{teagreen}{HTML}{bbd8b3}
\definecolor{gray}{RGB}{128,128,128}
\definecolor{lightcitron}{RGB}{189,187,91}
\definecolor{lightrufous}{RGB}{190,95,90}
\definecolor{steelblue}{HTML}{4C86A8}

\usepackage{tikz}
\usepackage{fonts}
\usepackage{gensymb}


  
\DeclareMathOperator{\myindex}{Ind}
\title{Deep Invertible Approximation of Topologically \\ Rich Maps between Manifolds}
\author{Michael Puthawala \thanks{Department of Mathematics and Statistics, South Dakota State University, Chicoine Hall, Box 2225, Brookings, SD 57007 \texttt{michael.puthawala@sdstate.edu}}
\and
Matti Lassas \thanks{Department of Mathematics and Statistics, University of Helsinki, FI-00014 Helsinki, Finland \texttt{matti.lassas@helsinki.fi}}
\and
Ivan Dokmani\'c \thanks{Department of Mathematics and Computer Science, University of Basel, Peterspl. 1, 4001 Basel, Switzerland \texttt{ivan.dokmanic@unibas.ch}}
\and
Pekka Pankka \thanks{Department of Mathematics and Statistics, University of Helsinki, FI-00014 Helsinki, Finland \texttt{pekka.pankka@helsinki.fi}}
\and
Maarten V. de Hoop \thanks{Computational and Applied Mathematics and Earth Science, Rice University, Houston, TX 77005, USA \texttt{mdehoop@rice.edu}}
}
\date{\today}

\begin{document}

\maketitle

\begin{abstract}
    How can we design neural networks that allow for stable universal approximation of maps between topologically interesting manifolds? The answer is with a coordinate projection. Neural networks based on topological data analysis (TDA) use tools such as persistent homology to learn topological signatures of data and stabilize training but may not be universal approximators or have stable inverses. Other architectures universally approximate data distributions on submanifolds but only when the latter are given by a single chart, making them unable to learn maps that change topology. By exploiting the topological parallels between locally bilipschitz maps, covering spaces, and local homeomorphisms, and by using universal approximation arguments from machine learning, we find that a novel network of the form $\mathcal{T} \circ p \circ \mathcal{E}$, where $\mathcal{E}$ is an injective network, $p$ a fixed coordinate projection, and $\mathcal{T}$ a bijective network, is a universal approximator of local diffeomorphisms between compact smooth submanifolds embedded in $\mathbb{R}^n$. We emphasize the case when the target map changes topology. Further, we find that by constraining the projection $p$, multivalued inversions of our networks can be computed without sacrificing universality. As an application, we show that learning a group invariant function with unknown group action naturally reduces to the question of learning local diffeomorphisms for finite groups. Our theory permits us to recover orbits of the group action. We also outline possible extensions of our architecture to address molecular imaging of molecules with symmetries. Finally, our analysis informs the choice of topologically expressive starting spaces in generative problems. 
\end{abstract}

\section{Introduction}

Topology is key in machine learning applications, from generative modeling, classification and autoencoding to applications in physics such as gauge field theory and occurrence of topological excitations. Here we describe a neural network architecture which is a universal approximator of locally stable maps between topological manifolds. In contrast to classical universal architectures, like the multilayer perceptron (MLP), the architecture studied in this work is built with forward and inverse stability in mind. We emphasize that our network applies to the case when the topology of the manifolds are not known a priori.

Proving that specific deep network architectures are universal approximators of broad classes of functions have been long studied with much progress in recent years. Beginning with \cite{cybenko1989approximation} and \cite{hornik1991approximation}, shallow networks formed with $\relu$ or sigmoid activation functions were shown to be universal approximators of continuous functions on compact subsets in $\R^n$. Recently an effort has emerged to extend the existing work to more specialized problems where other properties, for example monotonicity \cite{huang2018neural} or stability of inversion \cite{puthawala2022universal}, are desired in addition to universality.

In parallel with these developments, manifold learning has emerged as a vibrant subfield of machine learning. Manifold learning is guided by the manifold hypothesis, the mantra that ``high dimensional data are usually clustered around a low-dimensional manifold'' \cite{tenenbaum1998mapping}. This in turn begat the subfield of manifold learning \cite{belkin2004semi,belkin2006manifold,cayton2005algorithms,fefferman2016testing,kumar2017improved,lei2020geometric,rifai2011manifold,roweis2000nonlinear,saul2003think,tenenbaum2000global,vincent2003manifold,weinberger2006unsupervised}. The guiding principle is that a useful network needn't (and often shouldn't!) operate on all possible values in data space. Instead, it is better to use the ansatz that one should manipulate data that lies on or near a low-dimensional manifold in data space. The manifold hypothesis has helped guide network design in numerous applications, for example in classification (see e.g. \cite{lecouat2018semi,naitzat2020topology,rifai2011manifold,simard1991tangent,yu2019tangent}) where data belonging to a fixed label is conceived of as being on a common manifold, as well as both generative and encoding tasks, (see e.g. \cite{brehmer2020flows,bronstein2017geometric,cunningham2020normalizing,ganea2018hyperbolic,krioukov2010hyperbolic,nickel2017poincare,nickel2018learning,sarkar2011low,shao2018riemannian} and  \cite{child2020very,dai2019diagnosing,dinh2016density,kingma2019introduction}) where the manifold hypothesis is used as an ``existence proof'' of a low-dimensional parameterization of the data of interest. In the context of inverse problems, the manifold hypothesis can be interpreted as a statement that forward operators map low-dimensional space to the high-dimensional space of all possible measurements \cite{alberti2020inverse,angles2018generative,anirudh2018unsupervised,behrmann2018invertible,jin2017deep,kothari2021trumpets,kruse2021benchmarking,narnhofer2019inverse,siahkoohi2020faster,whang2020approximate}. The hypothesis has also been used in Variational Autoencoder (VAE) and Generative Adversarial Networks (GAN) architectures for solving inverse problems \cite{angles2018generative,anirudh2018unsupervised,ardizzone2018analyzing,behrmann2018invertible,hand2018phase,jin2017deep,kruse2021benchmarking,shah2018solving,siahkoohi2020faster}.

A natural question at the intersection of universality efforts and manifold learning is the following. What kinds of architecture are universal approximators of maps between manifolds? In particular, which networks are able to learn functions on manifolds if the manifolds are not known ahead of time? Some existing methods that operate on the level of manifolds use tools such as persistent homology to learn homology groups of data \cite{bruel2019topology,hofer2020topologically,hofer2020graph,hofer2019connectivity,hofer2017deep,moor2020topological}. In this work we look to learn mappings that are locally inverse stable. This condition is not generally true for homotopies between manifolds, and so we are unable to use many tools in TDA tools. Other approaches not utilizing TDA exist and are able to learn mappings between submanifolds, but only when the manifolds to be learned have simple topology. That is, they only apply to manifolds that are given by a single chart and so can't learn mappings that change topology \cite{brehmer2020flows,puthawala2022universal,kothari2021trumpets}. Because of this latter limitation they are only able to apply to problems where the starting and target manifold are different embeddings of the same manifold. In the context of generative models, this means that one has to get the topology of the starting space ``exactly right'' in order to learn a pushforward mapping that has inverse stability.

In order to combine the best of these two approaches, in this paper we look to learn mappings that are universal approximators of mappings between manifolds that are locally diffeomorphisms but globally complex. A practical gain of such an approach can be seen in the generative context. One no longer has to get the topology of the starting space `exactly right' to insure that a suitable stable forward map exists. We also present a result that establishes mathematical parallels between problems in invariant network design and cryogenic electron microscopy (cryo-EM) where learning a mapping that changes topology arises naturally.

\subsection{Network Description}

For families of functions $\cH$ and $\cG$ with compatible domain and range we use a well-tuned shorthand notation and write $\cH \circ \cG \coloneqq \set{h \circ g\colon h \in \cH, g \in \cG}$ to denote their pair-wise composition. We introduce \emph{extension-projection networks} which are of the form 
\begin{align}
    \label{eqn:network-definition}
    T\circ p\circ E,\quad \hbox{where }T\in\cT, \ E\in \cE
\end{align}
and $\cT \subset C(\Rea^{n_{\ell}},\Rea^{n_{\ell}})$ is a family of homeomorphism
$T:\Rea^{n_{\ell}}\to \Rea^{n_{\ell}}$, $p$ is a fixed projector, and 
$\cE$ is a family of networks of the form $\cE \coloneqq \cT^{n_{L}}_{L} \circ \cR_{L}^{n_{L-1},n_{L}} \circ \dots \circ \cT_1^{n_{1}} \circ \cR^{n_0,n_1}_1 \circ \cT^{n_{0}}_0$ where $\cR^{n_{\ell-1},n_\ell}_{\ell} \subset C(\Rea^{n_{\ell-1}}, \Rea^{n_{\ell}})$ are injective,  $\cT^{n_\ell}_{\ell} \subset C(\Rea^{n_{\ell}},\Rea^{n_{\ell}})$ are homeomorphisms, $L \in \Nat$, $n_0 = n$, $n_{L} = m$, and $n_{\ell} \geq n_{\ell-1}$ for $\ell = 1,\dots,L$. 

Examples of specific choices of $\cR$ in the definition of $\cE$ include zero padding, multiplication by full-rank matrix, injective $\relu$ layers or injective $\relu$ networks. Choices of the networks $\cT$ include Coupling flows \cite{dinh2014nice,dinh2016density,kingma2018glow} or Autoregressive flows \cite{kingma2016improving,huang2018neural}. For an extended discussion of these choices, please see \cite[Section 2]{puthawala2022universal}.

\subsection{Comparison to Prior Work}
\label{sec:prior-work}

In this subsection, we describe how this work is related to prior work in Topological Data Analysis (TDA), simplicial flow networks and group invariant/equivariant networks. 

\subsubsection{Topological Data Analysis}
\label{sec:prior-work:tda}

For a general survey of topological machine learning methods we refer to \cite{hensel2021survey}. Many approaches that stem from TDA use information gained directly from data, for example a persistence diagram \cite{carlsson2009topology}, and use this data as either a regularizer (in the form of a loss function) or as a prior for architecture design  \cite{bruel2019topology,hofer2020topologically,hofer2020graph,hofer2019connectivity,hofer2017deep,moor2020topological}. These works are closest to ours in terms of design, but fundamentally look to answer different questions than the ones studied in this work. TDA tools that use homology groups are sensitive enough to detect if two manifolds have the same homotopy type, but not sensitive enough to determine if they are homeomorphic.Two manifolds that are homeomorphic always have the same homotopy type, but the converse is not true. The converse direction follows by comparing the unit interval to a single point. 

Replacing the need for homeomorphisms with the need for homotopies means that TDA based approaches are unable to enjoy the theoretical guarantees of our work, in particular inverse stability and universality. TDA based works generally do not prove universality of their networks nor do they guarantee that network inversion is stable. 

\subsubsection{Simplicial Flow Networks}
\label{sec:prior-work:simplicial-flow-networks}

There is much work designing and developing the theory for networks that learn maps between graphs and simplicial complexes \cite{ebli2020simplicial,paluzo2021optimizing}. If two manifolds are triangulable, then functions between them can be learning a function between their triangulation with simplicial networks. The work presented here does not require access to a triangulation to be a universal approximator. Although all manifolds that we consider are triangulable, a fact which is important for our proof, the triangulation is not actually necessary for the statement of the final theorem \ref{thm:approximation-of-locdiff}. Simplicial networks have the advantage that they do not require any dimensionality restrictions, while our results do.

\subsubsection{Group Invariant/Equivariant networks}

Finally, we describe a connection between group invariant networks and the work presented here. 
Partially motivated by the success of convolutional networks applied to image data, there has recently been much development and analysis of group invariant and group equivariant networks \cite{birrell2022structure,yarotsky2022universal,murphy2018janossy,bietti2021sample,puny2021frame,maron2018invariant,liu2021algorithms}. For a group $\Sigma$ with action $g_\sigma\colon \cM_1 \to \cM_1$ defined over $\cM_1$ a function $f \colon \cM_1 \to \cM_2$ is called $\Sigma$-invariant if $f(g_\sigma x) = f(x)$ for all $x \in \cM_1$, $\sigma \in \Sigma$.

If the symmetry group $\Sigma$ is known then we can design a network that enforces and exploits this symmetry by architecture choice \cite{bietti2021sample} or by averaging over the group action \cite{puny2021frame}. Conceptually, we may consider both similar in so far as they both approximate an $f$ of the form $f = \restr{f}{\cM_1 / \Sigma} \circ \pi_\Sigma \colon \cM_1 \to \cM_2$  where $\pi _\Sigma \colon \cM_1 \to \cM_1 / \Sigma$ projects $\cM_1$ onto the orbits of $\cM_1$ under $g_\Sigma$.

Learning functions $f\colon \cM_1 \to \cM_2$ that are invariant w.r.t. some group action $g_\Sigma$ on $\cM_1$ is closely related to the idea of learning local homeomorphisms between $\cM_1$ and $\cM_2$. If $\Sigma$ is finite with group action $g_\Sigma$ that is $d$-to-one and smooth enough then, as presented in Lemma \ref{lem:symm-and-quotient-manifs}, $g_\Sigma$ is a local diffeomorphism. Thus it can be approximated using the networks studied here. Further, because our networks are built with inversion in mind we can recover orbits of $\cM_1$ under the action of $\Sigma$, see Corollary \ref{cor:recov-of-group-action}. In this way, our network is a universal solver for the `finite blind invariance problem.' To this point, we don't intend to offer our network as a drop-in replacement to existing $\Sigma$ invariant networks. Instead it offers a different perspective on a closely related problem.

\subsection{Our Contribution}

In this work we show that extension-projection networks of the form \ref{eqn:network-definition} are universal approximators of local diffeomorphisms between smooth compact manifolds. The problem has two parts. In the first, we show that by extending the analysis of \cite{puthawala2022universal}, we can universally approximate any embedding of a smooth, compact (topologically complex) $n$ manifold. In the second, we show that we approximate mappings that globally change topology between manifolds, but locally are diffeomorphisms. The latter part is the more difficult and novel, and so is the main focus of this work. By using topological arguments, in particular the parallels between local diffeomorphisms and covering maps, we find that local diffeomorphisms (locally one-to-one, globally many-to-one) can be lifted to diffeomorphisms (globally one-to-one) in a sufficiently high dimensional space. Further, this lifting can always be done so that the inverse lifting (a projection) is a coordinate projection. This is the main content of Theorem \ref{thm:covering-map-decomp}. The projection is simple enough that it may be inverted. By approximating the lifting, projection, and one final embedding with existing networks, we find that the architecture end-to-end is a universal approximator of local diffeomorphisms and maintains the novel inversion property, Theorem \ref{thm:approximation-of-locdiff}.

We also consider applications of extension-projection networks in the design of group-invariant networks, the choice of starting spaces in generative problems, and describe a promising connection between the problem of cryogenic electron microscopy (cryo-EM) when the sample to be imaged possesses an unknown symmetry.

\section{Theoretical development}

The manifolds that we consider here are smooth, compact $n$-dimensional and embedded in $\R^m$ for some $m > n$. In particular, we pose the question of approximating a surjection $f \colon \cM_1 \to \cM_2$ with a network of the form \ref{eqn:network-definition}. Before presenting our results, we point readers to Appendix \ref{sec:review-of-top-terms} for a definition of terms used in this manuscript. 

\subsection{Bistable approximations and local homeomorphisms}
\label{sec:bistable-approx-and-topology}

We wish to introduce architectures that are universal approximators and have properties which are necessary or desirable in practice. In particular, we consider approximations that locally have an inverse that is stable. We call this mode of approximation a \emph{bistable approximation}. {To define this concept, we recall 
that when $\cM\subset \R^m$ is a $C^\infty$-smooth submanifold, the reach of $\cM$, denoted $\hbox{Reach}(\cM)$, is the supremum
of all $r>0$ such that for all $x$ in the $r$-neighborhood $U_r\subset \R^m$  of $\cM$ there is
the unique nearest point $y\in \cM$. We denote the nearest point by $y=P_{\cM}(x)$. We note that for 
$0<r<\hbox{Reach}(\cM)$, the map $P_{\cM}:U_r\to \cM$  is $C^\infty$-smooth \cite{MR4156994}. For a point $x\in U_r$
the pair $(y,v)$ of the nearest point $y=P_{\cM}(x)$ and the normal vector $v=x-P_{\cM}(x)\in N_y\cM$ of $\cM$ at $y$ form tubular coordinates of the point $x$.}

\begin{definition}[Bistable approximation]
\label{def:bistable-approximation}
    Let $\cF \subset \locdiff^{1}(\cM_1,\R^{m_2})$, and $g \in C(\cM_1,\cM_2)$ where $g$ is surjective for $n$-dimensional submanifolds $\cM_1 \subset \R^{m_1}$ and $\cM_2 \subset \R^{m_2}$. We say that $\cF$ has a bistable uniform approximator of $g$, if there is an $M > 0$, so that for any 
     {$0<\epsilon<\hbox{Reach}(\cM_2)$}  there is an $f \in \cF$ so that for all $x \in \cM_1$ the following hold
    \begin{align}
        \label{eqn:def:bistable-approximation}
        \norm{f(x) - g(x)}_{\R^{m_2}} < \epsilon, \quad
        \norm{\nabla f(x)}_{T_x\cM_1\times{  \R^{m_2}}} \leq M,\quad 
        \norm{\paren{\nabla {(P_{\cM_2}\circ f)}(x)}^{-1}}_{T_{P_{\cM_2}(f(x))}\cM_2\times T_x\cM_1} \leq M.
    \end{align}
   
    For a family of functions $\cG$, we say that $\cF$ is a bistable uniform approximator of $\cG$, if it has a bistable uniform approximator for each $g \in \cG$. We call a sequence $\infseq fn1 \subset \cF$ a bistable uniform approximating sequence if there is an $M > 0$ and $\infseq \epsilon n1$ such that $\lim_{n \to \infty} \epsilon_n = 0$ and for all $x\in \cM_1$ and $n\in \mathbb Z_+$ we have $\norm{\paren{\nabla  (P_{\cM_2}\circ f_n)}^{-1}}_{ {  T_{P_{\cM_2}(f_n(x))}\cM_2}\times T_x\cM_1} \leq M$, $\norm{\nabla f_n(x)}_{T_x\cM_1\times\R^{m_2}} \leq M$ and $\norm{f_n(x) - g(x)}_{\R^{m_2}} < \epsilon_n$.
\end{definition}


A neural network architecture $\cF$ undergoing training to approximate a function $g$ can be formalized as a bistable uniform approximating sequence $\infseq fn1$, where $f_n \to g$ in the limit. In this formalization, we let 
$f_1$ be the network after being trained for one epoch, $f_2$ be the network trained for two epochs, etc. The three terms in Eqn. \ref{eqn:def:bistable-approximation} have a natural interpretation in this formalism.

Requiring $\norm{f(x) - g(x)}_{\R^{m_2}} < \epsilon$ forces an approximating sequence to be a uniform approximator on compact sets and justifies saying that $f$ approximates $g$.

The $\norm{\nabla f(x)}_{T_x\cM_1\times\R^{m_2}} \leq M$ term requires that good approximations to $g$ are stable, and in particular penalizes convergence to discontinuous or `kinked' functions. If, for a given value of the weights, a network is Lipschitz (as is the case for deep feed-forward networks with $\relu$ or sigmoid activation functions), then uniform convergence of $f_n$ to a $g$ with unbounded gradient implies that $\norm{\nabla f_n(x)}$ diverges as $n \to \infty$. If the weights of a neural network diverge as the network is trained, this is generally thought to be undesirable. Thus requiring $\norm{\nabla f_n(x)}$ to be bounded permits only `good' convergence.

The $ \norm{\paren{\nabla {(P_{\cM_2}\circ f)}(x)}^{-1}}_{{  T_{P_{\cM_2}(f(x))}\cM_2}\times T_x\cM_1}$ term enforces a locally stable inverse of an approximation and has a meaning in the context of Bayesian Uncertainty Quantification (Bayesian UQ). In Bayesian UQ, the uncertainty associated with an approximation is evaluated by computing a change of variables term. To estimate this term, it is necessary to compute, or suitably approximate, the inverse gradient. Thus, by requiring $ \norm{\paren{\nabla   (P_{\cM_2}\circ f)(x)}^{-1}}_{T_{P_{\cM_2}(f(x))\cM_2}\times T_x\cM_1}$ to be bounded, we require that this change of variables has determinant bounded away from zero, and so the model admits Bayesian UQ that doesn't deteriorate in the limit.
    
For a family $\cF \subset \locdiff^{1}(\cM_1,\R^{m_2})$, the notation $\overline \cF$ denotes the closure of $\cF$ under bistable uniform approximation. That is, $\overline \cF = \set{g \in C(\cM_1,\cM_2) \colon \cF \text{ has a bistable uniform approximator for } g}$.

With the definition given above we present our first result, which describes what kinds of functions admit bistable uniform approximators by any function class.

\begin{lemma}[Closure of Bistable Uniform Approximation]
    \label{lem:closure-of-bistable-approx}
    If $\cM_1$ is compact then limit points of bistable uniform approximating sequences are locally bilipschitz. 
\end{lemma}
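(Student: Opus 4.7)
The plan is to upgrade the pointwise derivative bounds in the definition of a bistable approximating sequence to a two-sided Lipschitz estimate on $g$ near each point of $\cM_1$.

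First, one establishes the global Lipschitz upper bound on $g$. Because $\cM_1$ is compact and smooth, its intrinsic geodesic distance is comparable to ambient Euclidean distance up to a finite constant. Integrating the hypothesis $\norm{\nabla f_n(x)}_{T_x\cM_1\times\R^{m_2}}\leq M$ along minimizing geodesics in $\cM_1$ yields a uniform global Lipschitz constant for every $f_n$, and the triangle inequality together with uniform convergence $f_n\to g$ passes this bound to $g$, supplying the upper half of the bilipschitz estimate.

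For the Lipschitz lower bound, replace $f_n$ by $h_n:=P_{\cM_2}\circ f_n$. Because $g(\cM_1)\subset \cM_2$ and $P_{\cM_2}$ is smooth on the reach-neighborhood of $\cM_2$, uniform convergence $f_n\to g$ forces $f_n(\cM_1)$ eventually into this neighborhood and gives $h_n\to g$ uniformly. The chain rule turns the hypotheses into uniform two-sided singular value bounds $[1/M,\tilde M]$ for $\nabla h_n$, where $\tilde M$ depends only on $M$ and the geometry of $\cM_2$, making each $h_n$ a $C^1$ local diffeomorphism between the compact $n$-manifolds $\cM_1$ and $\cM_2$ of equal dimension.

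The technical heart is a uniform-in-$n$ local bilipschitz estimate for $h_n$: for each fixed $x_0\in\cM_1$, one wants $r_0>0$ and $c>0$ such that $\norm{h_n(x)-h_n(y)}\geq c\norm{x-y}$ whenever $x,y\in B(x_0,r_0)\cap\cM_1$, for all large $n$. Each $h_n$, being an open map with compact domain, sends $\cM_1$ onto a union of connected components of $\cM_2$, and is a covering map on its image with degree $d_n$ controlled by the volume comparison $d_n\cdot\mathrm{vol}(h_n(\cM_1))=\int_{\cM_1}\abs{\det\nabla h_n}\,d\mathrm{vol}\leq \tilde M^{\dim\cM_1}\mathrm{vol}(\cM_1)$. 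A uniformly bounded covering degree, combined with the uniform singular value bounds, yields a uniform positive lower bound on the injectivity radius of $h_n$, on which the infinitesimal bilipschitz data integrate to a genuine local bilipschitz estimate. Passing $n\to\infty$ with $h_n\to g$ uniformly produces $\norm{g(x)-g(y)}\geq c\norm{x-y}$ on $B(x_0,r_0)\cap\cM_1$, and combined with the upper bound this exhibits $g$ as bilipschitz on a neighborhood of $x_0$.

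The main obstacle is producing the uniform radius $r_0$. The pointwise bounds on $\nabla h_n$ do give local bilipschitz for each $h_n$ via the inverse function theorem, but the neighborhood size there depends on the modulus of continuity of $\nabla h_n$, which is not uniformly controlled along the sequence. Without a topological/global ingredient such as the covering-degree bound above, the limit $g$ could a priori develop a cusp or fold at an infinitesimal scale and fail to be locally bilipschitz even though each $f_n$ is a local diffeomorphism; converting infinitesimal into uniform local control is therefore the crux of the argument.
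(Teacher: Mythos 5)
Your proof is correct and its skeleton is the same as the paper's: the Lipschitz upper bound for the limit comes from integrating the uniform gradient bound and letting $n\to\infty$, and the inverse bound comes from passing to $h_n=P_{\cM_2}\circ f_n$, exploiting $\norm{(\nabla (P_{\cM_2}\circ f_n))^{-1}}\le M$, comparing $h_n$ with $f_n$ (they differ by at most $\epsilon_n$) and then with $g$. The difference lies in how the pointwise inverse-derivative bound is promoted to an inverse-Lipschitz inequality on a neighborhood whose size does not degenerate with $n$. The paper does this by applying the inverse function theorem to obtain local inverses with derivative norm at most $M$ and then chaining these charts along the $\cM_2$-geodesic between image points, so that the only surviving constant is the $n$-independent $M$; you instead invoke the covering-map structure of $h_n$, bound the degree by a Jacobian/volume comparison, and claim a uniform injectivity radius. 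Two remarks on your route. First, the degree bound is not the ingredient that separates distinct points of a fiber: what does is that a path joining $x\neq x'$ with $h_n(x)=h_n(x')$ maps to a loop that cannot be null-homotopic in $h_n(\cM_1)$ (otherwise it would lift to a loop), so its length is bounded below by a constant depending only on $\cM_2$ (a systole-type quantity), giving $d_{\cM_1}(x,x')\ge c(\cM_2)/M$ independently of the degree; you should supply this, since as written the ``uniform injectivity radius'' is asserted rather than derived. Second, even after injectivity on $B(x_0,r_0)$ is secured, converting the derivative bound into $d_{\cM_1}(x,y)\le M\, d_{\cM_2}(h_n(x),h_n(y))$ still requires lifting the $\cM_2$-geodesic between image points back into that ball, which is exactly the paper's chart-chaining step, so your covering-space preamble supplements rather than replaces it. Your diagnosis that the inverse-function-theorem neighborhood is not a priori uniform in $n$, so that some global or topological ingredient is needed, is accurate and is precisely the point that the paper's own write-up treats most lightly.
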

    
The proof for Lemma \ref{lem:closure-of-bistable-approx} is given in the Appendix in \ref{sec:proof:lem:closure-of-bistable-approx} and implies the following

\begin{corollary}[Bistable Uniform Approximations are Local Homeomorphisms]
    \label{cor:bistable-unif-approxs-are-loc-hom}
    If $\cM_1$ is compact, then $\overline{\locdiff^{1}(\cM_1,{ \cM_2})} \subset \lochom(\cM_1,\cM_2)\cap W^{1,2}(\cM_1,\cM_2)$.
        The set $W^{1,2}(\cM_1,\cM_2)$ refers to functions that are weakly differentiable with $L^2(\cM_1,\cM_2)$ derivative.
\end{corollary}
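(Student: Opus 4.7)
The plan is to deduce the corollary directly from Lemma \ref{lem:closure-of-bistable-approx}, with two short supplementary arguments: one that ``locally bilipschitz'' upgrades to ``local homeomorphism into $\cM_2$,'' and one that a uniform Lipschitz bound upgrades to $W^{1,2}$ membership. So fix $g \in \overline{\locdiff^{1}(\cM_1,\cM_2)}$; by definition of the closure there is a bistable uniform approximating sequence $\infseq{f}{n}{1}\subset \locdiff^1(\cM_1,\cM_2)$ with $f_n \to g$ uniformly on $\cM_1$ and a common constant $M>0$ bounding both $\|\nabla f_n(x)\|$ and $\|(\nabla(P_{\cM_2}\circ f_n)(x))^{-1}\|$. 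Lemma \ref{lem:closure-of-bistable-approx} then tells us that $g$ is locally bilipschitz.

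The first step is to show $g \in \lochom(\cM_1,\cM_2)$. Local bilipschitzness gives, for each $x\in\cM_1$, an open neighborhood $U_x$ on which $g|_{U_x}$ is bilipschitz onto its image in $\R^{m_2}$, hence a homeomorphism onto $g(U_x)$. It remains to verify that the image lies in $\cM_2$: since $f_n(\cM_1)\subset \cM_2$ for every $n$ and $\cM_2$ is closed in $\R^{m_2}$ (being a compact embedded submanifold), the uniform limit satisfies $g(\cM_1)\subset \cM_2$. Restricting each $g|_{U_x}$ to its codomain $\cM_2$ then yields a local homeomorphism $\cM_1\to\cM_2$, so $g\in\lochom(\cM_1,\cM_2)$.

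The second step is to show $g\in W^{1,2}(\cM_1,\cM_2)$. The gradient bound $\|\nabla f_n(x)\|\le M$ for all $x\in\cM_1$ and all $n$ implies that each $f_n$ is Lipschitz with constant $M$, and uniform convergence preserves this: $g$ is $M$-Lipschitz on $\cM_1$ (with respect to the induced Riemannian distance). Since $\cM_1$ is a compact smooth manifold, Rademacher's theorem applied in local charts gives that $g$ is differentiable almost everywhere with $\|\nabla g\|\le M$ a.e., so $g\in W^{1,\infty}(\cM_1,\R^{m_2})$. Compactness of $\cM_1$ gives finite Riemannian volume, so $W^{1,\infty}(\cM_1,\R^{m_2})\subset W^{1,2}(\cM_1,\R^{m_2})$, and combining with $g(\cM_1)\subset\cM_2$ yields $g\in W^{1,2}(\cM_1,\cM_2)$.

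The corollary follows by intersecting the two conclusions. The only place where real work occurs is inside Lemma \ref{lem:closure-of-bistable-approx}; here everything else is packaging. The main small subtlety, and the step most likely to require care, is the passage from the pointwise gradient bound on $f_n$ to the global Lipschitz bound on $g$ on a general compact Riemannian manifold (as opposed to a Euclidean domain); this is handled by a standard chart-by-chart argument using that $\cM_1$ admits a finite cover by bilipschitz charts together with Arzel\`a--Ascoli to extract the Lipschitz constant of the limit.
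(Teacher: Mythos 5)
Your proposal is correct and follows essentially the same route as the paper: it invokes Lemma \ref{lem:closure-of-bistable-approx} to get local bilipschitzness, upgrades that to a local homeomorphism, and uses the global Lipschitz bound on the compact $\cM_1$ together with Rademacher's theorem (the paper cites the equivalent statement from Evans) plus finite volume to land in $W^{1,2}$. The extra checks you add---that $g(\cM_1)\subset\cM_2$ and the chart-by-chart passage from the gradient bound to a global Lipschitz constant---are harmless elaborations of steps the paper treats more tersely.
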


The proof for Corollary \ref{cor:bistable-unif-approxs-are-loc-hom} is given in the Appendix in \ref{sec:proof:cor:bistable-unif-approxs-are-loc-hom}. The fact that $\overline{\locdiff^{1}(\cM_1,\R^{m_2})} \subset \lochom(\cM_1,\cM_2)$ is key to the subsequent developments of this paper. 
Before moving onto those developments, we give an example of manifolds to illustrate the difference between $\hom(\cM_1,\cM_2)$ and $\lochom(\cM_1,\cM_2)$, a distinction which is key to understanding the contribution of our paper.

\begin{figure}
    \centering
    \includegraphics[width=1.00\linewidth]{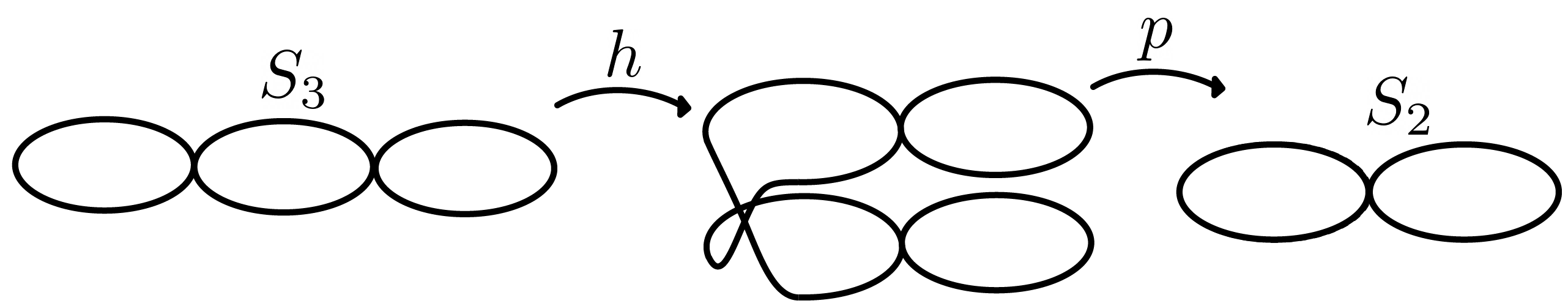}
    \caption{In the space $\R^5$ there is a genus three surface $\Sigma_3$
    and in the space $\R^3$ there is a genus two surface $\Sigma_2$ such that
    projection $p_3:\R^5\to \R^3$ is a covering map (and thus a local diffeomorphism) $p_3:\Sigma_3\to \Sigma_2$.
These surfaces can be obtained by taking curves that are deformed to surfaces by replacing all points of the curves by circles. The figure gives idea of the construction of such curves. In the figure we have a deformation $h:\R^3\to \R^3$ that maps a curve $S_3\subset \R^3$ to a self-intersecting curve $h(S_3)\subset \R^3$
and projection $p:\R^3\to \R^2$ that maps $h(S_3)$ to a curve $S_2\subset \R^2$. We add a forth dimension (time) in the space $\R^3$ that contains $S_3$, and modify $h$ in the time-variable to obtain
a diffeomorphism $\tilde h:\R^4\to \R^4$ so that $\tilde S_3=\tilde h(S_3\times \{0\})$
 is  a non-self-intersecting by curve $\R^4$ which projetion to the 3-dimensional space $\R^3$ is $h(S_3)$.
We add one more dimension in all Euclidean spaces and obtain a diffeomorphism $ \tilde H:\R^{4+1}\to \R^{4+1}$, $\tilde H(x,t)=(\tilde h(x),t)$ and the projection $p_3:\R^{4+1}\to \R^{2+1}$ such that $p_3\circ \tilde H:\tilde S_3\times \{0\}\to S_2\times \{0\}$.  Finally, we replace all points of the curves by circles (analogously to the Penrose diagrams in general relativity) to deform the 
curves $\tilde H(\tilde S_3)$ and $S_2\times \{0\}$ to surfaces $\Sigma_2\subset \R^{2+1}$ and  $\Sigma_3\subset \R^{4+1}$ having the genus 2 and 3, respectively.}
    \label{fig:local-hom-aid}
\end{figure}

\begin{example}[Examples of Homeomorphic and Locally Homeomorphic Manifolds]
    \label{examp:hom-vs-lochom}
    Homeomorphisms preserve homotopy type so a surface of genus 3, $S_3$, and genus 2, $S_2$, are not homeomorphic. Nevertheless, there is e.g. a two-sheet covering of $S_2$ by $S_3$, and so there is a local homeomorphism from $S_3$ to $S_2$. A visualization of this covering is given in Figure \ref{fig:local-hom-aid}. The covering is done in two steps, in the first the A local homeomorphism from $S_3$ to $S_2$ is given explicitly in Appendix \ref{sec:proof:lem:closure-of-bistable-approx}.
\end{example}

Lemma \ref{lem:closure-of-bistable-approx} and Corollary \ref{cor:bistable-unif-approxs-are-loc-hom} show that limiting sequences of bistable uniform approximating sequences are always at least local homeomorphisms, and are in general weakly differentiable. A natural question is if they are always local diffeomorphisms. They are not. An example of a nonsmooth function that can be approximated bistably is given in Appendix \ref{sec:bistable-approx-of-non-smooth}.

Any $C^1$ diffeomorphism can be uniformly approximated by $C^\infty$ diffeomorphisms \cite[Theorem 2.7]{hirsch2012differential}, hence uniform approximation results true of $C^1$ diffeomorphisms, are automatically true of $C^\infty$ ones as well. Further, from \cite{muller2014uniform}, we have that a homeomorphism can be uniformly approximated by a diffeomorphism if and only if the homeomorphism is isotopic to a diffeomorphism, as in the example given in Appendix \ref{sec:bistable-approx-of-non-smooth}.

A limit homeomorphism $f_\infty$ of a sequence $(f_i)$ is weakly differentiable in the Sobolev sense, see Corollary \ref{cor:approx-to-maps-betwen-non-local-homeo-manifs}. The sign of the Jacobian determinant $\det Df_\infty$ carries topological information on the orientation of the mapping. Under weaker notions of converge this analytical information on the Jacobian determinant is lost. For example, by a result of Hencl and Vejnar \cite{hencl2016sobolev}, there exists a homeomorphism $f\colon \mathbb{R}^4 \to \mathbb{R}^4$, which belongs to the Sobolev space $W^{1,1}$ but for which sets $\{\det Df>0\}$ and $\{\det Df<0\}$ have positive measure. This homeomorphism cannot be approximated by diffeomorphisms in the Sobolev norm of $W^{1,1}$. Indeed, if such an approximating sequence $(f_i)$ would exist, then it would have a subsequence $(f_{i_j})$ having the property that $\det(Df_{i_j}) \to \det(Df)$ almost everywhere, but this is not possible as functions $\det Df_{i_j}$ do not change sign. 

The homeomorphism of Hencl and Vejnar is an intricate construction based on a reflection in a set containing a positive measure Cantor set and we do not discuss the construction of Hencl and Vejnar in more detail here. We merely note that homeomorphisms $f\times \rm{id} \colon \R^4 \times \R^k \to \R^4 \times \R^k$ for $k>0$ yield similar examples in all dimensions above four. In lower dimensions $n\le 3$ construction of such homeomorphisms is not possible by a result of Hencl and Mal\'y \cite{hencl2010jacobians} which states that $W^{1,1}$-homeomorphisms in these dimensions have the property that their Jacobians do not change sign.

\subsection{Measures on manifolds with nontrivial topology}

The following result says that if we could get access to \textit{some} manifold $\cM_1$ which is homeomorphic to a target manifold $\cM_2$, than we can learn to approximate measures on $\cM_2$ using a network, under some dimension requirements.

\begin{theorem} 
    \label{thm:manifold-universality}
    Let $n_0 = n, n_L = m$, $\cM \subset \R^n$ be a compact $k$ manifold for $n \geq 3k+1$, $\mu \in \cP(\cM_1)$, and be an absolutely continuous\footnote{Here the absolute continuity means that $\pushf{\phi}{\mu}$ is absolutely continuous w.r.t. Lebesgue measure in $\R^k$ w.r.t. each chart $\phi$ of $\cM_1$.}. Further let, for each $\ell = 1,\dots,L$, $\cE_\ell^{n_{\ell-1}, n_{\ell}} \coloneqq \cT^{n_\ell}_\ell \circ \cR^{n_{\ell-1}, n_\ell}_\ell$ for $\ell = 1,\dots,L$, where $n_\ell\ge 3n_{\ell - 1}+1$, 
    $\cR^{n_{\ell - 1}, n_\ell}_\ell\subset C^1(\R^{n_{\ell-1}},\R^{n_{\ell}})$ is a family of injective maps that contains a linear map and
     the families $\cT^{n_\ell}_\ell$ are dense in $\hbox{Diff}^2(\R^{n_\ell})$ (e.g.,  $\cT^{n_\ell}_\ell$ is a family of bijective networks \cite[Section 2]{puthawala2022universal}). Finally let $\cT^{n}_{0}\subset \hbox{Diff}^2(\R^{n_0})$ be distributionally universal family and $f\in \emb^1(\cM_1, \R^m)$. Then, there is a sequence of {$\set{E_{i}}_{i = 1,2,\dots} \subset \cE_L^{n_{L-1}, m} \circ \dots \circ \cE_1^{n_1, n} \circ \cT^{n}_0$} such that
    \begin{align}
        \label{eqn:qual-univ:qual}
        \lim_{i \to \infty}
        \wasstwo{\pushf{f}{\mu}}{\pushf{E_i}{\mu}} = 0.
    \end{align}
\end{theorem}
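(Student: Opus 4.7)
The plan is to reduce Wasserstein-$2$ convergence of pushforwards to uniform convergence of maps on the compact support $\cM_1$, and then apply the injective-expansion approximation machinery from \cite{puthawala2022universal}. The starting observation is that the product coupling $(f \times E)_{\#}\mu$ is an admissible coupling of $f_\#\mu$ and $E_\#\mu$, yielding
\begin{equation*}
\wasstwo{\pushf{f}{\mu}}{\pushf{E}{\mu}}^2 \;\leq\; \int_{\cM_1} \|f(x) - E(x)\|^2 \, d\mu(x) \;\leq\; \sup_{x \in \cM_1} \|f(x) - E(x)\|^2,
\end{equation*}
so it suffices to produce $E_i$ in the specified architecture with $\|E_i - f\|_{C^0(\cM_1)} \to 0$.

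Next I would lift the approximation problem to the ambient space. Since $\cM_1$ is a compact $C^1$ submanifold of $\R^n$ with $n \geq 3k+1 > k$, the tubular neighborhood theorem supplies a $C^1$ retraction from a neighborhood $U \supset \cM_1$ onto $\cM_1$; composing $f$ with this retraction and a smooth cutoff yields a $C^1$ extension $\tilde f \colon \R^n \to \R^m$ which is a $C^1$ embedding on some open $U' \supset \cM_1$. Uniform approximation of $\tilde f$ on a compact neighborhood of $\cM_1$ in $U'$ then forces uniform approximation of $f$ on $\cM_1$.

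The heart of the argument is to realize $\tilde f$ as a limit of compositions $\cE_L^{n_{L-1}, m} \circ \cdots \circ \cE_1^{n_1, n} \circ T_0$ with $T_0 \in \cT^n_0$. I would proceed layer by layer: at layer $\ell$, first use the linear injection guaranteed to lie in $\cR_\ell^{n_{\ell-1}, n_\ell}$ (e.g.\ zero-padding $\R^{n_{\ell-1}} \hookrightarrow \R^{n_\ell}$) to embed the previous image of $\cM_1$ into $\R^{n_\ell}$, and then pick a diffeomorphism from $\cT^{n_\ell}_\ell$ (dense in $\hbox{Diff}^2(\R^{n_\ell})$) to reshape it into the intermediate target corresponding to a factorization of $\tilde f$ into elementary injection-plus-diffeomorphism steps. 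The dimension condition $n_\ell \geq 3 n_{\ell-1} + 1$ enters precisely here via Whitney-type transversality and isotopy-extension arguments (as in \cite[Sec.~2]{puthawala2022universal}): it guarantees that the compact $C^1$-embedded image after stage $\ell-1$ can be moved to the stage-$\ell$ target by an ambient $C^2$ diffeomorphism, which can then be approximated by an element of $\cT^{n_\ell}_\ell$. The role of $T_0 \in \cT^n_0$ is to absorb any mismatch between $\mu$ and the canonical reference measure on $\cM_1$ that the subsequent layers naturally approximate; distributional universality is exactly strong enough to do this while keeping the composition $C^0$-close to $f$ on $\cM_1$.

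The main obstacle will be the layerwise ambient-isotopy construction in the third step. One must show, uniformly in $\ell = 1,\ldots, L$, that successive embedded images can be reconfigured by diffeomorphisms approximable in $\cT^{n_\ell}_\ell$, and that the resulting ambient composition restricted to $\cM_1$ uniformly tracks $f$. The bound $n_\ell \geq 3 n_{\ell-1} + 1$ is essentially tight for this — it is exactly what is needed to avoid knotting and linking obstructions that could prevent ambient straightening in $\R^{n_\ell}$ — so both this hypothesis and the density of $\cT^{n_\ell}_\ell$ in $\hbox{Diff}^2(\R^{n_\ell})$ are load-bearing throughout.
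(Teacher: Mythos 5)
Your opening reduction via the product coupling, $\wasstwo{\pushf{f}{\mu}}{\pushf{E}{\mu}} \le \sup_{x\in\cM_1}\norm{f(x)-E(x)}$, is correct, but it commits you to uniformly approximating the \emph{map} $f$ by the full composition, and this is where your proposal becomes internally inconsistent. The paper does not prove uniform map approximation. It invokes \cite[Thm.~3.8]{puthawala2022universal} (every $C^k$ embedding is extendable, i.e.\ of the form $h\circ L|_{\cM_1}$ with $L$ linear injective and $h$ a global diffeomorphism) and \cite[Lemma~3.9]{puthawala2022universal} (the layered family $\cE_L^{n_{L-1},m}\circ\cdots\circ\cE_1^{n_1,n}$ has the manifold embedding property), and the MEP only yields a map $\tilde E\in\cE$ together with \emph{some other} absolutely continuous measure $\mu''\in\cP(\cM_1)$ such that $\wasstwo{\pushf{f}{\mu}}{\pushf{\tilde E}{\mu''}}<2\epsilon_1$ --- i.e.\ the expansive layers reproduce the image and a pushforward measure only up to a reparameterization of the input. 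The distributional universality of $\cT^n_0$ is then essential: one picks $T$ with $\wasstwo{\mu''}{\pushf{T}{\mu}}<\epsilon_2$ and propagates this error through $\tilde E$ via the estimate $\wasstwo{\pushf{f}{\mu}}{\pushf{(\tilde E\circ T)}{\mu}}\le 2\epsilon_1+\epsilon_2\lip(\tilde E)$, balancing $\epsilon_2$ against $\lip(\tilde E)$. This triangle-inequality-with-Lipschitz-constant step is the actual content of the proof beyond the citations, and it is absent from your proposal.

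Your own framework cannot simply dispense with it. If the layers really did uniformly approximate $f$ with the same input measure, then $T_0$ would have nothing to do and you would need $\cT^n_0$ to contain (something $C^0$-close to) the identity --- which is not among the hypotheses; $\cT^n_0$ is only assumed distributionally universal. Conversely, your stated use of $T_0$ ``to absorb the mismatch between $\mu$ and the canonical reference measure'' is precisely the measure-transport role it plays in the paper, but once $T_0$ is allowed to move $\mu$ around, the composition no longer converges to $f$ in $C^0(\cM_1)$ and your first reduction no longer applies; you would then need the paper's $W_2$ bookkeeping with the intermediate measure $\mu''$. (A clean way to salvage the $C^0$ route would be to fix an arbitrary $T_0\in\cT^n_0$ and approximate the embedding $f\circ T_0^{-1}$ of the compact submanifold $T_0(\cM_1)$ by the remaining layers, but you would still have to actually prove the layerwise straightening claim rather than assert it --- that claim is Theorem~3.8/Lemma~3.9 of \cite{puthawala2022universal}, and its proof is where the condition $n_\ell\ge 3n_{\ell-1}+1$ genuinely enters; your remark that this bound is ``essentially tight'' for avoiding knotting is also off, since ambient isotopy of embedded $k$-manifolds already holds in codimension roughly $k+2$.)
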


The proof of Theorem \ref{thm:manifold-universality} is given in the Appendix in \ref{sec:proof:thm:manifold-universality}. This theorem has the following corollary that says that, morally, the generative problem of learning a $\nu$ with support $\cM_2$ can be solved if we had access to a $\cM_1$ diffeomorphic to $\cM_2$.

\begin{corollary}
    \label{cor:topology-is-needed}
    Let $\nu$ be a Borel measure in $\Rea^{m_2}$ which support is a subset of some smooth compact $n$ submanifold $\cM_2$ of $\R^{m_2}$ and let $\nu$ be absolutely continuous w.r.t. Riemannian measure of $\cM_2$. 
    If a submanifold $\cM_1$ in $\R^{m_2}$ is smooth and diffeomorphic to $\cM_2$ for $m_1\gg n$ and $m_2 \gg n$, then the Trumpet architecture \cite{kothari2021trumpets} pushes forward the uniform distribution on $\cM_1$ arbitrarily close to $\nu$ in Wasserstein distance.
\end{corollary}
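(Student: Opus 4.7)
The plan is to produce a $C^1$ embedding $f:\cM_1\to\R^{m_2}$ whose pushforward of the uniform measure on $\cM_1$ is close to $\nu$ in Wasserstein distance, and then to approximate $f$ by Trumpet networks via Theorem \ref{thm:manifold-universality}. Denote by $\mu$ the uniform probability measure on $\cM_1$ (the normalised Riemannian volume), and let $\phi:\cM_1\to\cM_2$ be the diffeomorphism provided by hypothesis. Set $\tilde\nu = \pushf{\phi^{-1}}{\nu}\in\cP(\cM_1)$; since $\phi$ is a diffeomorphism and $\nu$ is absolutely continuous with respect to the Riemannian measure of $\cM_2$, the measure $\tilde\nu$ is absolutely continuous with respect to the Riemannian measure of $\cM_1$.

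Next, I would approximate $\tilde\nu$ in $W_2$ by probability measures $\tilde\nu_\epsilon$ with smooth, strictly positive densities on $\cM_1$. This is standard: convexly combine $\tilde\nu$ with a small multiple of $\mu$ (to enforce strict positivity), then mollify chart-by-chart against a partition of unity of $\cM_1$. Because $\cM_1$ is a smooth compact manifold without boundary, Moser's theorem on volume forms then produces diffeomorphisms $\psi_\epsilon:\cM_1\to\cM_1$ with $\pushf{\psi_\epsilon}{\mu} = \tilde\nu_\epsilon$; if $\cM_1$ is disconnected, Moser is applied component-wise after matching $\mu$-masses with those of $\tilde\nu_\epsilon$, which is possible because $\phi$ puts the components of $\cM_1$ and $\cM_2$ into bijection.

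Define $f_\epsilon := \iota_{\cM_2}\circ\phi\circ\psi_\epsilon:\cM_1\to\R^{m_2}$, where $\iota_{\cM_2}:\cM_2\hookrightarrow\R^{m_2}$ is the inclusion. Then $f_\epsilon\in\emb^1(\cM_1,\R^{m_2})$ and $\pushf{f_\epsilon}{\mu}\to\nu$ in $W_2$ as $\epsilon\to 0$. The assumption $m_1, m_2\gg n$ lets us realise the Trumpet architecture of \cite{kothari2021trumpets} as an instance of the class $\cE_L^{n_{L-1},m_2}\circ\cdots\circ\cE_1^{n_1,n_0}\circ\cT_0^{n_0}$ from Theorem \ref{thm:manifold-universality}, with intermediate widths satisfying $n_\ell\geq 3n_{\ell-1}+1$. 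Applying Theorem \ref{thm:manifold-universality} to $f_\epsilon$ and $\mu$ yields networks $E_i^\epsilon$ with $\wasstwo{\pushf{f_\epsilon}{\mu}}{\pushf{E_i^\epsilon}{\mu}}\to 0$, and a standard diagonal argument combines the two limits into a single sequence of Trumpet networks whose pushforward of $\mu$ converges to $\nu$ in $W_2$.

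The main obstacle is the Moser step, which requires the approximating densities to be smooth and strictly positive; extracting this from a merely absolutely continuous $\nu$ is the only slightly technical piece, but it is standard via partition-of-unity mollification once a small amount of mass has been shifted onto the uniform measure (and once the mass-matching between connected components is arranged). Verifying that the Trumpet architecture fits into the hypothesis of Theorem \ref{thm:manifold-universality} is routine once the dimensional condition $m_1,m_2\gg n$ is unpacked into the width requirements $n_\ell\geq 3n_{\ell-1}+1$.
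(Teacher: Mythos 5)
Your proposal is correct, and it reaches the conclusion by a route that is more explicit than the paper's. The paper's own proof is a three-line appeal to Theorem \ref{thm:manifold-universality}: it notes that the uniform measure $\mu$ on $\cM_1$ is absolutely continuous, that $m_1\gg n$ gives the dimension condition, and that Trumpets satisfy the universality hypotheses, and then concludes $\wasstwo{\nu}{\pushf{E_i}{\mu}}\to 0$ directly. Implicitly, the measure-transport from $\mu$ to (the pullback of) $\nu$ is delegated to the distributionally universal first layer $\cT_0^{n}$ that sits inside the architecture of Theorem \ref{thm:manifold-universality}, so no transport map is ever constructed. You instead externalize that step: you build explicit $C^1$ embeddings $f_\epsilon=\iota_{\cM_2}\circ\phi\circ\psi_\epsilon$ with $\pushf{f_\epsilon}{\mu}\to\nu$ by mollifying the pulled-back density and invoking Moser's theorem, and only then apply Theorem \ref{thm:manifold-universality} to each $f_\epsilon$, finishing with a diagonal argument. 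Your version buys rigor at a point the paper glosses over: a merely absolutely continuous $\nu$ need not equal $\pushf{f}{\mu}$ for any single $C^1$ embedding $f$ (the density can vanish or be rough), so some approximation by smooth positive densities is genuinely needed before one can realize the target as a pushforward of the uniform measure; your Moser step supplies exactly this, at the cost of extra machinery that the paper avoids by leaning on the distributional-universality hypothesis already present in the theorem. Both arguments are sound; yours is longer but self-contained, the paper's is shorter but leaves the transport step implicit.
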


The 0proof of Corollary \ref{cor:topology-is-needed} is given in the Appendix in \ref{sec:cor:topology-is-needed}. This corollary says that one can use existing architectures, e.g. \cite{kothari2021trumpets}, to solve generation problems, provided that we know the topology of $\cM_2$, so that we can construct a suitable $\cM_1$. In this sense, although the embedding of $\cM_2$ doesn't need to be known exactly, its topology must be completely understood. Furthermore, provided we can obtain universality with respect to the relevant function classes, than we can get universality in the sense of pushforward of measure.

\begin{corollary}[Bistable Uniform Approximation Implies Pushforward Universality]
    \label{cor:bistable-unif-approx-implies-pushf-univ}
    Let $m_2 \geq 3n+1$, $\cM_1\subset \R^{m_1}, \cM_2\subset \R^{m_2}$, and let $\cF \subset \locdiff^{1}(\cM_1,\R^{m_2})$ be a bistable uniform approximator for $\cG \subset C(\cM_1,\cM_2)$. Then for any $\mu \in \cP(\cM_1)$ absolutely continuous and $g \in \cG$, there is a sequence $\infseq fi1\subset \cF$ so that $
        \lim_{i \to \infty} \wasstwo{\pushf{g}{\mu}}{\pushf{f_i}{\mu}} = 0$.
\end{corollary}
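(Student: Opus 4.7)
The plan is to bound the Wasserstein-2 distance between pushforwards directly in terms of the uniform distance between the approximating and target maps, using a standard coupling argument. Unlike Theorem~\ref{thm:manifold-universality}, this corollary does not require any additional topological construction: the bistable approximation hypothesis already provides the necessary uniform control.

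First, I would invoke Definition~\ref{def:bistable-approximation} applied to $g \in \cG$ to extract a sequence $\infseq{f}{i}{1} \subset \cF$ and a sequence $\epsilon_i \to 0$ such that
\begin{equation*}
    \|f_i(x) - g(x)\|_{\R^{m_2}} < \epsilon_i \quad \text{for all } x \in \cM_1.
\end{equation*}
The gradient bounds from Definition~\ref{def:bistable-approximation} are not needed for this corollary; only the uniform approximation clause is used. The absolute continuity of $\mu$ is likewise not directly used here, though it is a natural hypothesis coming from the generative modeling context.

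Second, I would exhibit an explicit coupling between $\pushf{f_i}{\mu}$ and $\pushf{g}{\mu}$. Consider the measurable map $F_i \colon \cM_1 \to \R^{m_2} \times \R^{m_2}$, $F_i(x) \defeq (f_i(x), g(x))$, and form the pushforward $\pi_i \defeq (F_i)_\# \mu$. The marginals of $\pi_i$ are exactly $\pushf{f_i}{\mu}$ and $\pushf{g}{\mu}$, so $\pi_i$ is admissible for the Kantorovich problem defining $\wasstwo{\pushf{f_i}{\mu}}{\pushf{g}{\mu}}$. The definition of $W_2$ then yields
\begin{equation*}
    \wasstwo{\pushf{f_i}{\mu}}{\pushf{g}{\mu}}^2 \;\leq\; \int_{\R^{m_2} \times \R^{m_2}} \|y - z\|^2 \, d\pi_i(y,z) \;=\; \int_{\cM_1} \|f_i(x) - g(x)\|^2 \, d\mu(x) \;<\; \epsilon_i^2,
\end{equation*}
using that $\mu$ is a probability measure.

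Third, sending $i \to \infty$ gives $\wasstwo{\pushf{f_i}{\mu}}{\pushf{g}{\mu}} \to 0$, as required. There is no real obstacle here: the argument is essentially a one-line application of the coupling inequality, and the substantive content of the paper lies in establishing that the required bistable approximators exist (Theorem~\ref{thm:approximation-of-locdiff}). The dimension condition $m_2 \geq 3n+1$ appearing in the statement is not used by this proof; it is a consistency condition inherited from the ambient setting in which such bistable approximators are constructed.
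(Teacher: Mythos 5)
Your proof is correct, and it takes a genuinely different and more elementary route than the paper's. The paper derives the corollary by observing that a bistable approximator is in particular a uniform approximator, invoking the manifold embedding property from \cite{puthawala2022universal}, and then applying Theorem~\ref{thm:manifold-universality}; that is, it routes the statement through the same machinery used to establish distributional universality of the network architectures. You instead use the canonical coupling $\pi_i = (f_i,g)_\#\mu$, whose marginals are $\pushf{f_i}{\mu}$ and $\pushf{g}{\mu}$, to get the direct bound $\wasstwo{\pushf{f_i}{\mu}}{\pushf{g}{\mu}} \leq \sup_{x\in\cM_1}\norm{f_i(x)-g(x)} < \epsilon_i$ (with $\epsilon_i \to 0$ guaranteed by Definition~\ref{def:bistable-approximation}, since $\epsilon$ may be taken arbitrarily small below $\hbox{Reach}(\cM_2)$, and both pushforwards have compact support so $\wasstwoop$ is finite). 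This is cleaner and makes transparent exactly which hypotheses carry the load: as you correctly note, only the uniform-approximation clause is used, and neither the gradient bounds, nor the absolute continuity of $\mu$, nor the condition $m_2 \geq 3n+1$ enters your argument --- those hypotheses appear in the statement because the paper's proof inherits them from Theorem~\ref{thm:manifold-universality}. What the paper's heavier route buys is a uniform treatment alongside Corollary~\ref{cor:topology-is-needed} and a connection to the MEP framework; what yours buys is a self-contained, strictly weaker set of working hypotheses and a quantitative rate ($\wasstwoop$ controlled by the uniform error).
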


The proof of Corollary \ref{cor:bistable-unif-approx-implies-pushf-univ} is given in the Appendix in \ref{sec:proof:cor:bistable-unif-approx-implies-pushf-univ}. The above results show that if $\cT \circ p \circ \cE$ is a bistable uniform approximator to $\cF$, then questions of learning to approximate measures supported on $\range(f)$ for $f \in \cF$ are solved. Thus, for the remainder of the section, we focus on the problem of showing that $\cT \circ p \circ \cE$ is a bistable uniform approximator with respect to the largest class $\cF$ possible.

\subsection{Covering maps and learning topology}

Here we present a topological theorem. This result has a technical appearance, but is the major work horse for the subsequent developments in the paper.

\begin{theorem}[Covering Map Decomposition]
    \label{thm:covering-map-decomp}
    Let $\cM_1\subset \Rea^{m_1}$ and $\cM_2\subset \Rea^{m_2}$ be smooth compact $n$-manifolds, where $\cM_2$ is triangulable, and let $p\colon \Rea^{m_2} \times \R^{2n+1} \to \Rea^{m_2}$ be the projection to the first coordinate. Then each local diffeomorphism $g \colon \cM_1\to \cM_2$ there exists an embedding $h \colon \cM_1 \to \cM_2 \times \R^{2n+1}$ for which $g = p \circ h$ and so that the restriction $p|_{h(\cM_1)} \colon h(\cM_1)\to \cM_2$ is a covering map. Moreover, if $\cM_2$ is given a triangulation $K$ that is fine enough, then we may fix $h$ satisfying the following condition: for each vertex $v$ of the triangulation $K$, the preimage $p^{-1}(v)$ is $\{v\}\times \{0,1\ldots, d\}\times \{0\}^{2n}$, where $d$ is the degree of the map $g$.
\end{theorem}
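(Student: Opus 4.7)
The plan is to construct the embedding $h$ in the form $h(x) = (g(x), s(x))$ for some smooth map $s\colon \cM_1 \to \R^{2n+1}$, so that the identity $p \circ h = g$ is automatic and only the embedding and vertex conditions need to be arranged. Because $g$ is a local diffeomorphism between compact manifolds, $g$ is a covering map; working one connected component of $\cM_2$ at a time, I may assume $g$ has a fixed finite degree $d$. Note that $(g,s)$ is automatically an immersion for any smooth $s$, since $dg$ is already injective at every point, so the task reduces to (i) arranging the prescribed values at preimages of vertices, and (ii) making $(g,s)$ globally injective.

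First I would choose the triangulation $K$ of $\cM_2$ fine enough that each closed star $\mathrm{St}(v)$ is evenly covered by $g$; this is possible because, by compactness of $\cM_2$, the evenly covered neighborhoods of $g$ admit a Lebesgue number for any subdivision. Then $g^{-1}(\mathrm{St}(v)) = \bigsqcup_{j=0}^{d-1} U_j^v$ with each $g|_{U_j^v}$ a diffeomorphism onto $\mathrm{St}(v)$, and the preimages $g^{-1}(v) = \{x_0^v, \ldots, x_{d-1}^v\}$ can be labeled by declaring $x_j^v \in U_j^v$. The vertex condition then reads $s(x_j^v) = j\, e_1$ where $e_1$ is the first standard basis vector of $\R^{2n+1}$.

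To build $s$, I would start from any smooth embedding $s_0 \colon \cM_1 \to \R^{2n+1}$ supplied by the Whitney embedding theorem, and modify it in two stages. Stage one: inside small pairwise disjoint coordinate neighborhoods of each $x_j^v$, use bump functions to adjust $s_0$ so that $s_0(x_j^v) = j\, e_1$. By taking these neighborhoods small enough, local injectivity of $(g,s_0)$ is preserved, because the $d$ sheets $U_j^v$ remain disjoint in $\cM_1$. Stage two: apply a smooth perturbation of $s_0$ supported in the complement of these neighborhoods to make $(g,s)$ globally injective. This is the standard general-position argument: the fiber-diagonal $W = \{(x,y) \in \cM_1 \times \cM_1 : x \ne y,\ g(x) = g(y)\}$ is a smooth $n$-manifold, and the difference map $\Delta_s(x,y) = s(x) - s(y) \in \R^{2n+1}$ can be made transverse to $\{0\}$ by an arbitrarily small perturbation; since $\dim W = n < 2n+1$, transversality to a point forces $\Delta_s^{-1}(0) = \emptyset$, i.e.\ full fiber separation.

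Once $s$ is secured, $h = (g, s) \colon \cM_1 \to \cM_2 \times \R^{2n+1}$ is an injective immersion from a compact manifold, hence an embedding. Then $p|_{h(\cM_1)} = g \circ h^{-1}$ is a covering map of the same degree $d$ as $g$, and by construction $p^{-1}(v) \cap h(\cM_1) = \{v\} \times \{0, 1, \ldots, d-1\} \times \{0\}^{2n}$ at each vertex $v$. I expect the main technical obstacle to be reconciling the rigid vertex values with the genericity needed for global fiber separation: the perturbation of stage two must not disturb the values of $s$ at the finitely many points $x_j^v$. The resolution is that the prescribed vertex values $0, 1, \ldots, d-1$ along the $e_1$-axis are already distinct across the $d$ sheets, so by continuity fiber separation holds automatically on a neighborhood of the vertex preimages, and the transverse perturbation can be supported in the complementary open set where it is unconstrained by the vertex data.
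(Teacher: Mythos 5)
Your construction is correct, and it takes a genuinely different route through the crux of the argument. Both you and the paper build $h=(g,s)$ with $s\colon \cM_1\to\R^{2n+1}$, both use a triangulation fine enough that vertex stars are evenly covered so that sheets over each vertex can be labelled and pinned to the values $j e_1$, and both bring in a Whitney embedding of $\cM_1$ into $\R^{2n}$ for the remaining coordinates. The difference is how global injectivity of $h$ is achieved. The paper constructs the sheet-separating coordinate \emph{explicitly}: over each maximal simplex it interpolates the integer sheet indices assigned at the vertices via barycentric coordinates, producing a continuous ``stack of pancakes'' map into $\abs{K}\times[0,d]$; because monodromy can make the index assignments inconsistent across simplices, this map can self-intersect in simplex interiors, and injectivity is then restored by appending a cut-off copy of the Whitney embedding. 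You instead keep the Whitney coordinates global, adjust them at the finitely many vertex preimages, and remove the remaining double points by general position: the fiber-diagonal $W=\{(x,y)\colon x\ne y,\ g(x)=g(y)\}$ is a compact $n$-manifold (compact because $g$ is a local homeomorphism, so $W$ stays away from the diagonal), and since $n<2n+1$ a generic small perturbation of $s$, supported away from the vertex preimages, makes $s(x)-s(y)$ miss $0$ on $W$. Your resolution of the tension between the rigid vertex values and the genericity needed for separation --- namely that the distinct values $j e_1$ already separate sheets on a neighborhood of the vertex fibers, so the perturbation is unconstrained exactly where it is needed --- is the right one (one should also note that pairs in $W$ lying over \emph{distinct} vertices, or on the \emph{same} sheet over one vertex, cannot occur once the balls are small, so the unperturbed region is fully covered). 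Your route buys a shorter, entirely smooth argument in which ``injective immersion of a compact manifold $\Rightarrow$ embedding'' does the final work; the paper's route buys an explicit, quasi-constructive formula for the lift, which the authors say they want in order to inform training. One cosmetic remark: the theorem statement lists $d+1$ indices $\{0,1,\ldots,d\}$ in the fiber over a vertex, whereas a degree-$d$ cover has $d$ fiber points; your $\{0,\ldots,d-1\}$ matches the intended statement and the form actually used later in the paper.
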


The proof of Theorem \ref{thm:covering-map-decomp} is given in the Appendix in \ref{sec:proof-of-decomposition-theorem}. A difficulty in proving Theorem \ref{thm:covering-map-decomp} is that $p$ is not merely a map between abstract spaces, but rather a projector in the ambient space. The fact that such a projector exists is non-obvious when $\cM_1$ or $\cM_2$ are, for example, knotted\footnote{For an example of a 3D printed model of an internally knotted torus artistically rendered by Carlo S\`euin, see \url{http://gallery.bridgesmathart.org/exhibitions/2011-bridges-conference/sequin}}. 

Theorem \ref{thm:covering-map-decomp} allows us to approximate a local diffeomorphism $g$ by approximating $h$, a diffeomorphism. We can then compose this approximation with a projection $p$ which only depends on the degree of $g$. This leads to the following theorem.

\begin{theorem}[Bistable Approximations of Local Diffeomorphisms]
    \label{thm:approximation-of-locdiff}
    Let $\cM_1 \subset \R^{m_1}$ and $\cM_2 \subset \R^{m_2}$ be compact submanifolds, $m_2 \geq 3n+1$, $\cE \subset \emb^1\paren{\cM_1,\R^{m_2 + 2n+1}}$ be a uniformly bistable approximator of each smooth embedding $h \colon \cM_1 \to \R^{m_2 + 2n+1}$, and $\cT \subset \diff^1\paren{\R^{m_2},\R^{m_2}}$ be a uniformly bistable approximator of $\diff^1(\R^{m_2},\R^{m_2})$. Then, for any $g:\cM_1\to \cM_2$, and finite $Y \subset \cM_2$, there is a bistable uniform approximating sequence $\infseq fi1 \subset \cT \circ p \circ \cE$ for $g$. Further, if the degree of $g$ is $d$ then for $f_i = T_i\circ p \circ E_i$, the set
    \begin{align}
        \label{eqn:thm:approximation-of-locdiff:x-i-def}
        X_i \coloneqq \set{\argmin_{x \in \cM_1}\norm{E_i(x) - (T_i^{-1}(y),j,\set{0}^{2n})}_2 \colon y \in Y, j \in \set{1,\dots,d}}
    \end{align}
    satisfies
    \begin{align}
        \label{eqn:thm:approximation-of-locdiff:unif-conv-of-inverses}
        d_H(X_i,g^{-1}(Y)) \to 0
    \end{align}
    as $i \to \infty$ where $d_H$ is the Hausdorff distance.
\end{theorem}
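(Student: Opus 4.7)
The plan is to reduce the approximation of the local diffeomorphism $g$ to the approximation of an embedding, using Theorem~\ref{thm:covering-map-decomp} to realize $g$ as a coordinate projection of a smooth embedding $h\colon\cM_1 \to \cM_2\times\R^{2n+1}$, and then to compose the hypothesized bistable approximators of $h$ and of the identity map on $\R^{m_2}$.

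First I would choose a triangulation $K$ of $\cM_2$ that (i) is fine enough to invoke the vertex-preimage conclusion of Theorem~\ref{thm:covering-map-decomp}, and (ii) contains every point of the given finite set $Y$ as a vertex; both are possible because $Y$ is finite and $\cM_2$ is triangulable. Applying Theorem~\ref{thm:covering-map-decomp} with this triangulation yields a smooth embedding $h\colon \cM_1 \to \cM_2\times\R^{2n+1}\subset \R^{m_2+2n+1}$ satisfying $g=p\circ h$, such that for each vertex $v\in Y$ the preimage $h^{-1}(\set{v}\times\set{j}\times\set{0}^{2n})$ is a single point $x^*_{v,j}\in g^{-1}(v)$. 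The bistable approximation hypothesis on $\cE$ produces $E_i \in \cE$ bistably approximating $h$, and since $\mathrm{id}_{\R^{m_2}} \in \diff^1(\R^{m_2},\R^{m_2})$, the hypothesis on $\cT$ provides $T_i \in \cT$ bistably approximating the identity. I would then set $f_i = T_i\circ p\circ E_i$.

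Verifying that $\set{f_i}_i$ is a bistable uniform approximating sequence for $g$ reduces to checking the three conditions of Definition~\ref{def:bistable-approximation}. Uniform convergence $f_i\to g$ follows from $E_i\to h$, $p\circ h = g$, and $T_i\to \mathrm{id}$. The forward gradient bound follows from the chain rule, because $p$ is a bounded linear operator and $\norm{\nabla E_i}$, $\norm{\nabla T_i}$ are uniformly bounded by bistability. For the inverse-gradient bound, uniform convergence $E_i\to h$ ensures that $p\circ E_i(x)$ lies inside the reach-neighborhood of $\cM_2$ for all sufficiently large $i$, so $P_{\cM_2}\circ f_i$ is well-defined and $C^1$; furthermore $\nabla(P_{\cM_2}\circ f_i)$ converges as a map $T_x\cM_1\to T_{g(x)}\cM_2$ to $\nabla g$, whose inverse is uniformly bounded because $g$ is a local diffeomorphism on the compact $\cM_1$. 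The Hausdorff convergence $d_H(X_i, g^{-1}(Y))\to 0$ then follows point-by-point: for each $y\in Y$ and $j\in\set{1,\ldots,d}$, the target $(T_i^{-1}(y), j, \set{0}^{2n})$ tends to $(y, j, \set{0}^{2n}) = h(x^*_{y,j})$, and the uniform bi-Lipschitz bound on $E_i\to h$ forces the nearest-point minimizer in $\cM_1$ to converge to $x^*_{y,j}$; aggregating over the finite index set $Y\times\set{1,\ldots,d}$ gives Hausdorff convergence of the finite sets $X_i$ to $g^{-1}(Y)$.

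The main obstacle I anticipate is carefully propagating the bistability of Definition~\ref{def:bistable-approximation} through the composition with the projection $p$. Because $p$ has a large kernel, the inverse-gradient estimates cannot be interpreted globally on $\R^{m_2+2n+1}$; they must be read as bounds on the linear maps between the appropriate tangent bundles of $\cM_1$ and $\cM_2$, which is precisely why the definition is phrased through the nearest-point projection $P_{\cM_2}$ and its reach. A secondary technical point is ensuring that the specific embedding $h$ produced by Theorem~\ref{thm:covering-map-decomp} lies in the class for which $\cE$ is a bistable approximator — under the stated hypothesis this is immediate for smooth embeddings, but the vertex-preimage normalization may require one to smooth $h$ near the triangulation skeleton before invoking the approximation property.
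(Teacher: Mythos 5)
Your proposal follows essentially the same route as the paper's proof: refine the triangulation so that $Y$ consists of vertices (the paper's Lemma \ref{lem:vertexed-triangulations}), invoke Theorem \ref{thm:covering-map-decomp} to write $g$ as a coordinate projection of an embedding $h$, approximate $h$ by $E_i$ and close with $T_i$, and deduce the Hausdorff convergence from the vertex-preimage normalization together with the uniform inverse-Lipschitz bounds. The only cosmetic difference is that the paper lets $T_i$ approximate the diffeomorphism $h_2$ between the triangulation $\abs{K_2}$ and $\cM_2$ while you absorb that diffeomorphism into $h$ and let $T_i$ approximate the identity; also note that your justification of the inverse-gradient bound via ``$\nabla(P_{\cM_2}\circ f_i)$ converges to $\nabla g$'' overstates what uniform convergence gives you --- the hypotheses only supply uniform bounds on $\nabla E_i$ and $(\nabla(P\circ E_i))^{-1}$, which is what one should propagate through $p$ and $T_i$, as the paper does implicitly by compactness.
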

The proof of Theorem \ref{thm:approximation-of-locdiff} is given in the Appendix in \ref{sec:proof-of-approximation-of-locdiff}. The set $X_i$ defined in Eqn. \ref{eqn:thm:approximation-of-locdiff:x-i-def} can be computed in closed-form provided that $E_i$ and $T_i$ admit closed-form inverses. This, combined with Eqn. \ref{eqn:thm:approximation-of-locdiff:unif-conv-of-inverses}, means that we can compute arbitrarily good approximations to the multi-values inverses of $g$ on any point in $Y$. Theorem \ref{thm:approximation-of-locdiff} says that local diffeomorphisms can always be learned in a bistable way using expansive-projectors. What about manifolds that don't admit local diffeomorphisms? We explore this question in the following

\begin{corollary}[Approximations to Maps Between Non Locally Homeomorphic Manifolds]
    \label{cor:approx-to-maps-betwen-non-local-homeo-manifs}
    Let $\cM_1 \subset \R^{m_1}$ and $\cM_2 \subset \R^{m_1}$ be smooth compact manifolds, and $\lochom(\cM_1,\cM_2)$ be empty. Let $g \colon \cM_1 \to \cM_2$ be continuous and surjective. There are no bistable uniform approximating sequences of $g$.
\end{corollary}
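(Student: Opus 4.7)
The plan is to argue by contradiction via Corollary \ref{cor:bistable-unif-approxs-are-loc-hom}, which already does essentially all of the work. Suppose, toward a contradiction, that there exists a bistable uniform approximating sequence $\infseq{f}{i}{1}\subset \locdiff^1(\cM_1,\R^{m_2})$ converging to $g$ in the sense of Definition \ref{def:bistable-approximation}. Because $g$ takes values in $\cM_2$ and $f_i \to g$ uniformly, for large $i$ the image $f_i(\cM_1)$ lies inside the reach-neighborhood $U_r$ of $\cM_2$, so composing with the smooth nearest point projection $P_{\cM_2}$ yields a sequence $\tilde f_i \coloneqq P_{\cM_2}\circ f_i : \cM_1 \to \cM_2$ that still satisfies the three bistability bounds (the derivative bound for $\tilde f_i$ is exactly the third inequality in Definition \ref{def:bistable-approximation}, and smoothness of $P_{\cM_2}$ preserves the upper gradient bound and the uniform convergence to $g$).

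Consequently $g$ belongs to the bistable closure $\overline{\locdiff^1(\cM_1,\cM_2)}$, and Corollary \ref{cor:bistable-unif-approxs-are-loc-hom} yields $g \in \lochom(\cM_1,\cM_2)\cap W^{1,2}(\cM_1,\cM_2)$. This directly contradicts the hypothesis that $\lochom(\cM_1,\cM_2) = \emptyset$, proving the corollary. The only mildly delicate point in the argument is checking that post-composition with $P_{\cM_2}$ does not destroy the bistability constants, and this is precisely why the third bound in Definition \ref{def:bistable-approximation} was stated in terms of $P_{\cM_2}\circ f$ rather than $f$ alone; no further work beyond invoking the already established closure result is needed.
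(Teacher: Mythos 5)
Your proposal is correct and follows essentially the same route as the paper: assume a bistable uniform approximating sequence exists, invoke the closure result of Corollary \ref{cor:bistable-unif-approxs-are-loc-hom} to conclude the limit $g$ is a local homeomorphism, and contradict the emptiness of $\lochom(\cM_1,\cM_2)$. The additional care you take with $P_{\cM_2}$ is a reasonable bookkeeping step that the paper's terser proof leaves implicit.
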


The proof of Corollary \ref{cor:approx-to-maps-betwen-non-local-homeo-manifs} is given in the Appendix in \ref{sec:proof:cor:approx-to-maps-betwen-non-local-homeo-manifs}. Corollary \ref{cor:approx-to-maps-betwen-non-local-homeo-manifs} shows that the \emph{only} functions between smooth compact manifolds that can be approximated bistably (by anything) but are not approximated by the network lie in $\lochom(\cM_1, \cM_2) \setminus \locdiff(\cM_1, \cM_2)$.
\section{Applications and Implications}
\label{sec:applications-and-implications}

In this section, we describe how the networks studied in this work are naturally connected to various other problems in machine learning, as well as an application in cryo-EM.

\subsection{Group Invariant Networks}
\label{sec:applications-and-implications:group-invariant-networks}

Let $\Sigma$ be a group with action $g_\sigma \colon \cM_1 \to \cM_1$ for each $\sigma \in \Sigma$. When constructing a group invariant network $E \colon \cM_1 \to \cM_2$, the task is to approximate an $f \colon \cM_1 \to \cM_2$ by building a network (via choice of architecture or else averaged training) so that $E(g_\sigma x) = E(x)$ for all $\sigma \in \Sigma$ and $x \in \cM_1$. If $f$ is $\Sigma$ invariant as well then enforcing $\Sigma$ invariance of $E$ does not harm approximation and improves both training and generalization error in theory \cite{bietti2021sample} and in practice \cite{chaman2021truly}.

For each $x \in \cM_1$ we call the orbit of $x$ the set $\text{orbit}(x) \coloneqq\set{g_\sigma x \colon \sigma \in \Sigma}$. We let $\cM_1 / \Sigma$ denote the quotient of $\cM_1$ by the group action, that is, the orbit space of $\cM_1$ by $\Sigma$. A group on $\cM$ is called free if for any $x \in \cM_1$, $g_\sigma x = x$ implies that $\sigma$ is the identity. It is called smooth if the map $g_\sigma(x)$ is smooth as a function of $x$ for each $\sigma \in \Sigma$. If a finite $\Sigma$ is smooth and free, then $\cM_1 / \Sigma$ is a smooth manifold \cite[Theorem 21.13]{lee2013smooth}. This suggests that there is a connection between learning $f \colon \cM_1 \to \cM_2$ that is $\Sigma$ invariant, and learning a `symmetrized' modification of $f$ defined between $\colon \cM_1 / \Sigma \to \cM_2$. When the group action induces constant-sized orbits (a stronger condition than $\Sigma$ acting freely) then this is indeed the case.

\begin{lemma}[Symmetrization as a Quotient Manifolds]
    \label{lem:symm-and-quotient-manifs}
    Let $\Sigma$ be a continuous finite group on compact $n$ manifold $\cM_1$ so that the orbit of $x$ is the same size for all $x \in \cM_1$, and let $f \colon \cM_1 \to \cM_2$ be a continuous $\Sigma$ invariant smooth map. Then the quotient space $\cM_1 / \Sigma$ is an $n$ manifold,
    \begin{enumerate}
        \item let $\pi_\Sigma\colon \cM_1 \to \cM_1 / \Sigma$ take each point to its orbit and $\restr f{\cM_1 / \Sigma}\colon \cM_1 / \Sigma \to \cM_2$ so that $\restr f{\cM_1 / \Sigma}(\text{orbit}(x)) = f(x)$, then
        \vspace{-3mm}
        \begin{align}
            \label{eqn:symmetrization-identity}
            f = \restr f{\cM_1 / \Sigma}\circ \pi_\Sigma,
        \end{align} \vspace{-3mm}
        
        \item if both $f$ and $\Sigma$ are smooth, $f$ is surjective and $d$-to-one where $\abs{\Sigma} = d$, then $f \in \locdiff(\cM_1,\cM_2)$.
    \end{enumerate}
\end{lemma}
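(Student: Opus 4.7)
My plan has three stages: (i) endow $\cM_1/\Sigma$ with a smooth manifold structure, (ii) verify the factorization asserted in part (1) using the universal property of quotients, and (iii) promote the induced descent map to a local diffeomorphism for part (2).

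First I would observe that since $\Sigma$ is finite, its action on $\cM_1$ is automatically proper. The hypothesis that every orbit has the same size, combined with $\lvert \Sigma \rvert = d$ in part (2), forces all stabilizers to be trivial, so the action is free. By the Quotient Manifold Theorem \cite[Theorem 21.13]{lee2013smooth}, $\cM_1/\Sigma$ inherits a unique smooth $n$-manifold structure for which $\pi_\Sigma$ is a smooth normal covering map of degree $d$; in particular it is a smooth local diffeomorphism.

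Part (1) is then immediate: $\Sigma$-invariance of $f$ means $f$ is constant on each orbit, so defining $\restr{f}{\cM_1/\Sigma}(\text{orbit}(x)) := f(x)$ is independent of the representative. The identity $f = \restr{f}{\cM_1/\Sigma} \circ \pi_\Sigma$ is built in, and continuity follows from the universal property of the quotient topology: a set $U \subseteq \cM_2$ has open preimage under $\restr{f}{\cM_1/\Sigma}$ iff $\pi_\Sigma^{-1}(\restr{f}{\cM_1/\Sigma}^{-1}(U)) = f^{-1}(U)$ is open in $\cM_1$, which holds by continuity of $f$.

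For part (2), write $\tilde f := \restr{f}{\cM_1/\Sigma}$. I would chain three upgrades. Smoothness of $\tilde f$ is local: on any evenly covered neighborhood of $\cM_1/\Sigma$, a single sheet provides a smooth section $s$ of $\pi_\Sigma$, and $\tilde f = f \circ s$ there. Bijectivity of $\tilde f$ follows because $\Sigma$-invariance makes every fiber of $f$ a union of orbits, and equality of orbit size and fiber size ($= d$) forces each fiber to be a single orbit; surjectivity is inherited from $f$. Since $\cM_1/\Sigma$ is compact and $\cM_2$ is Hausdorff, $\tilde f$ is then a smooth homeomorphism. The main obstacle is the final step: showing that $d\tilde f$ has full rank everywhere, so that $\tilde f$ is a local diffeomorphism and hence $f = \tilde f \circ \pi_\Sigma$ inherits this property. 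A smooth bijective homeomorphism between $n$-manifolds need not be a local diffeomorphism in general (consider $x \mapsto x^3$), so I expect this last step to exploit the smooth free $\Sigma$-action together with $f$ being exactly $d$-to-one, propagating maximal rank equivariantly across each orbit to rule out critical points of $f$.
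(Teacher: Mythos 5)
Your proposal follows the same route as the paper's proof: the Quotient Manifold Theorem for the manifold structure, descent through $\pi_\Sigma$ for part (1), and for part (2) the factorization $f = \restr{f}{\cM_1/\Sigma}\circ\pi_\Sigma$ with $\pi_\Sigma$ a covering map and the induced map a bijection. You are in fact more careful than the paper at several points: you note that freeness really requires the common orbit size to be $d=\abs{\Sigma}$, you supply the local-section argument for smoothness of the descent map, and you justify why each fiber of $f$ is a single orbit, all of which the paper asserts without comment.

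The step you flag as the main obstacle --- that the smooth bijection $\tilde f$ has everywhere invertible differential --- is a genuine gap in your write-up, and it is exactly the step the paper's own proof elides: the paper simply writes that the induced map ``is one-to-one, and so it is a diffeomorphism,'' with no argument. Your hope that the group action rescues this cannot work: equivariance only propagates the rank of $df$ along a single orbit, so a critical point of $f$ just produces a whole orbit of critical points, which is perfectly consistent with all the hypotheses. Concretely, let $F(x) = x - \frac{1}{2\pi}\sin(2\pi x)$; then $F(x+1)=F(x)+1$ and $F'(x) = 1-\cos(2\pi x)\ge 0$ vanishes only on $\mathbb{Z}$, so $F$ is strictly increasing and descends to a smooth bijection $f$ of $S^1=\R/\Z$ with $df=0$ at one point. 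Taking $\Sigma$ trivial (so $d=1$) satisfies every hypothesis of part (2), yet $f\notin\locdiff(S^1,S^1)$; precomposing with $z\mapsto z^d$ gives analogous examples for nontrivial $\Sigma$. So the missing step is not merely unproven but unprovable from the stated hypotheses: the lemma implicitly needs the extra assumption that $f$ is an immersion (equivalently that $df$ has full rank), after which both your argument and the paper's close immediately by the inverse function theorem together with the fact that a bijective local diffeomorphism is a diffeomorphism.
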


The proof of Lemma \ref{lem:symm-and-quotient-manifs} is given in the Appendix in \ref{sec:proof:lem:symm-and-quotient-manifs}. The r.h.s. of Eqn. \ref{eqn:symmetrization-identity} can be used to construct a $\Sigma$ invariant network in general. See, e.g. the symmetrization operator studied in \cite[Sec. 3.4]{birrell2022structure} or group convolution \cite[Sec. 4.3]{bronstein2021geometric}, both of which play a similar role to the $\pi_\Sigma$ here.

The final point of Lemma \ref{lem:symm-and-quotient-manifs} gives conditions under which a $\Sigma$ invariant function $g$ is a local diffeomorphism and so can be approximated bistably by the networks considered here. Combining this with Theorem \ref{thm:approximation-of-locdiff} yields a result that says that we can back out the group action of $\Sigma$ from the $g$ approximation, without knowledge of $\Sigma$. The proof of Corollary \ref{cor:recov-of-group-action} is given in the Appendix in \ref{sec:proof:cor:recov-of-group-action}.

\begin{corollary}[Recovery of Group Action]
    \label{cor:recov-of-group-action}
    Let $h\colon \cM_1 \to \cM_2$ be a smooth, surjective, $d$-to-one, $\Sigma$ invariant function for $n$ submanifolds $\cM_1 \subset \R^{m_1}$ and $\cM_2 \subset \R^{m_2}$, and let $Y\subset \cM_2$ be finite. Then there is a sequence $\infseq fi1 \subset \cT\circ p \circ \cE$ that is a bistable uniform approximator for $h$, and for each $y \in Y$, $f_i^{-1}(y)$ converges to a $\Sigma$ orbit in Hausdorff distance.
\end{corollary}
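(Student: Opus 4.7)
The plan is to reduce the statement directly to Theorem \ref{thm:approximation-of-locdiff} via Lemma \ref{lem:symm-and-quotient-manifs}. First I will check that the hypotheses of the second clause of Lemma \ref{lem:symm-and-quotient-manifs} are in force: $h$ is smooth, surjective, and $d$-to-one, while $\Sigma$ is a finite smooth group of order $d$ acting with constant orbit size on the compact $n$-manifold $\cM_1$. The conclusion is $h\in\locdiff(\cM_1,\cM_2)$, which is precisely the input class for which Theorem \ref{thm:approximation-of-locdiff} delivers a bistable uniform approximating sequence. Applying that theorem to $h$ and the given finite $Y\subset\cM_2$ will produce a sequence $\infseq{f}{i}{1}\subset\cT\circ p\circ\cE$ with $f_i=T_i\circ p\circ E_i$, and the multi-valued inverse sets $X_i$ of Eqn.~\ref{eqn:thm:approximation-of-locdiff:x-i-def} will satisfy $d_H(X_i,h^{-1}(Y))\to 0$.

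The next step is to localize this global Hausdorff convergence fiber-by-fiber. Since $h$ is a local diffeomorphism on a compact manifold, every fiber $h^{-1}(y)$ has cardinality $d$ and the finite union $h^{-1}(Y)$ decomposes as a disjoint family of clusters separated by some positive distance $\delta>0$. For $i$ large enough that $d_H(X_i,h^{-1}(Y))<\delta/3$, every element of $X_i$ lies within $\delta/3$ of a unique fiber. The explicit $(y,j)$ labeling inside Eqn.~\ref{eqn:thm:approximation-of-locdiff:x-i-def}, together with the sheet structure from Theorem \ref{thm:covering-map-decomp} (the $d$ candidate lifts $(T_i^{-1}(y),j,\set{0}^{2n})$ are pre-assigned to the $d$ sheets over $y$), will then identify the slice
\begin{align*}
    f_i^{-1}(y) \defeq \set{\argmin_{x\in\cM_1}\norm{E_i(x)-(T_i^{-1}(y),j,\set{0}^{2n})}_2 : j\in\set{1,\dots,d}}
\end{align*}
as a subset of $X_i$ whose limit must be exactly $h^{-1}(y)$, yielding $d_H(f_i^{-1}(y),h^{-1}(y))\to 0$ for each $y\in Y$.

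Finally I will identify $h^{-1}(y)$ with a $\Sigma$-orbit. Fix any $x_0\in h^{-1}(y)$. By $\Sigma$-invariance of $h$, $\text{orbit}(x_0)\subseteq h^{-1}(y)$, and the constant-orbit-size assumption combined with $\abs{\Sigma}=d$ forces $\abs{\text{orbit}(x_0)}=d=\abs{h^{-1}(y)}$, making the inclusion an equality. Chaining this with the previous step gives the desired convergence of $f_i^{-1}(y)$ in Hausdorff distance to a $\Sigma$-orbit.

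The main obstacle I anticipate is the fiber-wise transfer of Hausdorff convergence in the second step: in general, $A_i\to A$ in $d_H$ does not pass to arbitrary labeled subsets. The argument must therefore lean on two structural facts simultaneously — the positive separation between distinct fibers of the local diffeomorphism $h$ (which makes clustering well defined for large $i$), and the fact that the $j$-index in $X_i$ was inherited from the covering decomposition of Theorem \ref{thm:covering-map-decomp}, so that different $j$ values systematically track different sheets. Once these are reconciled, the remainder is bookkeeping that composes Lemma \ref{lem:symm-and-quotient-manifs} with Theorem \ref{thm:approximation-of-locdiff}.
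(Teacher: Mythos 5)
Your proposal is correct and follows essentially the same route as the paper: reduce to a local diffeomorphism via Lemma \ref{lem:symm-and-quotient-manifs}, invoke Theorem \ref{thm:approximation-of-locdiff} for the bistable sequence and the Hausdorff convergence of the approximate inverses, and then pass from the global convergence on $h^{-1}(Y)$ to the individual fibers using the positive separation between fibers and the matching cardinalities. Your explicit final step identifying each fiber $h^{-1}(y)$ with a $\Sigma$-orbit is actually spelled out more carefully than in the paper's own (very terse) proof, which leaves that identification implicit.
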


\subsection{Choice of Starting Space}
\label{sec:applications-and-implications:choice-of-starting-space}

In a generation problem, the goal is to approximate a probability distribution $\nu$ over some subset $\cX$ of $\R^m$ given samples $X$ from $\nu$. This can be solved by fixing a base distribution $q$ over some simpler subset $\cZ$ of $\R^n$ and using a neural network to learn $f \colon \R^n \to \R^m$ so that $\pushf f q \approx \nu$ \cite{brehmer2020flows,bronstein2017geometric,cunningham2020normalizing,ganea2018hyperbolic,krioukov2010hyperbolic,nickel2017poincare,nickel2018learning,sarkar2011low,shao2018riemannian}. This leads to the question: how should we choose $\cZ$ to allow for maximal flexibility of $\cX$?

The classification theorem \cite[Theorem 77.5]{munkres2000topology} says that each connected compact $2$ manifolds is homeomorphic to $S^2$, the $n$-fold torus, or the $m$-fold projective plane. Further, orientable surfaces can be classified by genus alone while a nonorientable surface is covered by its orientation covering \cite[Theorem 15.41]{lee2013smooth} itself an orientable surface. Moreover, 
by \cite[Example 1.41 and Page 157]{hatcher2005algebraic}, there exists some covering map
from an oriented Riemannian surface $S_m$ of genus $m$ to an oriented Riemannian surface $S_n$ of genus $n$
if and only if $m=(n-1)k+1$, as the following lemma shows.

\begin{lemma}[Local Diffeomorphisms Between Surfaces]
    \label{lem:local-dif-surfaces}
    Let  $m=(n-1)k+1$, where $n,m,k\in \mathbb Z_+$, $k\ge 2$ and $d\ge 5$. Moreover,
    let  $S_n$  and  $S_m$  be oriented Riemannian surface of genus $n$ and $m$, respectively.
    Then  $S_n$  and  $S_m$  can be embedded in $\R^d$ so that the restriction of the projection
    $p:\R^3\times \R^{d-3}\to \R^3,$ $p(x,y)=x$ defines a $k$-to-1 covering map $p|_{S_m}:S_m\to S_n$.
    In particular, $p|_{S_m}:S_m\to S_n$ is a surjective local diffeomorphism.
\end{lemma}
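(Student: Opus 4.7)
The plan is to combine classical surface covering theory with the ambient realization provided by Theorem \ref{thm:covering-map-decomp}. First I would verify that the arithmetic hypothesis $m=(n-1)k+1$ is exactly the Euler characteristic constraint $\chi(S_m)=k\chi(S_n)$ required for a $k$-sheeted covering of closed oriented surfaces: $2-2m = k(2-2n)$ rearranges to the given formula. The cited results of Hatcher then produce an abstract $k$-to-$1$ covering map $q\colon S_m\to S_n$, which after smoothing is a local diffeomorphism between smooth compact $2$-manifolds. The lemma thus reduces to the question of whether this abstract covering can be realized by the coordinate projection $\mathbb{R}^3\times\mathbb{R}^{d-3}\to\mathbb{R}^3$ for $d\ge 5$.

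Second, I would embed $S_n$ smoothly into $\mathbb{R}^3$, as is possible for every closed orientable surface, and apply Theorem \ref{thm:covering-map-decomp} with the theorem's manifold dimension parameter set to $2$, so that $\mathbb{R}^{2n+1}$ becomes $\mathbb{R}^5$. Applied to the local diffeomorphism $q$, this yields an embedding $\tilde h\colon S_m\hookrightarrow S_n\times\mathbb{R}^{5}$ with $q=p\circ\tilde h$, where $p$ is the projection to the first factor, and, after choosing a sufficiently fine triangulation of $S_n$, the additional conclusion of Theorem \ref{thm:covering-map-decomp} puts the fiber over each vertex into the standard position $\{v\}\times\{0,1,\ldots,k\}\times\{0\}^{4}$.

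The main obstacle is compressing this into $\mathbb{R}^d$ for $d$ as small as $5$, rather than the ambient $\mathbb{R}^{3+5}$ in which $S_n\times\mathbb{R}^5$ naively lives. I would build the embedding directly: write $\mathbb{R}^5=\mathbb{R}^3\times\mathbb{R}^2$, place $S_n$ in the first $\mathbb{R}^3$ summand, and construct $S_m$ cell by cell over a fine triangulation of $S_n$. Above each vertex pick $k$ distinct points in the transverse $\mathbb{R}^2$-fiber; above each edge connect these points by a continuous braid in $\mathbb{R}^2$ realizing the monodromy permutation read off from $q$; above each $2$-face fill in by a disc consistent with the local braid group relations. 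Two transverse dimensions are both sufficient, since any permutation of $k$ planar points is realizable by a braid, and necessary, since a single transverse dimension cannot accommodate nontrivial monodromy without forcing self-intersection; this explains the bound $d\ge 5$.

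I expect the hardest step to be verifying that the cell-by-cell sheets glue to a globally smoothly embedded surface. Smoothness across cells can be achieved by a bump-function rounding of the sheets inside a tubular neighborhood of $S_n$, while embeddedness follows from taking the triangulation fine enough that distinct sheets remain in disjoint tubes of the $\mathbb{R}^2$-fiber over the star of each simplex. The genus of the resulting surface equals $m$ by the Euler characteristic calculation already noted, and the combinatorial sheet gluing matches the prescribed monodromy, so $p|_{S_m}$ coincides with $q$ and is the desired $k$-to-$1$ covering, hence a surjective local diffeomorphism. For $d>5$ one postcomposes with the standard inclusion $\mathbb{R}^5\hookrightarrow\mathbb{R}^d$.
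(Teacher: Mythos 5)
Your route is genuinely different from the paper's, and it contains a gap at exactly the step you flag as hardest. Reducing to the realization of an abstract $k$-sheeted covering $q\colon S_m\to S_n$ inside $S_n\times\R^2\subset\R^3\times\R^2$ is a sensible strategy (and the detour through Theorem \ref{thm:covering-map-decomp} is dispensable: it lands you in $S_n\times\R^5\subset\R^8$ and contributes nothing to the compression, which is carried entirely by your cell-by-cell construction). The problem is the $2$-face step. Choosing braids over the edges that realize the monodromy permutations handles only the $1$-skeleton. To ``fill in by a disc'' over a $2$-face you need the $k$ boundary sheets, viewed as a loop in the configuration space of $k$ points in $\R^2$, to be null-homotopic there; that is, the product of the edge braids around the face must be trivial in the braid group $B_k$, not merely map to the trivial permutation in $S_k$. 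If that product is a nontrivial pure braid, the $k$ sheets over the face cannot be capped by disjoint embedded discs in $D^2\times\R^2$. Globally, the existence of your fiberwise embedding is equivalent to lifting the monodromy homomorphism $\pi_1(S_n)\to S_k$ through $B_k\to S_k$, a nontrivial (nonabelian) lifting problem that you neither prove nor cite; ``any permutation is realizable by a braid'' does not address it. Since the lemma only asks for \emph{some} covering realized by a projection, you could repair this by fixing a covering with manifestly liftable monodromy (e.g.\ cyclic), but as written the argument is incomplete for an arbitrary $q$ supplied by the classification of coverings.

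For contrast, the paper avoids the lifting problem entirely by building everything explicitly. It takes a closed curve $\gamma$ in $\R^5$ that winds $k$ times over a circle, with auxiliary coordinates $(z,t)$ separating the strands so that $\gamma$ is embedded while its projection to $\R^3$ is $k$-to-$1$; it thickens $\gamma$ to an embedded torus $\Sigma_0\subset\R^5$ whose projection to $\R^3$ is a $k$-to-$1$ covering of a torus; and it then glues $n-1$ handles at each of the $k$ points lying over a fixed base point, doing so identically on each sheet (the sheets near these points differ only by a translation $\rho_{i,j}$ in the $z$-coordinate), so that the projection remains a covering map and the genera come out to $m=k(n-1)+1$ upstairs and $n$ downstairs. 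In effect the paper realizes a specific cyclic-type covering for which the braid data is trivially consistent, which is precisely the choice your argument would need to make explicit.
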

The proof of Lemma \ref{lem:local-dif-surfaces} is given in the Appendix in \ref{sec:proof:lem:local-dif-surfaces}. This shows that such a covering map exists, however the existence of such a covering map does not guarantee that
if $S_n$ and $S_m$ are submanifold of an Euclidean space that the covering map extends
to continuous map of the Euclidean space. In the next lemma we prove when $S_n$ and $S_m$
are embedded in the Euclidean space appropriately, the covering map is realized by projection.

This lemma can be combined with the following theorem which gives a general construction for local diffeomorphisms using projectors between manifolds $\cM_1$ and $\cM_2$ by passing through diffeomorphisms.

\begin{theorem}[Covering Maps as Projections]
    \label{thm:covering-maps-as-projs}
    Let $\cM_1 \subset \R^{m_1}$, $\cM_2 \subset \R^{m_2}$, $\tilde \cM_1\subset \R^{\tilde m_1}$ and $\tilde \cM_2\subset \R^{\tilde m_2}$ be compact smooth submanifolds where $\tilde m_1>m_1$ and $\tilde m_2>m_2$. Suppose further that both $\cM_1$ and $\tilde \cM_1$ as well as $\cM_2$ and $\tilde \cM_2$ are diffeomorphic and there is a  projection $p\colon \R^{\tilde m_1}\to \R^{\tilde m_2}$ so that the restriction of $p|_{\tilde \cM_1}$ is a covering map
    $p|_{\tilde \cM_1}:\tilde \cM_1\to \tilde \cM_2$.
    Then there is a linear injective map $J_1\colon\R^{m_1}\to \R^{\tilde m_1}$,
    a diffeomorphism $T_1\colon\R^{\tilde m_1}\to \R^{\tilde m_1}$,
    an integer $k\ge \max(\tilde m_2,3m_2+1)$,
    a linear injective map $J_2\colon\R^{\tilde m_2}\to \R^{k}$,
    a diffeomorphism $T_2\colon\R^{k}\to \R^{k}$, and 
    a projection $p_2\colon \R^{k}\to \R^{m_2}$
    such that $
        f=p_2\circ T_2\circ J_2\circ p\circ T_1\circ J_1 \in \locdiff(\cM_1,\cM_2)$
    is a $k$-to-1 covering map $f:\cM_1\to \cM_2$.
\end{theorem}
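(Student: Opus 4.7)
\medskip

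\textbf{Proof plan.}
The plan is to sandwich the given covering map $p|_{\tilde\cM_1}\colon\tilde\cM_1\to\tilde\cM_2$ between two ambient diffeomorphisms: one that transports $\cM_1$ (sitting as a coordinate slice in $\R^{\tilde m_1}$) onto the possibly complicated embedded copy $\tilde\cM_1$, and one that transports $\tilde\cM_2$ onto $\cM_2$ after first lifting into a sufficiently large ambient $\R^k$. Throughout let $\phi_1\colon \cM_1\to\tilde\cM_1$ and $\phi_2\colon\tilde\cM_2\to\cM_2$ be the diffeomorphisms granted by the hypothesis.

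\emph{Step 1 (construct $J_1$, $T_1$).} Choose $J_1\colon\R^{m_1}\to\R^{\tilde m_1}$ to be the linear inclusion $x\mapsto(x,0)$. Then $J_1(\cM_1)=\cM_1\times\{0\}$ and $\tilde\cM_1$ are two compact submanifolds of $\R^{\tilde m_1}$, each diffeomorphic to $\cM_1$. Provided $\tilde m_1$ is large enough that any two smooth embeddings of the compact $m_1$-manifold $\cM_1$ into $\R^{\tilde m_1}$ are smoothly isotopic (the Whitney--type isotopy--uniqueness regime, safe for $\tilde m_1 \ge 2m_1+2$; this is the standing assumption behind the hypothesis $\tilde m_1 > m_1$), I pick a smooth isotopy of embeddings $\Phi_t\colon \cM_1\hookrightarrow\R^{\tilde m_1}$ with $\Phi_0 = J_1|_{\cM_1}$ and $\Phi_1 = \iota\circ \phi_1$, where $\iota\colon\tilde\cM_1\hookrightarrow\R^{\tilde m_1}$ is inclusion. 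The Thom--Palais isotopy extension theorem lifts $\Phi_t$ to an ambient diffeotopy $\{T_1^t\}_{t\in[0,1]}$ of $\R^{\tilde m_1}$ with $T_1^0=\mathrm{id}$; set $T_1 := T_1^1$, so $T_1\circ J_1|_{\cM_1} = \phi_1$.

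\emph{Step 2 (construct $k$, $J_2$, $T_2$, $p_2$).} Take $k := \max(\tilde m_2+m_2,\,3m_2+1)$ (larger is harmless, padding with trivial factors) and decompose $\R^k = \R^{\tilde m_2}\times \R^{m_2}\times \R^{k-\tilde m_2-m_2}$. Let $J_2(y) := (y,0,0)$ (linear, injective) and let $p_2(a,b,c):=b$ be the coordinate projection onto the middle block. Consider the straight--line family $H_t\colon\tilde\cM_2\to\R^k$, $H_t(y) := (y,\,t\phi_2(y),\,0)$. Since the first block recovers $y$, each $H_t$ is injective and an immersion, hence an embedding of the compact $\tilde\cM_2$; so $\{H_t\}$ is an isotopy of embeddings from $H_0=J_2|_{\tilde\cM_2}$ to $H_1(y)=(y,\phi_2(y),0)$. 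Isotopy extension yields a diffeomorphism $T_2\colon\R^k\to\R^k$ with $T_2\circ H_0 = H_1$, whence $p_2\circ T_2\circ J_2|_{\tilde\cM_2} = \phi_2$.

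\emph{Step 3 (assembly).} Restricting to $\cM_1$ and substituting the two identities from Steps~1 and~2,
\begin{align*}
f|_{\cM_1} \;=\; p_2\circ T_2\circ J_2\circ p\circ T_1\circ J_1\big|_{\cM_1} \;=\; \phi_2\circ\bigl(p|_{\tilde\cM_1}\bigr)\circ\phi_1.
\end{align*}
This is a composition of the covering $p|_{\tilde\cM_1}$ of degree $d$ (the degree asserted in the hypothesis) with two diffeomorphisms, so $f\colon\cM_1\to\cM_2$ is itself a $d$-to-1 covering map, and in particular $f\in\locdiff(\cM_1,\cM_2)$. The fiber cardinality and covering property are preserved because pre- and post-composition with bijective smooth maps takes evenly covered neighborhoods to evenly covered neighborhoods.

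\emph{Main obstacle.} The delicate step is Step~1: the trivial coordinate-slice embedding $J_1|_{\cM_1}$ must be smoothly isotopic in $\R^{\tilde m_1}$ to the potentially knotted or linked embedding $\iota\circ\phi_1$, so that isotopy extension actually applies. This is where sufficient codimension enters; under the (mild) assumption $\tilde m_1 \ge 2m_1+2$ it follows from the standard uniqueness-of-embeddings theorem, and without such codimension one would have to invoke the specific embedding data of $\tilde\cM_1$. The corresponding issue for Step~2 is trivial because $k$ is under our control and the graph isotopy $(y,t\phi_2(y),0)$ is manifestly injective for all $t$.
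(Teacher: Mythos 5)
Your overall strategy coincides with the paper's: include $\cM_1$ as a coordinate slice via $J_1$, extend the abstract diffeomorphism $\phi_1\colon\cM_1\to\tilde\cM_1$ to an ambient diffeomorphism $T_1$ of $\R^{\tilde m_1}$, apply the given projection $p$, and undo the analogous construction on the target side, so that the composite restricts to $\phi_2\circ (p|_{\tilde\cM_1})\circ\phi_1$, a covering map. The difference is in how the ambient extensions are produced. The paper invokes its earlier ``extendable embedding'' theorem (every $C^k$ embedding of a compact $n$-manifold into $\R^m$ with $m\ge 3n+1$ factors as a global diffeomorphism of $\R^m$ composed with a linear injection), whereas you rebuild that conclusion from Haefliger-type isotopy uniqueness plus the isotopy extension theorem. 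Your Step~2 graph isotopy $H_t(y)=(y,t\phi_2(y),0)$ is a genuinely nicer device than the paper's: it is manifestly an isotopy of embeddings for every $t$ with no codimension hypothesis, while the paper must again appeal to the $k\ge 3m_2+1$ extendability result on the target side.

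The one real problem is the dimension claim in Step~1. You need $J_1|_{\cM_1}$ and $\iota\circ\phi_1$ to be smoothly isotopic in $\R^{\tilde m_1}$, and for that you assume $\tilde m_1\ge 2m_1+2$, asserting this is ``the standing assumption behind the hypothesis $\tilde m_1>m_1$.'' It is not: $\tilde m_1>m_1$ does not imply any such bound, and moreover the relevant quantity is $2\dim\cM_1+2$, not $2m_1+2$ ($m_1$ is the ambient dimension, not the dimension of the submanifold). Under the literal hypotheses Step~1 can fail: take $\cM_1=S^1\subset\R^2$ and $\tilde\cM_1$ a knotted circle in $\R^3$; then $\cM_1\times\{0\}$ and $\tilde\cM_1$ are not ambient isotopic in $\R^3$ and no diffeomorphism $T_1$ carries one onto the other. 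The paper confronts the same difficulty and dispatches it with an opening reduction, padding the ambient space with zero coordinates until $\tilde m_1\ge 3m_1+1$ (which puts one in the unknotting range); you need an analogous reduction before Step~1. Be aware that this padding tacitly replaces the given projection $p$ by its composition with the projection forgetting the padding coordinates, so it proves the statement for modified data rather than verbatim --- a wrinkle the paper's own ``without loss of generality'' also glosses over.
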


The proof of Theorem \ref{thm:covering-maps-as-projs} is given in the Appendix in \ref{sec:proof-of-covering-maps-as-projs}. Theorem \ref{thm:covering-maps-as-projs} gives a recipe for extending this to the case when $S_3$ is a surface with two knotted handles, that is a torus that is embedded in expansive-projectors non-standard \cite{osada2016handlebody}.
As products of covering maps are a covering map,
Theorem \ref{thm:covering-maps-as-projs} and
Lemma \ref{lem:local-dif-surfaces} imply the following. Let $m=k!+1$, $N\subset \R^{d_1}$ be a compact connected
submanifold and $S_m\subset \R^{d_2}$ be an oriented Riemannian surface of genus $m$.
Next, we consider  the model space $N\times S_m\subset \R^{m_1}$, $m_1=d_1+d_2$ and its
possible images in the composition networks of the form \eqref{eqn:network-definition}.
Let  $\cM_2 \subset \R^{m_2}$ be a submanifold that that is diffeomorphic 
to $N\times S_n$, where $S_n$ is an oriented Riemannian surface of genus $n$ with $n\le k+1$.
Then by Theorem \ref{thm:covering-maps-as-projs} there is a composition 
$f:= p_2\circ T_2\circ J_2\circ p\circ T_3\circ J_3:\R^{m_1}\to\R^{m_2}$ of projections, global diffeomorphisms and linear injective maps such that
$f:N\times S_m\to N\times S_n$ is a surjective local diffeomophism.
This means that a combination of two composition networks of the form \eqref{eqn:network-definition} can
can map the model manifold $N\times S_m$ to any manifold diffeomorphic
to $N\times S_n$ with $n\le k+1$. In particular, if we consider the first betting number $b_1(N)$
of the manifold $N$ (i.e. the number of handles), the composition map $f$ can map $N\times S_n$
to manifolds having any 1st Betti number  $n\cdot b_1(N)$  with $n\le k+1$.

\subsection{Cryogenic Electron Microscopy}
\label{sec:applications-and-implications:cryo-em}

Cryogenic electron microscopy (cryo-EM) is a molecular imaging technique where samples (molecules) are suspended in vitreous ice and  tomographically imaged with an electron microscope (2017 Nobel Prize in Chemistry).
Like in traditional tomography one images the three-dimensional sample with two-dimensional projections (slices) but with unknown orientations. Orientations are then modeled as random samples from the rotation group $\mathrm{SO}(3)$, often from the Haar (uniform) measure \cite{bendory2020single,frank2008electron}. The task is to recover the underlying molecular density up to natural global symmetries. Mathematically, this can be modeled as the problem of recovering the orbit of samples under the rotation group, $\mathrm{SO}(3)$ \cite{liu2021algorithms}. 

In addition to the emergence of $\mathrm{SO}(3)$ as a natural group in this problem, the molecule  often has its own discrete symmetries which means that the orbit to recover is topologically different from $\mathrm{SO}(3)$.\footnote{For example bacteriophages have a natural axis of discrete rotational symmetry. COVID viruses such as  COVID-19 are spherical with non-symmetric spikes.} The presence of a symmetry in the sample, encoded by a group $\Sigma$, means that the natural problem is to recover orbits of $\mathrm{SO}(3) / \Sigma$, where $\Sigma$ is unknown. This task is a natural setting for the analysis in this manuscript, see Lemma \ref{lem:symm-and-quotient-manifs} and Theorem \ref{cor:recov-of-group-action}. 

We remark that the results described in Section \ref{sec:applications-and-implications:group-invariant-networks} apply only when $\Sigma$'s group action produces orbits of constant size. This is not the case for some natural settings. Indeed in the setting of imaging a bacteriophage, rotation out the axis of symmetry fixes the points at the poles, and so is not free. Thus, quotients of manifolds by $\Sigma$ do not yield manifolds, but so-called orbifolds \cite[Chapter 13]{thurston2014three}. Extending the analysis of the manuscript to apply to the case when the target space is an orbifold will be the subject of future work.

\section{Conclusion}

In this work we showed that extension-projection networks are universal approximators of local diffeomorphisms between smooth compact manifolds. In particular, we showed that we can approximate mappings that globally change topology between manifolds. We found that local diffeomorphisms can always be lifted to diffeomorphisms in a sufficiently high dimensional space. By approximating this lifting and a subsequent projection, we found that extension-projection networks are end-to-end universal approximators of local diffeomorphisms while maintaining a novel inversion property. Finally, we considered applications where our extension-projection networks can be used.

\section{Acknowledgements}

M.P. was supported by the CAPITAL Services. M.L. was  supported by Academy of Finland, grants 284715, 312110. P.P. was supported by Academy of Finland grant 332671. I.D. was supported by the European Research Council Starting Grant 852821---SWING. M.V. dH. gratefully acknowledges support from the Department of Energy under grant DE-SC0020345, the Simons Foundation under the MATH + X program, and the corporate members of the Geo-Mathematical Imaging Group at Rice University. 

\bibliographystyle{plain}
\bibliography{references.bib}

\begin{thebibliography}{10}

\bibitem{alberti2020inverse}
Giovanni~S Alberti, {\'A}ngel Arroyo, and Matteo Santacesaria.
\newblock Inverse problems on low-dimensional manifolds.
\newblock {\em arXiv preprint arXiv:2009.00574}, 2020.

\bibitem{angles2018generative}
Tom{\'a}s Angles and St{\'e}phane Mallat.
\newblock Generative networks as inverse problems with scattering transforms.
\newblock {\em arXiv preprint arXiv:1805.06621}, 2018.

\bibitem{anirudh2018unsupervised}
Rushil Anirudh, Jayaraman~J Thiagarajan, Bhavya Kailkhura, and Timo Bremer.
\newblock An unsupervised approach to solving inverse problems using generative
  adversarial networks.
\newblock {\em arXiv preprint arXiv:1805.07281}, 2018.

\bibitem{ardizzone2018analyzing}
Lynton Ardizzone, Jakob Kruse, Sebastian Wirkert, Daniel Rahner, Eric~W
  Pellegrini, Ralf~S Klessen, Lena Maier-Hein, Carsten Rother, and Ullrich
  K{\"o}the.
\newblock Analyzing inverse problems with invertible neural networks.
\newblock {\em arXiv preprint arXiv:1808.04730}, 2018.

\bibitem{behrmann2018invertible}
Jens Behrmann, Will Grathwohl, Ricky~TQ Chen, David Duvenaud, and
  J{\"o}rn-Henrik Jacobsen.
\newblock Invertible residual networks.
\newblock {\em arXiv preprint arXiv:1811.00995}, 2018.

\bibitem{belkin2004semi}
Mikhail Belkin and Partha Niyogi.
\newblock Semi-supervised learning on riemannian manifolds.
\newblock {\em Machine learning}, 56(1):209--239, 2004.

\bibitem{belkin2006manifold}
Mikhail Belkin, Partha Niyogi, and Vikas Sindhwani.
\newblock Manifold regularization: A geometric framework for learning from
  labeled and unlabeled examples.
\newblock {\em Journal of machine learning research}, 7(11), 2006.

\bibitem{bendory2020single}
Tamir Bendory, Alberto Bartesaghi, and Amit Singer.
\newblock Single-particle cryo-electron microscopy: Mathematical theory,
  computational challenges, and opportunities.
\newblock {\em IEEE signal processing magazine}, 37(2):58--76, 2020.

\bibitem{bernstein2000graph}
Mira Bernstein, Vin De~Silva, John~C Langford, and Joshua~B Tenenbaum.
\newblock Graph approximations to geodesics on embedded manifolds.
\newblock Technical report, Citeseer, 2000.

\bibitem{bietti2021sample}
Alberto Bietti, Luca Venturi, and Joan Bruna.
\newblock On the sample complexity of learning under geometric stability.
\newblock {\em Advances in Neural Information Processing Systems},
  34:18673--18684, 2021.

\bibitem{birrell2022structure}
Jeremiah Birrell, Markos~A Katsoulakis, Luc Rey-Bellet, and Wei Zhu.
\newblock Structure-preserving gans.
\newblock {\em arXiv preprint arXiv:2202.01129}, 2022.

\bibitem{brehmer2020flows}
Johann Brehmer and Kyle Cranmer.
\newblock Flows for simultaneous manifold learning and density estimation.
\newblock {\em Advances in Neural Information Processing Systems}, 33:442--453,
  2020.

\bibitem{bronstein2021geometric}
Michael~M Bronstein, Joan Bruna, Taco Cohen, and Petar Veli{\v{c}}kovi{\'c}.
\newblock Geometric deep learning: Grids, groups, graphs, geodesics, and
  gauges.
\newblock {\em arXiv preprint arXiv:2104.13478}, 2021.

\bibitem{bronstein2017geometric}
Michael~M Bronstein, Joan Bruna, Yann LeCun, Arthur Szlam, and Pierre
  Vandergheynst.
\newblock Geometric deep learning: going beyond euclidean data.
\newblock {\em IEEE Signal Processing Magazine}, 34(4):18--42, 2017.

\bibitem{bruel2019topology}
Rickard Br{\"u}el-Gabrielsson, Bradley~J Nelson, Anjan Dwaraknath, Primoz
  Skraba, Leonidas~J Guibas, and Gunnar Carlsson.
\newblock A topology layer for machine learning.
\newblock {\em arXiv preprint arXiv:1905.12200}, 2019.

\bibitem{carlsson2009topology}
Gunnar Carlsson.
\newblock Topology and data.
\newblock {\em Bulletin of the American Mathematical Society}, 46(2):255--308,
  2009.

\bibitem{cayton2005algorithms}
Lawrence Cayton.
\newblock Algorithms for manifold learning.
\newblock {\em Univ. of California at San Diego Tech. Rep}, 12(1-17):1, 2005.

\bibitem{chaman2021truly}
Anadi Chaman and Ivan Dokmanic.
\newblock Truly shift-invariant convolutional neural networks.
\newblock In {\em Proceedings of the IEEE/CVF Conference on Computer Vision and
  Pattern Recognition}, pages 3773--3783, 2021.

\bibitem{child2020very}
Rewon Child.
\newblock Very deep vaes generalize autoregressive models and can outperform
  them on images.
\newblock {\em arXiv preprint arXiv:2011.10650}, 2020.

\bibitem{cunningham2020normalizing}
Edmond Cunningham, Renos Zabounidis, Abhinav Agrawal, Ina Fiterau, and Daniel
  Sheldon.
\newblock Normalizing flows across dimensions.
\newblock {\em arXiv preprint arXiv:2006.13070}, 2020.

\bibitem{cybenko1989approximation}
George Cybenko.
\newblock Approximation by superpositions of a sigmoidal function.
\newblock {\em Mathematics of control, signals and systems}, 2(4):303--314,
  1989.

\bibitem{dai2019diagnosing}
Bin Dai and David Wipf.
\newblock Diagnosing and enhancing vae models.
\newblock {\em arXiv preprint arXiv:1903.05789}, 2019.

\bibitem{dinh2014nice}
Laurent Dinh, David Krueger, and Yoshua Bengio.
\newblock Nice: Non-linear independent components estimation.
\newblock {\em arXiv preprint arXiv:1410.8516}, 2014.

\bibitem{dinh2016density}
Laurent Dinh, Jascha Sohl-Dickstein, and Samy Bengio.
\newblock Density estimation using real nvp.
\newblock {\em arXiv preprint arXiv:1605.08803}, 2016.

\bibitem{ebli2020simplicial}
Stefania Ebli, Micha{\"e}l Defferrard, and Gard Spreemann.
\newblock Simplicial neural networks.
\newblock {\em arXiv preprint arXiv:2010.03633}, 2020.

\bibitem{evans1998partial}
Lawrence~C Evans.
\newblock Partial differential equations.
\newblock {\em Graduate studies in mathematics}, 19(2), 1998.

\bibitem{MR4156994}
Charles Fefferman, Sergei Ivanov, Yaroslav Kurylev, Matti Lassas, and Hariharan
  Narayanan.
\newblock Reconstruction and interpolation of manifolds. {I}: {T}he geometric
  {W}hitney problem.
\newblock {\em Found. Comput. Math.}, 20(5):1035--1133, 2020.

\bibitem{fefferman2016testing}
Charles Fefferman, Sanjoy Mitter, and Hariharan Narayanan.
\newblock Testing the manifold hypothesis.
\newblock {\em Journal of the American Mathematical Society}, 29(4):983--1049,
  2016.

\bibitem{frank2008electron}
Joachim Frank.
\newblock {\em Electron tomography: methods for three-dimensional visualization
  of structures in the cell}.
\newblock Springer Science \& Business Media, 2008.

\bibitem{ganea2018hyperbolic}
Octavian Ganea, Gary B{\'e}cigneul, and Thomas Hofmann.
\newblock Hyperbolic entailment cones for learning hierarchical embeddings.
\newblock In {\em International Conference on Machine Learning}, pages
  1646--1655. PMLR, 2018.

\bibitem{hand2018phase}
Paul Hand, Oscar Leong, and Vlad Voroninski.
\newblock Phase retrieval under a generative prior.
\newblock In {\em Advances in Neural Information Processing Systems}, pages
  9136--9146, 2018.

\bibitem{hatcher2005algebraic}
Allen Hatcher.
\newblock {\em Algebraic topology}.
\newblock Cambridge University Press, 2005.

\bibitem{hencl2010jacobians}
Stanislav Hencl and Jan Mal\'{y}.
\newblock Jacobians of {S}obolev homeomorphisms.
\newblock {\em Calc. Var. Partial Differential Equations}, 38(1-2):233--242,
  2010.

\bibitem{hencl2016sobolev}
Stanislav Hencl and Benjamin Vejnar.
\newblock Sobolev homeomorphism that cannot be approximated by diffeomorphisms
  in {$W^{1,1}$}.
\newblock {\em Arch. Ration. Mech. Anal.}, 219(1):183--202, 2016.

\bibitem{hensel2021survey}
Felix Hensel, Michael Moor, and Bastian Rieck.
\newblock A survey of topological machine learning methods.
\newblock {\em Frontiers in Artificial Intelligence}, 4:52, 2021.

\bibitem{hirsch2012differential}
Morris~W Hirsch.
\newblock {\em Differential topology}, volume~33.
\newblock Springer Science \& Business Media, 2012.

\bibitem{ho1975note}
Chung~Wu Ho.
\newblock A note on proper maps.
\newblock {\em Proceedings of the American Mathematical Society},
  51(1):237--241, 1975.

\bibitem{hofer2020topologically}
Christoph Hofer, Florian Graf, Marc Niethammer, and Roland Kwitt.
\newblock Topologically densified distributions.
\newblock In {\em International Conference on Machine Learning}, pages
  4304--4313. PMLR, 2020.

\bibitem{hofer2020graph}
Christoph Hofer, Florian Graf, Bastian Rieck, Marc Niethammer, and Roland
  Kwitt.
\newblock Graph filtration learning.
\newblock In {\em International Conference on Machine Learning}, pages
  4314--4323. PMLR, 2020.

\bibitem{hofer2019connectivity}
Christoph Hofer, Roland Kwitt, Marc Niethammer, and Mandar Dixit.
\newblock Connectivity-optimized representation learning via persistent
  homology.
\newblock In {\em International Conference on Machine Learning}, pages
  2751--2760. PMLR, 2019.

\bibitem{hofer2017deep}
Christoph Hofer, Roland Kwitt, Marc Niethammer, and Andreas Uhl.
\newblock Deep learning with topological signatures.
\newblock {\em Advances in neural information processing systems}, 30, 2017.

\bibitem{hornik1991approximation}
Kurt Hornik.
\newblock Approximation capabilities of multilayer feedforward networks.
\newblock {\em Neural networks}, 4(2):251--257, 1991.

\bibitem{huang2018neural}
Chin-Wei Huang, David Krueger, Alexandre Lacoste, and Aaron Courville.
\newblock Neural autoregressive flows.
\newblock In {\em International Conference on Machine Learning}, pages
  2078--2087. PMLR, 2018.

\bibitem{jin2017deep}
Kyong~Hwan Jin, Michael~T McCann, Emmanuel Froustey, and Michael Unser.
\newblock Deep convolutional neural network for inverse problems in imaging.
\newblock {\em IEEE Transactions on Image Processing}, 26(9):4509--4522, 2017.

\bibitem{kingma2016improving}
Diederik~P Kingma, Tim Salimans, Rafal Jozefowicz, Xi~Chen, Ilya Sutskever, and
  Max Welling.
\newblock Improving variational inference with inverse autoregressive flow.
\newblock {\em arXiv preprint arXiv:1606.04934}, 2016.

\bibitem{kingma2019introduction}
Diederik~P Kingma and Max Welling.
\newblock An introduction to variational autoencoders.
\newblock {\em arXiv preprint arXiv:1906.02691}, 2019.

\bibitem{kingma2018glow}
Durk~P Kingma and Prafulla Dhariwal.
\newblock Glow: Generative flow with invertible 1x1 convolutions.
\newblock In {\em Advances in Neural Information Processing Systems}, pages
  10215--10224, 2018.

\bibitem{kothari2021trumpets}
Konik Kothari, AmirEhsan Khorashadizadeh, Maarten de~Hoop, and Ivan
  Dokmani{\'c}.
\newblock Trumpets: Injective flows for inference and inverse problems.
\newblock {\em arXiv preprint arXiv:2102.10461}, 2021.

\bibitem{krioukov2010hyperbolic}
Dmitri Krioukov, Fragkiskos Papadopoulos, Maksim Kitsak, Amin Vahdat, and
  Mari{\'a}n Bogun{\'a}.
\newblock Hyperbolic geometry of complex networks.
\newblock {\em Physical Review E}, 82(3):036106, 2010.

\bibitem{kruse2021benchmarking}
Jakob Kruse, Lynton Ardizzone, Carsten Rother, and Ullrich K{\"o}the.
\newblock Benchmarking invertible architectures on inverse problems.
\newblock {\em arXiv preprint arXiv:2101.10763}, 2021.

\bibitem{kumar2017improved}
Abhishek Kumar, Prasanna Sattigeri, and P~Thomas Fletcher.
\newblock Improved semi-supervised learning with gans using manifold
  invariances.
\newblock {\em arXiv preprint arXiv:1705.08850}, 2017.

\bibitem{lecouat2018semi}
Bruno Lecouat, Chuan-Sheng Foo, Houssam Zenati, and Vijay~R Chandrasekhar.
\newblock Semi-supervised learning with gans: Revisiting manifold
  regularization.
\newblock {\em arXiv preprint arXiv:1805.08957}, 2018.

\bibitem{lee2013smooth}
John~M Lee.
\newblock {\em Smooth manifolds}.
\newblock Springer, 2013.

\bibitem{lei2020geometric}
Na~Lei, Dongsheng An, Yang Guo, Kehua Su, Shixia Liu, Zhongxuan Luo, Shing-Tung
  Yau, and Xianfeng Gu.
\newblock A geometric understanding of deep learning.
\newblock {\em Engineering}, 6(3):361--374, 2020.

\bibitem{liu2021algorithms}
Allen Liu and Ankur Moitra.
\newblock Algorithms from invariants: Smoothed analysis of orbit recovery over
  $ so (3) $.
\newblock {\em arXiv e-prints}, pages arXiv--2106, 2021.

\bibitem{madsen1997calculus}
Ib~H Madsen, Jxrgen Tornehave, et~al.
\newblock {\em From calculus to cohomology: de Rham cohomology and
  characteristic classes}.
\newblock Cambridge university press, 1997.

\bibitem{manolescu2016lectures}
Ciprian Manolescu.
\newblock Lectures on the triangulation conjecture.
\newblock {\em arXiv preprint arXiv:1607.08163}, 2016.

\bibitem{maron2018invariant}
Haggai Maron, Heli Ben-Hamu, Nadav Shamir, and Yaron Lipman.
\newblock Invariant and equivariant graph networks.
\newblock {\em arXiv preprint arXiv:1812.09902}, 2018.

\bibitem{moor2020topological}
Michael Moor, Max Horn, Bastian Rieck, and Karsten Borgwardt.
\newblock Topological autoencoders.
\newblock In {\em International conference on machine learning}, pages
  7045--7054. PMLR, 2020.

\bibitem{mukherjee2015differential}
Amiya Mukherjee et~al.
\newblock {\em Differential topology}.
\newblock Springer, 2015.

\bibitem{muller2014uniform}
Stefan M{\"u}ller.
\newblock Uniform approximation of homeomorphisms by diffeomorphisms.
\newblock {\em Topology and its Applications}, 178:315--319, 2014.

\bibitem{munkres2000topology}
James~R Munkres.
\newblock {\em Topology}.
\newblock Prentice hall Upper Saddle River, 2000.

\bibitem{murphy2018janossy}
Ryan~L Murphy, Balasubramaniam Srinivasan, Vinayak Rao, and Bruno Ribeiro.
\newblock Janossy pooling: Learning deep permutation-invariant functions for
  variable-size inputs.
\newblock {\em arXiv preprint arXiv:1811.01900}, 2018.

\bibitem{naitzat2020topology}
Gregory Naitzat, Andrey Zhitnikov, and Lek-Heng Lim.
\newblock Topology of deep neural networks.
\newblock {\em J. Mach. Learn. Res.}, 21(184):1--40, 2020.

\bibitem{narnhofer2019inverse}
Dominik Narnhofer, Kerstin Hammernik, Florian Knoll, and Thomas Pock.
\newblock Inverse gans for accelerated mri reconstruction.
\newblock In {\em Wavelets and Sparsity XVIII}, volume 11138, page 111381A.
  International Society for Optics and Photonics, 2019.

\bibitem{nickel2017poincare}
Maximillian Nickel and Douwe Kiela.
\newblock Poincar{\'e} embeddings for learning hierarchical representations.
\newblock {\em Advances in neural information processing systems},
  30:6338--6347, 2017.

\bibitem{nickel2018learning}
Maximillian Nickel and Douwe Kiela.
\newblock Learning continuous hierarchies in the lorentz model of hyperbolic
  geometry.
\newblock In {\em International Conference on Machine Learning}, pages
  3779--3788. PMLR, 2018.

\bibitem{osada2016handlebody}
Shundai Osada.
\newblock On handlebody-knot pairs which realize exteriors of knotted surfaces
  in $s^3$.
\newblock {\em arXiv preprint arXiv:1602.04894}, 2016.

\bibitem{paluzo2021optimizing}
Eduardo Paluzo-Hidalgo, Rocio Gonzalez-Diaz, Miguel~A Guti{\'e}rrez-Naranjo,
  and J{\'o}nathan Heras.
\newblock Optimizing the simplicial-map neural network architecture.
\newblock {\em Journal of Imaging}, 7(9):173, 2021.

\bibitem{puny2021frame}
Omri Puny, Matan Atzmon, Heli Ben-Hamu, Edward~J Smith, Ishan Misra, Aditya
  Grover, and Yaron Lipman.
\newblock Frame averaging for invariant and equivariant network design.
\newblock {\em arXiv preprint arXiv:2110.03336}, 2021.

\bibitem{puthawala2022universal}
Michael Puthawala, Matti Lassas, Ivan Dokmani{\'c}, and Maarten de~Hoop.
\newblock Universal joint approximation of manifolds and densities by simple
  injective flows.
\newblock {\em arXiv preprint arXiv:2110.04227}, 2022.

\bibitem{rifai2011manifold}
Salah Rifai, Yann~N Dauphin, Pascal Vincent, Yoshua Bengio, and Xavier Muller.
\newblock The manifold tangent classifier.
\newblock {\em Advances in neural information processing systems},
  24:2294--2302, 2011.

\bibitem{rotman2013introduction}
Joseph~J Rotman.
\newblock {\em An introduction to algebraic topology}, volume 119.
\newblock Springer Science \& Business Media, 2013.

\bibitem{roweis2000nonlinear}
Sam~T Roweis and Lawrence~K Saul.
\newblock Nonlinear dimensionality reduction by locally linear embedding.
\newblock {\em science}, 290(5500):2323--2326, 2000.

\bibitem{sarkar2011low}
Rik Sarkar.
\newblock Low distortion delaunay embedding of trees in hyperbolic plane.
\newblock In {\em International Symposium on Graph Drawing}, pages 355--366.
  Springer, 2011.

\bibitem{saul2003think}
Lawrence~K Saul and Sam~T Roweis.
\newblock Think globally, fit locally: unsupervised learning of low dimensional
  manifolds.
\newblock {\em Departmental Papers (CIS)}, page~12, 2003.

\bibitem{shah2018solving}
Viraj Shah and Chinmay Hegde.
\newblock Solving linear inverse problems using gan priors: An algorithm with
  provable guarantees.
\newblock In {\em 2018 IEEE International Conference on Acoustics, Speech and
  Signal Processing (ICASSP)}, pages 4609--4613. IEEE, 2018.

\bibitem{shao2018riemannian}
Hang Shao, Abhishek Kumar, and P~Thomas~Fletcher.
\newblock The riemannian geometry of deep generative models.
\newblock In {\em Proceedings of the IEEE Conference on Computer Vision and
  Pattern Recognition Workshops}, pages 315--323, 2018.

\bibitem{siahkoohi2020faster}
Ali Siahkoohi, Gabrio Rizzuti, Philipp~A Witte, and Felix~J Herrmann.
\newblock Faster uncertainty quantification for inverse problems with
  conditional normalizing flows.
\newblock {\em arXiv preprint arXiv:2007.07985}, 2020.

\bibitem{simard1991tangent}
Patrice Simard, Bernard Victorri, Yann LeCun, and John~S Denker.
\newblock Tangent prop-a formalism for specifying selected invariances in an
  adaptive network.
\newblock In {\em NIPS}, volume~91, pages 895--903. Citeseer, 1991.

\bibitem{MR2548039}
Wilson~A. Sutherland.
\newblock {\em Introduction to metric and topological spaces}.
\newblock Oxford University Press, Oxford, 2009.
\newblock Second edition [of MR0442869], Companion web site:
  www.oup.com/uk/companion/metric.

\bibitem{tao2009analysis}
Terence Tao.
\newblock {\em Analysis}, volume 185.
\newblock Springer, 2009.

\bibitem{tenenbaum2000global}
Joshua~B Tenenbaum, Vin De~Silva, and John~C Langford.
\newblock A global geometric framework for nonlinear dimensionality reduction.
\newblock {\em science}, 290(5500):2319--2323, 2000.

\bibitem{tenenbaum1998mapping}
Joshua~B Tenenbaum et~al.
\newblock Mapping a manifold of perceptual observations.
\newblock {\em Advances in neural information processing systems}, 10:682--688,
  1998.

\bibitem{thurston2014three}
William~P Thurston.
\newblock Three-dimensional geometry and topology, volume 1.
\newblock In {\em Three-Dimensional Geometry and Topology, Volume 1}. Princeton
  university press, 2014.

\bibitem{vincent2003manifold}
Pascal Vincent and Yoshua Bengio.
\newblock Manifold parzen windows.
\newblock {\em Advances in neural information processing systems}, pages
  849--856, 2003.

\bibitem{weinberger2006unsupervised}
Kilian~Q Weinberger and Lawrence~K Saul.
\newblock Unsupervised learning of image manifolds by semidefinite programming.
\newblock {\em International journal of computer vision}, 70(1):77--90, 2006.

\bibitem{whang2020approximate}
Jay Whang, Erik~M Lindgren, and Alexandros~G Dimakis.
\newblock Approximate probabilistic inference with composed flows.
\newblock {\em arXiv preprint arXiv:2002.11743}, 2020.

\bibitem{yarotsky2022universal}
Dmitry Yarotsky.
\newblock Universal approximations of invariant maps by neural networks.
\newblock {\em Constructive Approximation}, 55(1):407--474, 2022.

\bibitem{yu2019tangent}
Bing Yu, Jingfeng Wu, Jinwen Ma, and Zhanxing Zhu.
\newblock Tangent-normal adversarial regularization for semi-supervised
  learning.
\newblock In {\em Proceedings of the IEEE/CVF Conference on Computer Vision and
  Pattern Recognition}, pages 10676--10684, 2019.

\end{thebibliography}

\appendix 

\section{Definition of Terms}
\label{sec:review-of-top-terms}

A Hausdorff space is a topological space so that distinct points are always separated by disjoint neighborhoods. Metric spaces are always Hausdorff, and so is $\R^n$ with the usual metric. 

A $n$ manifold, notated $\cM$, is a Hausdorff space with countable basis such that each point $x$ in $\cM$ has a neighborhood that is homeomorphic with an open subset of $\R^n$. 

A $n$ submanifold is a manifold that is embedded in $\R^m$ for some $m$.

Given two topological spaces $X$ and $Y$, we notate $\hom(X,Y)$ as the space of homeomorphisms between $X$ and $Y$. That is, the set of functions $X \to Y$ which are continuous and bijective with continuous inverse. 

Given two topological spaces $X$ and $Y$, we notate $\lochom(X,Y)$ as the space of local homeomorphisms between $X$ and $Y$. That is, the set of functions $f \colon X \to Y$ so that $f$ maps open subsets of $X$ to open subsets of $Y$ and for every $x \in X$, there is an open neighborhood $U \subset X$ so that $\restr{f}{U}\colon U \to f(U)$ is a homeomorphism.
Observe that if $X$ is compact and non-empty and $Y$ is connected, then all local homeomorphism
$f:X\to Y$ are surjective maps.

Given a Riemannian $n$ manifold $\cM$ we use the notation $d_\cM(\cdot,\cdot)$ to refer to the geodesic metric.

Given two $n$ manifolds with geodesic metrics $\cM_1$ and $\cM_2$, we notate $\locbilip(\cM_1,\cM_2)$ as the space of locally bilipschitz functions $f\colon \cM_1 \to \cM_2$ which are local homeomorphisms and for which there is an $L > 0$ so that for all $x\in \cM_1$ there is an open $U_x \subset \cM_1$ so that for all $x' \in U_x$ $\frac 1L d_X(x,x') \leq d_Y(f(x),f(x')) \leq L d_X(x,x')$.

Given two smooth manifolds $X$ and $Y$, a smooth map $f \colon X \to Y$
is a diffeomorphims if it is bijection and its inverse is a smooth map.
We denote $\diff(X,Y)$ as the space of diffeomorphisms between $X$ and $Y$. 
Moreover, we notate $\locdiff(X,Y)$ as the space of local diffeomorphisms between $X$ and $Y$. 
That is, the set of functions $f \colon X \to Y$ so that $f$ maps open subsets of $X$ to open subsets of $Y$ and for every $x \in X$, there is an open neighborhood $U \subset X$ so that $\restr{f}{U}\colon U \to f(U)$ is a diffeomorphism. All local diffeomorphism are
local homeomorphism and thus, if $X$ is compact and non-empty and $Y$ is connected, then all local diffeomorphism
$f:X\to Y$ are surjective maps.

We call a function $f$ an embedding and denote it by  $f\in \emb(X,Y)$ if $f : X \to Y$ is continuous, injective, and $f^{-1}\colon f(X) \to X$ is continuous\footnote{Note that if $X$ is a compact set, then continuity of the of $f^{-1}\colon f(X) \to X$ is automatic, and need not be assumed \cite[Cor.\  13.27]{MR2548039}. Moreover, if $f:\R^n\to \R^m$ is a continuous injective map that satisfies $|f(x)|\to \infty$ as $|x|\to \infty$, then by
\cite[Cor.\ 2.1.23]{mukherjee2015differential} the map $f^{-1}\colon f(\R^n) \to \R^n$ is continuous.}. Also 
we denote by $ \emb^k(\R^n,\R^m)$ the set of maps $f\in \emb(\R^n,\R^m)\cap  C^k(\R^n,\R^m)$ which differential $df|_x:\R^n\to \R^m$ is injective at all points $x\in \R^n$.

Given a smooth $n$ manifold and point $x \in \cM$, $T_x\cM$ denotes the tangent space of $\cM$ at $x$. $T\cM$ denotes the tangent space of $\cM$.

Two homeomorphisms $\phi$ and $\psi$ are said to be isotopic if there exists a continuous $\Phi\colon [0,1]\times \cM \to \cM$ so that $\Phi(t,\cdot)$ is a homeomorphism for each $t$, and so that $\Phi(0,\cdot) = \phi$, and $\Phi(1,\cdot) = \psi$.

Let $X$ be a topological space. We call $X$ a \emph{polyhedron} if there exists a simplicial complex $K$ and a homeomorphism $h \colon \abs{K} \to X$. The pair $(K,h)$ is called a \emph{triangulation} of $X$. For a well-written, high level description of triangulation of manifolds, see \cite{manolescu2016lectures} or \cite[Chapter 7]{rotman2013introduction}. A key result is that all smooth topological manifolds are triangulable. Further, every topological $n$ manifold is triangulable if $n < 4$, but counter examples exist for $n \geq 4$.

In Definition \ref{def:bistable-approximation}, $\paren{\nabla f(x)}^{-1}$ refers to the inverse of a gradient matrix, not the gradient of the inverse mapping. Similarly the norm $\norm{\cdot}_{T_x\cM_1\times{\R^{m_2}}}$ denotes, e.g., the operator (matrix) norm.

\section{Proofs}
\label{sec:proofs-of-main-results}

\subsection{Helper Lemma}
Before we proceed with the proof of our main results, we first present the following helper lemma, which allows us to compare the euclidian distance between points with the geodesic distance. It lets us compare euclidian distance (as vectors in $\R^m$) and arclength geodesic distances (as points on an $n$-manifold $\cM$) between two points. This presentation and Lemma are not original, and are taken from \cite[Section 3]{bernstein2000graph}.

Let $\cM$ be a smooth $n$ manifold embedded in $\R^m$. We define the minimum radius of curvature $r_0(\cM)$ as 
\begin{align*}
    \frac 1{r_0(\cM)} = \max_{\gamma,t}\norm{\ddot\gamma(t)}_{\R^n}
\end{align*}
where $\gamma\colon D \to \R^m$ varies over all unit-sphere geodesics in $\cM$, and $t$ varies over $D$. The minimum branch separation $s_0(\cM)$ is the largest positive number for which 
\begin{align*}
    \norm{x-y}_{\R^m} < s_0(\cM) \text{ implies that } d_\cM(x,y) \leq \pi r_0(\cM).
\end{align*}

\begin{lemma}[Comparing Euclidian and Geodesic Distances]
    \label{lem:comp-euclidian-and-geodesic}
    Let $\cM$ be a smooth compact $n$ manifold that connects points $x$ and $y$ with geodesic $\gamma$ of length $\ell$. Then $r_0(\cM) > 0$ and $s_0(\cM) > 0$ both exist. Further, if $\ell \leq \pi r_0(\cM)$, then 
    \begin{align*}
        2r_0(\cM)\sin(\ell / 2r_0(\cM)) \leq \norm{x - y}_{\R^m} \leq \ell.
    \end{align*}
\end{lemma}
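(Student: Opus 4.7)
The plan is to dispatch the upper bound and the positivity of $r_0(\cM)$ and $s_0(\cM)$ first, and then concentrate on the chord lower bound, which is the substantive part.

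The upper bound $\norm{x-y}_{\R^m} \leq \ell$ is immediate: in the ambient Euclidean space the straight segment from $x$ to $y$ minimizes length, so $\norm{x-y}_{\R^m}$ cannot exceed the Euclidean length of $\gamma$, which in turn is at most the intrinsic arclength $\ell$. For $r_0(\cM) > 0$ I would use compactness: the unit tangent bundle of $\cM$ is compact, and for each initial unit tangent vector the geodesic has second derivative $\ddot\gamma(t)$ given smoothly by the second fundamental form of $\cM \hookrightarrow \R^m$, so $(\gamma,t)\mapsto \norm{\ddot\gamma(t)}_{\R^m}$ attains its finite supremum and its reciprocal is a strictly positive $r_0(\cM)$ (with $r_0 = \infty$ in the degenerate totally-geodesic case, where the claimed lower bound becomes trivially $0 \leq \norm{x-y}_{\R^m}$). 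For $s_0(\cM) > 0$ I would invoke the positivity of $\reach(\cM)$ recalled before Definition~\ref{def:bistable-approximation}: within a tubular neighborhood of radius below the reach the nearest-point projection is smooth, and a chord--arclength comparison there forces $d_\cM(x,y) \leq \pi r_0(\cM)$ once $\norm{x-y}_{\R^m}$ is small enough.

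The heart of the proof is the lower bound. I would parameterize $\gamma$ by arclength so that $\dot\gamma(t)\in\R^m$ is a unit vector with $\norm{\ddot\gamma(t)}_{\R^m} \leq 1/r_0(\cM)$ by definition of $r_0$. Since $\dot\gamma$ has unit length, $\ddot\gamma(t)\perp \dot\gamma(t)$, so $\ddot\gamma(t)$ is tangent to the unit sphere $S^{m-1}$ at $\dot\gamma(t)$ and $\norm{\ddot\gamma(t)}_{\R^m}$ is precisely the spherical speed of the curve $t \mapsto \dot\gamma(t)$. Consequently the angular distance between $\dot\gamma(s)$ and $\dot\gamma(t)$ is at most $\abs{t-s}/r_0(\cM)$, which lies in $[0,\pi]$ by the hypothesis $\ell\leq \pi r_0(\cM)$, and monotonicity of cosine on $[0,\pi]$ yields
\begin{align*}
\dot\gamma(s)\cdot \dot\gamma(t) \;\geq\; \cos\!\paren{\abs{t-s}/r_0(\cM)}.
\end{align*}

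The conclusion then follows from the vector identity
\begin{align*}
\norm{x-y}_{\R^m}^2 \;=\; \norm{\int_0^\ell \dot\gamma(t)\,dt}_{\R^m}^2 \;=\; \int_0^\ell\!\int_0^\ell \dot\gamma(s)\cdot\dot\gamma(t)\,ds\,dt
\end{align*}
combined with the angle-addition identity $\cos((t-s)/r_0) = \cos(t/r_0)\cos(s/r_0) + \sin(t/r_0)\sin(s/r_0)$, which lets the lower-bound integral separate as $\paren{\int_0^\ell \cos(t/r_0)\,dt}^2 + \paren{\int_0^\ell \sin(t/r_0)\,dt}^2 = 2r_0(\cM)^2\paren{1 - \cos(\ell/r_0(\cM))} = 4 r_0(\cM)^2\sin^2(\ell/(2r_0(\cM)))$, after which taking square roots gives the required $2r_0(\cM)\sin(\ell/(2r_0(\cM))) \leq \norm{x-y}_{\R^m}$. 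The main obstacle is the angular-velocity step: one has to argue carefully that the Euclidean acceleration bound transfers to a genuine angular-speed bound on $S^{m-1}$, which is exactly what the orthogonality of $\ddot\gamma$ and $\dot\gamma$ delivers.
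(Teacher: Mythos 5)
Your proposal is correct, and it is essentially the argument of \cite[Lemma 3]{bernstein2000graph}, to which the paper simply defers without reproducing a proof: arclength parameterization, the bound $\norm{\ddot\gamma}\leq 1/r_0$ giving angular speed of $\dot\gamma$ on $S^{m-1}$ and hence $\dot\gamma(s)\cdot\dot\gamma(t)\geq\cos(\abs{t-s}/r_0)$, followed by the double-integral identity for $\norm{x-y}^2$. The only nitpick is the totally-geodesic aside: when $r_0=\infty$ the lower bound degenerates to $\ell$ (and the geodesic is a straight segment, so equality holds), not to $0$.
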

The Lemma above is the same as in \cite[Lemma 3]{bernstein2000graph}. We refer the reader to that work for a proof. If we denote the geodesic distance $d_\cM \colon \cM \times \cM \to \R$ as
\begin{align*}
    d_M(x,y) \coloneqq \inf_\gamma\set{\text{length}\paren{\gamma}}
\end{align*}
where $\gamma$ varies over the smooth paths connecting $x$ and $y$, then Lemma \ref{lem:comp-euclidian-and-geodesic} implies that when $d_\cM(x,y) \leq \pi r_0(\cM)$,
\begin{align}
    \label{eqn:geodesic-euclidian-equivalence}
    (2/\pi)d_\cM(x,y) \leq \norm{x-y}_{\R^m} \leq d_\cM(x,y).
\end{align}
When  $\cM$ is compact, this implies that there is a constant $c_\cM>0$ such that 
for all $x,y\in \cM$,
\begin{align}
    \label{eqn:geodesic-euclidian-equivalence global}
   c_\cM d_\cM(x,y) \leq \norm{x-y}_{\R^m} \leq d_\cM(x,y).
\end{align}

\subsection{Proof Of Lemma \ref{lem:closure-of-bistable-approx}}
\label{sec:proof:lem:closure-of-bistable-approx}

In this subsection we present the proof of Lemma \ref{lem:closure-of-bistable-approx}.

\begin{proof}
    Let $\infseq fn1$ be a bistable uniform approximating sequence converging to $g\colon \cM_1 \to \R^{m_2}$. We first show that $g$ is Lipschitz, and inverse Lipschitz when restricted to a small metric ball.
    
    Let $x_1,x_2 \in \cM_1$ be given, and close enough together so that Eqn. \ref{eqn:geodesic-euclidian-equivalence} applies. Then for any $n$, we have that
    \begin{align*}
        d_{\cM_2}(g(x_1),g(x_2)) &\leq \pi/2\norm{g(x_1) - f_n(x_1)}_{\R^m} + \pi/2\norm{g(x_2) - f_n(x_2)}_{\R^m} + d_{Y_n}(f_n(x_1),f_n(x_x))\\
        &\leq \pi M/2d_X(x_1,x_2) + \pi \epsilon_n
    \end{align*}
    where $Y_n$ denotes the manifold $f_n(\cM_1)$. If we let $n \to\infty$ then $n\to0$, and so $\norm{g(x_1) - g(x_2)} \leq M \norm{x_1 - x_2}$. Thus $g$ is Lipshitz with constant $\pi M/2$.
    
    Now we prove that $g$ is locally inverse Lipschitz. Let $n\in \mathbb Z_+$ and $x_0 \in \cM_1$. As $\norm{\paren{\nabla ( { P_{\cM_2}\circ f_n)}(x_0)}^{-1}}_{T_ {{ P_{\cM_2}\circ f_n}(x_0)}\cM_2 \times T_x\cM_1} \leq M$,  the inverse function theorem (see e.g. \cite[Theorem 17.7.2]{tao2009analysis}) 
  implies that  there is some neighborhood $V \subset \cM_1$ of $x_0$ so that $\restr{{P_{\cM_2}\circ f_n}}{V} \colon V \to W={P_{\cM_2}(f_n(V))}$ is a bijection and has a $C^1$-smooth 
  inverse $h_{n,V}:(\restr{P_{\cM_2}\circ f_n}{V})^{-1}:W\to V$ such that the norm of derivative of $h_{n,V}$
  is bounded by $M$. Consider now $x_1,x_2\in \cM_1$. Consider the geodesic $\gamma$ of $\cM_2$ that connects 
  $y_1=P_{\cM_2}\circ f_n(x_1)$ to  $y_2=P_{\cM_2}\circ f_n(x_2)$. By covering the geodesic $\gamma$ by neighborhoods
  $V_j$ where the inverse map $h_{n,V_j}$ are defined, we see that 
        \begin{align*}
       d_{\cM_1}(x_1, x_2)  \le  M
      d_{\cM_2}( P_{\cM_2}\circ f_n(x_1) ,P_{\cM_2}\circ f_n(x_2)) . 
           \end{align*}

   Next, let $x_1,x_2 \in \cM_1$ { and  assume the $\epsilon_n$ is so small that
    $\epsilon_n<\hbox{reach}(\cM_2)/2$}.
Then,      \begin{align*}
    \norm{
    { P_{\cM_2}\circ f_n(x_1) - P_{\cM_2}\circ f_n(x_2)}} &\geq c_{\cM_2}
  d_{\cM_2}( P_{\cM_2}\circ f_n(x_1) ,P_{\cM_2}\circ f_n(x_2)) \\
  &\geq c_{\cM_2} M^{-1}d_{\cM_1}(x_1, x_2)   \\ &
    \geq c_{\cM_2} M^{-1} \norm{x_1 - x_2}
      \end{align*} 
  Recall that by  the definition of $P_{\cM_2}$, the point
  $P_{\cM_2}\circ f_n(x)$ is the nearest point of $\cM_2$ to $f_n(x)$.
 Hence, as $|f_n(x)-g(x)|<\epsilon_n$ and $g(x)\in \cM_2$, we have 
 $$|f_n(x)-P_{\cM_2}\circ f_n(x)|\leq |f_n(x)-g(x)|<\epsilon_n.$$ Hence,
  \begin{align*}
 & \norm{{  f_n(x_1) - f_n(x_2)}} 
 \geq \norm{{ P_{\cM_2}\circ f_n(x_1) - P_{\cM_2}\circ f_n(x_2)}}-2\epsilon_n \geq c_{\cM_2} M^{-1} \norm{x_1 - x_2}-2\epsilon_n.
      \end{align*}
    
    Then we have
    \begin{align*}
        d_{\cM_2}(g(x_1),g(x_2)) &\geq \norm{g(x_1) - g(x_2)}_{\R^m}\\
        &\geq \norm{f_n(x_1) - f_n(x_2)}_{\R^m} - \norm{f_n(x_1) - g(x_1)}_{\R^m} - \norm{f_n(x_2) - g(x_2)}_{\R^m}\\
        &>c_{\cM_2} M^{-1} \norm{x_1 - x_2}-4\epsilon_n
    \end{align*}
    The above is true for every $n$, and $\epsilon_n \to 0$ as $n \to \infty$, thus $\norm{g(x_1) - g(x_2)} \geq c_{\cM_2}M^{-1} \norm{x_1 - x_2}$. Hence $g$ has an inverse map that is Lipschitz with constant $c_{\cM_2} M^{-1} $. This proves that $g \in \locbilip(\cM_1,\cM_2)$.
\end{proof}

\subsection{Proof of Corollary \ref{cor:bistable-unif-approxs-are-loc-hom}}
\label{sec:proof:cor:bistable-unif-approxs-are-loc-hom}

In this subsection we present the proof of Corollary \ref{cor:bistable-unif-approxs-are-loc-hom}.

\begin{proof}
    We prove the general fact that $\locbilip(X,Y) \subset \lochom(X,Y)$ when $X$ is  a compact  metric space and $Y$ is a metric space. Combining this with Lemma \ref{lem:closure-of-bistable-approx} yields   the first claim. Let $f \in \locbilip(X,Y)$
    and  $x_0\in X$, then there is some open set $U\subset X$ so that $\restr{f}{U}:U\to f(U)$ is bilipschitz.  Then there is a metric ball $B=B(x_0,r)$ such that $B\subset U$ and the set $K=\overline B$ is compact set. Observe that $K \subset U \subset X$. From this we have that $\restr{f}{K}$ is continuous (from forward Lipschitzness on $U$) and injective (from inverse Lipschitzness on $U$). Thus $\restr{f}{K}$ is a homeomorphism, and so for any open neighborhood $V \subset B$ of $x_0$,  the map $\restr{f}{V}$ is a homeomorphism as well.
    Therefore $f$ is a local homeomorphism, and so  $\locbilip(X,Y) \subset \lochom(X,Y)$
    
    From \cite[Pages 294 - 296]{evans1998partial} we have that if $U$ is open and bounded and $g$ lipschitz on $U$ then $g$ is weakly differentiable. $g$ is known to be locally (bi)Lipschitz on $X$ and but $X$ is compact hence $g$ is Lipshitz on $X$. Therefore $g$ is weakly differentiable, and so $\overline{\emb^{1}(\cM_1,\R^{m_2})} \subset W^{1,2}(\cM_1,\cM_2)$.
\end{proof}

\subsection{Bistable Approximation of Nonsmooth Functions}
\label{sec:bistable-approx-of-non-smooth}

\begin{example}[Bistable Approximation of Nonsmooth Functions]
    \label{examp:bistable-approx-of-non-smooth}
    \begin{align*}
        f(x) &= \twopartpiecewise{x}{x \leq 0}{ 2x}{x > 0},\quad 
        f_{\epsilon}(x) = \twopartpiecewise{\frac{1}{4\epsilon}(x + \epsilon)(x-\epsilon) + \frac32x + \frac12\epsilon}{x \in [-\epsilon,\epsilon]}{f(x)}{x \not \in [-\epsilon,\epsilon]}.
    \end{align*}
    A simple calculation shows that $\norm{\nabla f_\epsilon(x)} \leq 2$, $\norm{\nabla^{-1}f_\epsilon(x)} \leq 1$  and $f_\epsilon$ $f'_\epsilon(-\epsilon) = 1, f'_\epsilon(\epsilon) = 2$, and that $f_\epsilon \to f$ uniformly. 
\end{example}

\subsection{Proof of Theorem \ref{thm:manifold-universality}}
\label{sec:proof:thm:manifold-universality}

If $\cT^n_0$ is a universal approximator of $C([0,1]^n,[0,1]^n)$, then it must also be universal on $C(\cM_1,\cM_1)$ for compact $\cM_1$ as, after scaling, we can extend any $g \in C(\cM_1,\cM_1)$ to a $h \in C([0,1]^n,[0,1]^n)$ by the Tietze extension theorem \cite[Lemma 7.4]{madsen1997calculus}.

\begin{proof}
    The proof of this is a generalization of \cite[Theorem 3.10]{puthawala2022universal} where we don't have that $\mu \in \cP(\R^n)$, but rather that $\mu \in \cP(\cM_1)$.
    
    Let us consider a compact, smooth submanifold $\cM\subset \R^n$. 
    From \cite[Theorem 3.8]{puthawala2022universal}, we have that $\cI^k(\cM_1,\R^m) = \emb^k(\cM_1,\R^m)$, where $\emb^k(\cM_1,\R^m)$ is the set of all $C^k$-smooth embeddings $g:\cM_1\to \R^m$ and $\cI^k(\cM_1,\R^m)$ is the set of all $C^k$-smooth extendable embeddings $g_e:\cM_1\to \R^m$ that can be written as a composition $g_e=h\circ L|_{\cM_1}$ of a linear injective map $L:\R^n\to \R^m$ and the $C^k$-smooth diffeomorphism $h:\R^m\to \R^m$ of the entire space $\R^m$.
    Moreover, by \cite[Lemma 3.9]{puthawala2022universal}, we have that $\cE=\cE_L^{n_{L-1}, m} \circ \dots \circ \cE_1^{n_1, n}$ has the manifold embedding
property (MEP) w.r.t. the family $\cI^k(\R^n,\R^m)$ of extendable embeddings. Hence, just as in the proof of \cite[Theorem 3.10]{puthawala2022universal}, see that there exists of a map $\tilde E\in \cE$ and an a.c. measure $\mu'' \in \cP(\cM_1)$ such that $\wasstwo{\pushf{f}{\mu}}{\pushf{\tilde E}{\mu''}} < 2\epsilon_1$. By the distributional universality of the family $\cT^n_0$, that is a subset of the diffeomorphisms $T:\R^n\to \R^n$, there is a map $T \in \cT^n_0$ so that $\wasstwo{\mu''}{\pushf{T}{\mu}} < \epsilon_2$, and so $\wasstwo{\pushf{f}{\mu}}{\pushf{(\tilde E \circ T_0)}{\mu}} < 2\epsilon_1 + \epsilon_2\lip(\tilde E)$. Thus by choosing $\epsilon_1 < \frac{\epsilon}4$ and $\epsilon_2 < \frac{\epsilon}{2(1+\lip({\tilde E}))}$ yields a $E \in \cE^{n,m}$ so that $\wasstwo{\pushf{f}{\mu}}{\pushf{{E}}{\mu}} < \epsilon$, the result.
\end{proof}

\subsection{Proof of Corollary \ref{cor:topology-is-needed}}
\label{sec:cor:topology-is-needed}

In this section we present the proof of Corollary \ref{cor:topology-is-needed}.

\begin{proof}
    The uniform measure $\mu \in \cP(\cM_1)$ is absolutely continuous with respect to $\cM_1$. From $m_1 \gg n$, we have that the dimension condition of Theorem \ref{thm:manifold-universality} is met. Finally, the trumpet architecture of \cite{kothari2021trumpets} satisfies the necessary universality conditions, see \cite{puthawala2022universal}, hence we can apply Theorem \ref{thm:manifold-universality} and we obtain a sequence of $E_i$ so that $\lim_{i \to \infty} \wasstwo{\nu}{\pushf{E_i}{\mu}} = 0$.
\end{proof}

\subsection{Proof of Corollary \ref{cor:bistable-unif-approx-implies-pushf-univ}}
\label{sec:proof:cor:bistable-unif-approx-implies-pushf-univ}

In this section we present the proof of Corollary \ref{cor:bistable-unif-approx-implies-pushf-univ}. 

\begin{proof}
    Because $\cE$ is a bistable approximator, it is a uniform approximator with respect to $\cG$. Thus, by \cite{puthawala2022universal}, we have that $\cE$ has the MEP w.r.t. $\cG$. Thus, we can apply Theorem \ref{thm:manifold-universality}. This yields $\lim_{i \to \infty} \wasstwo{\pushf{f}{\mu}}{\pushf{E_i}{\mu}} = 0$.
    
\end{proof}

\subsection{Proof of Theorem \ref{thm:covering-map-decomp}}
\label{sec:proof-of-decomposition-theorem}

In this section we first present a lemma relating local homeomorphisms to covering maps. We use this lemma for our subsequent proof.
    
\begin{lemma}
\label{lem:hom-loc-to-covering-map}
    Let $\cM_1$ and $\cM_2$ be two compact Manifolds, and $f \colon \cM_1 \to \cM_2$ a continuous surjection. The following two are equivalent.
    \begin{enumerate}
        \item $f$ is a local homeomorphism,
        \item $\cM_1$ is a covering space with base space $\cM_2$, and $f$ a finite covering map.
    \end{enumerate}
\end{lemma}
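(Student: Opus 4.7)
The implication (2) $\Rightarrow$ (1) is immediate from unwinding definitions: for each $y \in \cM_2$ there is an evenly covered neighborhood $V$ whose preimage $f^{-1}(V)$ is a disjoint union of open sheets, each mapped homeomorphically onto $V$; restricting to the sheet containing any chosen $x \in f^{-1}(y)$ gives the required local-homeomorphism neighborhood around $x$.

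For (1) $\Rightarrow$ (2), the plan is to exploit compactness of $\cM_1$, which makes $f$ automatically a proper map, in order to upgrade the local homeomorphism property to a genuine covering. First I would verify finiteness of fibers: for any $y \in \cM_2$, the set $f^{-1}(y)$ is closed in $\cM_1$ hence compact, and it is discrete because $f$ is a local homeomorphism between equidimensional manifolds. A compact discrete set is finite, so I may write $f^{-1}(y) = \{x_1, \dots, x_k\}$. Second, using the Hausdorff property together with the local homeomorphism property, I would choose pairwise disjoint open neighborhoods $U_i \ni x_i$ on each of which $f$ restricts to a homeomorphism onto $V_i := f(U_i)$. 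The naive candidate for an evenly covered neighborhood of $y$ is $\bigcap_i V_i$, but a priori there may be points of $\cM_1 \setminus \bigcup_i U_i$ whose image also lies in this intersection, so a further shrinking is required.

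The key step, and the main obstacle, is precisely this shrinking. The plan is to form $K := \cM_1 \setminus \bigcup_i U_i$, which is closed in the compact space $\cM_1$ hence compact; its image $f(K)$ is then compact and, since $\cM_2$ is Hausdorff, closed. Crucially, $y \notin f(K)$ because $f^{-1}(y) \subset \bigcup_i U_i$. Setting $V := \bigl(\bigcap_i V_i\bigr) \setminus f(K)$ gives an open neighborhood of $y$ with $f^{-1}(V) \subset \bigsqcup_i U_i$, and defining $W_i := U_i \cap f^{-1}(V)$ yields the decomposition $f^{-1}(V) = \bigsqcup_i W_i$ with each $f|_{W_i} \colon W_i \to V$ a homeomorphism. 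Hence $V$ is evenly covered and $f$ is a covering map. Finiteness of the cover is already established by the fiber argument, and the number of sheets is locally constant by the evenly covered structure just constructed, so it is constant on each connected component of $\cM_2$.

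This use of compactness cannot be waived: the inclusion $(0, 2\pi) \hookrightarrow S^1$ via $t \mapsto e^{it}$ is a surjective local homeomorphism from a non-compact space that fails to be a covering map, precisely because properness fails. Thus the compactness hypothesis on $\cM_1$ is doing all the work in going from (1) to (2), while the converse is essentially tautological.
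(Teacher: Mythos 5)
Your proof is correct and follows essentially the same route as the paper: both directions hinge on compactness of $\cM_1$ making $f$ proper, so that the local homeomorphism is upgraded to a covering map with finite fibers. The only difference is that the paper outsources the key step---that a proper surjective local homeomorphism is a covering map---to a citation of Ho's Lemma 2, whereas you prove it directly via the standard shrinking argument (separating the fiber by disjoint $U_i$ and deleting $f(K)$ for $K=\cM_1\setminus\bigcup_i U_i$), which makes your version self-contained.
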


\begin{proof}
    First we prove that (a) $\implies$ (b).
    
    Recall that a mapping is called proper if inverse images of compact sets are compact. Let $K \subset \cM_2$ be compact, we wish to show that $f^{-1}(K)$ is compact as well. Note that this part does not require $f$ to be local homeomorphisms, and is true more generally for continuous surjections between compact spaces. Compact sets are closed, and so $K$ is closed. Because $f$ is a surjection, $f^{-1}(K)$ is defined. Preimages of closed sets are closed, and so $f^{-1}(K)$ is closed. Finally, closed subsets of compact sets are compact, and so $f^{-1}(K)$ is compact. This shows that $f$ is proper.
    
    Lemma 2 of \cite{ho1975note} proves that a surjective proper local homeomorphism between $\cM_1$ and $\cM_2$ is a covering map. We now prove that the degree of $f$ is finite. Here we use the local homeomorphism property. Because $f$ is proper, $f^{-1}{(y)}$ is compact for any $y \in \cM_2$. By the local homeomorphism property of $f$, for each $x \in f^{-1}(y)$ there is an neighborhood of $x$, $U_x$ so that $\restr{f}{U_x}$ is a homeomorphism and, in particular, is injective. Thus $\set{U_x \colon x \in f^{-1}(y)}$ is an open covering of $f^{-1}(y)$ for which there is no subcover. This covering must be finite by compactness, and so too must be $f^{-1}{(y)}$.
    
    To prove that (b) $\implies$ (a), we prove that a covering map is a local homeomorphism. Let $x \in \cM_1$ be given, and $y \coloneqq f(x)$. Since $f$ is a covering map, there is a neighborhood $U_y$ of $y$ so that $f^{-1}(U_y)$ is a union of disjoint open sets. $x$ is in exactly one of these sets, denoted by $U_x$. Then $\restr{f}{U_x}$ is a homeomorphism. This works for any $x$, hence $f$ is a local homeomorphism.
\end{proof}

Now we present the proof of Theorem \ref{thm:covering-map-decomp}. We remark at the outset that this proof is presented as constructively as possible. Our hope is that the steps in the proof can inform network training.

\begin{figure}
    \centering
    \includegraphics[width=0.65\linewidth]{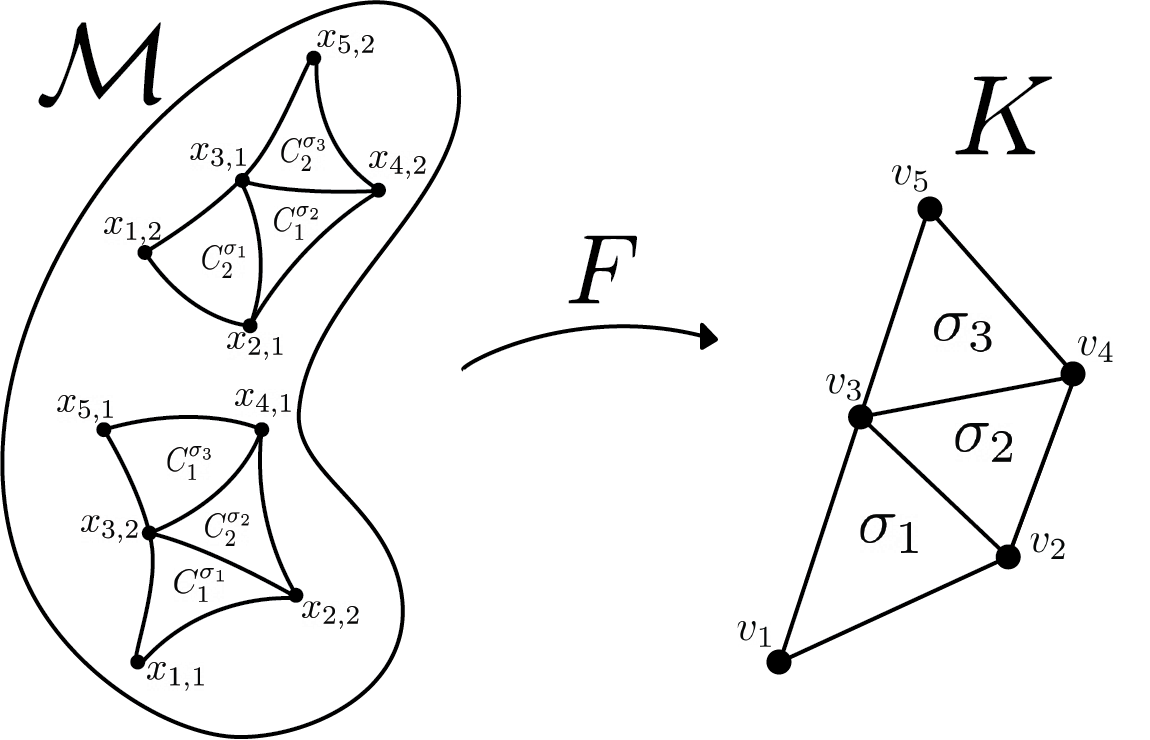}
    \caption{A sketch of the various quantities used in the proof of Theorem \ref{thm:covering-map-decomp}. This toy example provides a visual for the proof when $n = 2$ and $d = 2$. The blob on the left is $\cM$, and the simplices on the right are a subset of $K$. As indicated by the hashing, $F(C^{\sigma_1}_1) = F(C^{\sigma_1}_2) = \sigma_1$, and likewise $F(C^{\sigma_2}_1) = F(C^{\sigma_2}_2) = \sigma_2$ and $F(C^{\sigma_3}_1) = F(C^{\sigma_3}_2) = \sigma_3$. 
    Further, we have for each $i = 1,\dots, 5$ that $F(x_{i,1}) = F(x_{k,2}) = v_1$.}
    \label{fig:covering-sketch}
\end{figure}
    
\begin{proof}

The proof is quite involved and so we first provide a proof sketch highlighting the major steps of each section in the proof.

First, we show that for each $\sigma \in K$, there are $d$ disjoint compact patches $\set{C^\sigma_j}_{j=1}^d$ in $\cM$ such that $\restr{F}{C^\sigma_j}\colon C^\sigma_j \to \abs{\sigma}$ is a homeomorphism for each $j$. A sketch of the patches $C^\sigma_j$ are shown in Figure \ref{fig:covering-sketch}.

Second, we construct a map $g^\sigma\colon \cup_{j = 1}^d C^\sigma_j \to \abs{\sigma}\times [0,d]$  where $g^\sigma(x) = \paren{F(x),\cdot}$ that makes a $d$-tall `stack of pancakes' over $\sigma$. Each of the $d$ pancakes is $F(C^\sigma_j)$ for some $j$. We construct $g$ so that each pancake passes through $(v,i)$ for some $i \in 1,\dots,d$. We then construct a continuous map $g \colon \cM \to \abs{K}\times[0,d]$ so that $\restr{g}{\sigma} = g^\sigma$.

Third we show that $g$ is a bijection between $\cM_1$ and $\set{(v,i)}_{i = 1}^d$, and that $g$ can be continuously lifted to an injective $h \colon \cM \to C$ so that $h(x) = (g(x),\bszero)$. Thus $h$ is a continuous and injective map between compact sets, and so is an embedding of $\cM$ into $C$. Finally, we show that $p^{-1}(v)\cap h(\cM) = \set{(v,i,\bszero)}_{i = 1}^d$ for $\bszero \in \R^{2n}$.
--

Let the vertices of $V$ be enumerated as $V = \set{v_1,\dots,v_N}$ for some $N$, and for any $i \leq N$, let the set $X = \set{x_{i,j}}_{i = 1,j = 1}^{i = N, j = d} \subset \cM$ be the $d$ points in $\cM$ such that $F(x_{i,j}) = v_i$.

\begin{enumerate}
    \item[(1)] Let $\sigma \in K$ be given, and $\set{v_1,\dots,v_{n+1}} = \sigma\subset K$. 
    
    We will construct one $C^\sigma_j$ for each $x_{1,j}$. Let $j$ be given, then defined $r_j = \min_{x \in F^{-1}(v_1), x \neq x_{1,j}}\norm{x - x_{1,j}}$. Note that $r_j > 0$, and $x_{1,j} = B_{r_j}(x_{1,j}) \cap F^{-1}(v_1)$. Let 
    \begin{align*}
        C^\sigma_j = B_{r_j}(x_{1,j}) \cap F^{-1}(\abs{\sigma}).
    \end{align*}
    We will show that $\restr{F}{C^\sigma_j}$ is a homeomorphism. 
    
    It is clear that $\restr{F}{C^\sigma_j}$ is a local homeomorphism. Further, if $\sigma$ is small enough, then there is some $r' < r_j$ so that $\abs{\sigma} \subset F(B_{r'}(x_{1,j}))$. Therefore, we have that $\restr{F}{C^\sigma_j} \colon C^\sigma_j \to \abs{\sigma}$. This is guaranteed from the assumption that the diameter of the largest $\sigma \in S$ is sufficiently small. From Lemma \ref{lem:hom-loc-to-covering-map}, we then have that $\restr{F}{C^\sigma_j}$ is a covering map. Because $x_{1,j} = B_{r_j}(x_{1,j}) \cap F^{-1}(v_1)$, we can conclude that this covering map has degree one, and so $\restr{F}{C^\sigma_j}$ is not just a local homeomorphism, but a proper homeomorphism. As an immediate consequence of this, we have that $C^\sigma_j$ is compact and connected.
    
    \item[(2)]
    \begin{figure}
        \centering
        \includegraphics[width=0.65\linewidth]{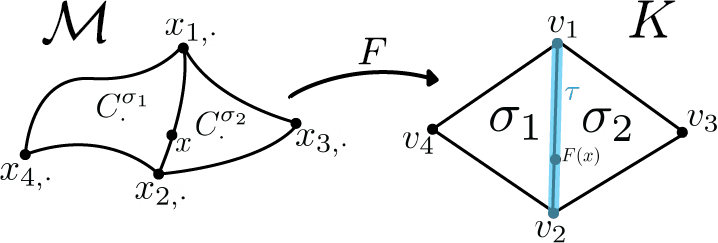} 
        \caption{Sketch of a toy example where a point $x$ is mapped to a simplex $\tau$ that is a common face of the simplices $\sigma_1$ and $\sigma_2$.}
        \label{fig:commonface-intersection}
    \end{figure}
    From the previous part we have that for a choice of $\sigma$ and $j$, that $C^\sigma_j$ is compact and $\restr{F}{C^\sigma_j}$ a homeomorphism. We also get that $C^\sigma_j$ are pairwise disjoint. 
    
    Let us define the set of closed patches $\cC = \set{C^\sigma_{j}\colon \sigma \in S^{\max{}}, j \in \set{1,\dots,d}}$ 
    where $S^{\max} \subset S$ are the maximal simplices of $S$. Now we introduce two index functions, $\myindex^S \colon \cC \to \set{1,\dots,d}$ and $\myindex^X \colon X \to \set{1,\dots,d}$. They are defined as:
    \begin{align}
        \myindex^S(C) &= j \text{ where } C = C^\sigma_j \in \cC\\
        \myindex^X(x) &= j \text{ where } x = x_{i,j} \in X.
    \end{align}
    We note that the choices of $\myindex^S$ and $\myindex^X$ are not unique. The $j$ index in the definition of $\cC$ and $X$ are both dummy indices. The key fact is that $C \mapsto (F(C),\myindex^S(C))$ and $x \mapsto (F(x),\myindex^X(x))$ are bijections.
    
    Let $y \in \abs{\sigma}$. We can express $y$ in barycentric coordinates in terms of the vertices of $\sigma$. Let $\paren{\lambda_v^\sigma(x)}_{v \in \sigma}$ be the unique scalars such that $F(x) = \sum_{v \in \sigma}\lambda_v^\sigma(x) v$, $\sum_{v \in \sigma}\lambda_v^\sigma(x) = 1$ and $\lambda_v^\sigma(x) \geq 0$. Now define the function $g^\sigma_k \colon C^\sigma_k\to\abs{\sigma} \times [0,d]$ as
    \begin{align}
        g^\sigma_k(x) = \paren{F(x), \sum_{v \in \sigma}\myindex^X\paren{F^{-1}(v)\cap C^\sigma_{k}}\lambda_v^
        \sigma(F(x))}
    \end{align}
    
    Let us briefly prove some properties about $g^\sigma_k$. First, recall from part (a) that $F^{-1}(v)\cap C^\sigma_k \in X$, hence the $\myindex^X\paren{F^{-1}(v)\cap C^\sigma_k}$ term is well-defined.
    Now we define
    \begin{align}
        \label{eqn:g-restr-to-sigma-def}
        g^\sigma \colon \bigcup^d_{k = 1}C^\sigma_k \to \abs{\sigma}\times [0,1], \quad 
        g^\sigma(x) = g^\sigma_{\myindex^S(C)}(x) \text{ where } x \in C \in \cC\\
        \label{eqn:g-def}
        g \colon \cM \to \abs{K} \times [0,1], \quad 
        g^\sigma(x) = \begin{cases}
            g^{\sigma_1}(x) &\text{if } F(x) \in \sigma_1\\
            \vdots &\\
            g^{\sigma_N}(x) &\text{if } F(x) \in \sigma_N
        \end{cases}
    \end{align}
    Note that from part (a), we have that $\set{C^\sigma_k}_{k = 1,\dots,d}$ are pairwise disjoint for any $\sigma$. Thus for any $x$, there is a unique $C \in \cC$ so that $x \in C$. This shows that Equation \ref{eqn:g-restr-to-sigma-def} is well-defined. Equation \ref{eqn:g-def}, on the other hand, may have problems. In particular, it is possible for $F(x) \in \sigma$ and $F(x) \in \sigma'$ for $\sigma \neq \sigma'$. We resolve this problem by showing that if $F(x) \in \sigma \cap \sigma'$, then $g^{\sigma}(x) = g^{\sigma'}(x)$. This resolves the problem.
    
    Let $x \in \cM$ be such that $F(x) \in \sigma \cap \sigma' = \tau$, see Figure \ref{fig:commonface-intersection} for an illustration. Because $F(x)$ lies in the convex hull of $\tau$, we can then conclude that 
    \begin{align}
        \label{eqn:lambda-simplex-simplification}
        \lambda^\sigma_v(x)  = \twopartpiecewise{\lambda^\tau_v(x)}{v \in \tau}{0}{v \not \in \tau}.
    \end{align}
    and likewise for $\sigma'$. Expanding the definition of $g^\sigma(x)$ we get that there is some $k$ so that
    \begin{align}
        g^\sigma(x) &= \paren{F(x), \sum_{v \in \sigma}\myindex^X\paren{F^{-1}(v)\cap C^\sigma_{k}}\lambda_v^\sigma(F(x))}\\
        &= \paren{F(x), \sum_{v \in \tau}\myindex^X\paren{F^{-1}(v)\cap C^\sigma_{k}}\lambda_v^\sigma(F(x)) + \sum_{v \in \sigma\setminus \tau}\myindex^X\paren{F^{-1}(v)\cap C^\sigma_{k}}\lambda_v^\sigma(F(x))}\\
        &= \paren{F(x), \sum_{v \in \tau}\myindex^X\paren{F^{-1}(v)\cap C^\sigma_{k}}\lambda_v^\tau(F(x))}.
    \end{align}
    Likewise, $g^{\sigma'}(x) = \paren{F(x), \sum_{v \in \tau}\myindex^X\paren{F^{-1}(v)\cap C^{\sigma'}_{k'}}\lambda_v^\tau(F(x))}$ for some $k'$. Finally, we have that $F^{-1}(\abs{\tau})\cap C^\sigma_{k} = F^{-1}(\abs{\tau})\cap C^{\sigma'}_{k'}$. Thus we have $F^{-1}(v)\cap C^\sigma_{k} = F^{-1}(v)\cap C^{\sigma'}_{k'}$. This proves that $g^\sigma(x) = g^{\sigma'}(x)$.
    
    Clearly, $g$ is continuous on the interior of every maximal simplex $\sigma \in S^{\max}$. The previous result shows that $g$ is continuous across the boundry of all simplices. Therefore, we also obtain continuity of $g$ by the gluing lemma.
    
    \item[(3)] 
    
    Before we construct $h$, we will show that $\restr{g}{X}$ is a bijection from $X$ onto $\set{(v,i)}_{i = 1,\dots,d}$. Let $x \in X$, then $F(x) = v' \in \sigma$ for some $\sigma \in S^{\max}$, and for that choice of $\sigma$, there is a unique $k$ so that $x \in C^\sigma_{k}$. Uniqueness follows from separation of $C^\sigma_k$. Hence 
    \begin{align*}
        g(x) = g^\sigma(x) = g^\sigma_k(x).
    \end{align*}
    Evaluating the value of $g^\sigma_k$ and using the identity \begin{align}
        \lambda^\sigma_{v'}(v) = \twopartpiecewise{1}{v' = v}{0}{v' \neq v}
    \end{align}
    we obtain
    \begin{align}
        g(x) = g^\sigma_k(x) &= \paren{v', \sum_{v \in \sigma}\myindex^X\paren{F^{-1}(v)\cap C^\sigma_{k}}\lambda_v^\sigma(v')}\\
        &= \paren{v',  \myindex^X(x) \lambda_{v'}(v') + \sum_{v \in \sigma, v\ \neq v'}\myindex^X\paren{F^{-1}(v)\cap C^\sigma_{k}}\underbrace{\lambda_v^\sigma(v')}_{ = 0 \text{ as }v \neq v'}}\\
        \label{eqn:g-on-vertices}
        &= \paren{v', \myindex^X(x)}.
    \end{align}
    If $x_{i,j}$ and $x_{i',j'}$ are such that $g(x_{i,j}) = g(x_{i',j'})$, then equality in the $v$ component of Eqn. \ref{eqn:g-on-vertices} proves that $i = i'$ and equality in the $\myindex^X$ component of Eqn. \ref{eqn:g-on-vertices} proves that $j = j'$. This proves that $g$ is injective on $X$. Both $X$ and $\set{(v,i)}_{v \in V, i = 1,\dots,d}$ have $\#(V) = d$ elements, therefore bijectivity follows from injectivity. 
    
    Now we turn to showing that $g$ can be continuously lifted to a function $h \colon \cM \to C$ that is injective on all of $\cM$, and so an embedding. Before starting with that proof, we remark on the non-injectivity of $g$. Because $g_k^\sigma$ is a homeomorphism, $\restr{g}{C^\sigma_k}$ is injective and so $g$ is locally injective, but $g$ may fail to be injective globally. Consider for example the case when $R^\sigma(v_1,1) = 1, R^\sigma(v_2,1) = 2, R^\sigma(v_1,2) = 2, R^\sigma(v_2,2) = 1$, $\tilde x_1 \in C^\sigma_1$ and $\tilde x_2 \in C^\sigma_2$ be such that $F(\tilde x_1) = F(\tilde x_2) = \frac12\paren{v_1 + v_2}$, then 
    \begin{align*}
        g(\tilde x_1) = \paren{\frac12\paren{v_1 + v_2}, 1 \frac12 + 2 \frac12} = \paren{\frac12\paren{v_1 + v_2}, 2 \frac12 + 1 \frac12} = g(\tilde x_2)
    \end{align*}
    If there is a $\sigma \in S$ and indices $1 \leq k_1, k_2 \leq d$ and vertices $v, v' \in V$ such that $R^\sigma(v,k_1) < R^\sigma(v,k_2)$ and $R^\sigma(v',k_1) < R^\sigma(v',k_2)$ then $g$ is non-injective. This follows from Equation \ref{eqn:g-on-vertices} and the intermediate value theorem. Indeed if $\cM$ is connected then $g(\cM)$ always has self intersections. If not, otherwise the maximal (in the final coordinate) components of $g(\cM)$ would be both open and closed, which violates connectedness of $\cM$.
    
    Now we construct the embedding $h$. From Whitney's embedding theorem and compactness there is an embedding of $\cM$ into $[0,1]^{2n}$. Let $f: \cM \to [0,1]^{2n}$ be that embedding, Further let $\epsilon > 0$ be such that $\restr{g}{\cup_{x \in X}B_\epsilon(x)}$ is injective. The existence of such an $\epsilon$ follows from continuity of $g$ and that $\restr gX$ is injective. Let $\psi_{x,\epsilon} \colon \cM \to \R$ be a bump function such that $\psi(x) = 1$, $\psi$ is smooth, and $\psi = 0$ outside of $B_\epsilon(x)$. Then we define
    \begin{align}
        \Psi(y) &= \sum_{x \in X} \psi_{x,\epsilon}(y)\\
        h(y) &= \paren{g(y), \Psi(y)f(y)}.
    \end{align}
    Let $\pi_1\colon \R^{n+1} \times \R^{2n} \to \R^{n+1}$ and $\pi_2\colon \R^{n+1} \times \R^{2n} \to \R^{2n}$ be coordinate projections. On $\cup_{x \in X}B_\epsilon(x)$, $\pi_1 \circ h = g$ is injective owning to $g$'s injectivity. On $\cM \setminus \cup_{x \in X}B_\epsilon(x)$, $\pi_2 \circ h = f$ and so is injective as well. Clearly $h$ is continuous. $h$ is a continuous injection on a compact set and so is an embedding. This concludes part 3, and the entire proof overall.
\end{enumerate}
\end{proof}

\subsection{Proof of Theorem \ref{thm:approximation-of-locdiff}}
\label{sec:proof-of-approximation-of-locdiff}

Before we prove Theorem \ref{thm:approximation-of-locdiff}, we first prove a helper lemma. This lemma shows that given a triangulation $K$ we can always find a finer triangulation $K'$ with vertices at an arbitrary set of points $Y \subset \abs{K'}$.

\begin{lemma}[Vertexed Triangulations]
    \label{lem:vertexed-triangulations}
    Let $K$ be a $n$ triangulation, and $Y \subset \abs{K}$ a finite set of points. Then there is a triangulation $K' = (V',S')$ of $K$ so that $Y \subset V'$.
\end{lemma}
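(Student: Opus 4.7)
The plan is to adjoin the points of $Y$ one at a time via stellar subdivision. Enumerate $Y=\{y_1,\dots,y_N\}$ and argue by induction on $N$, so it suffices to show the following single-point refinement step: given any triangulation $L$ of $|K|$ and any point $y\in |L|$, there exists a triangulation $L'$ of $|L|$ whose vertex set contains the vertex set of $L$ together with $y$. Applying this step successively to $(K,y_1)$, $(K_1,y_2)$, etc., produces the desired $K'=K_N$.

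To carry out the single-point step, assume $y$ is not already a vertex of $L$ and let $\sigma\in L$ be the unique simplex whose relative interior contains $y$. Write each simplex $\tau\in L$ with $\sigma\le\tau$ as the join $\tau=\sigma*\rho_\tau$, where $\rho_\tau$ is the (possibly empty) face of $\tau$ spanned by the vertices of $\tau$ not in $\sigma$. Define $L'$ by deleting every such $\tau$ from $L$ and replacing it with the collection of simplices
\[
\{y\}*\sigma'*\rho_\tau,
\]
where $\sigma'$ ranges over the proper faces of $\sigma$ (including the empty face). All simplices of $L$ not having $\sigma$ as a face are kept unchanged. By construction, every vertex of $L$ is still a vertex of $L'$, and $y$ has been added as a new vertex.

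The main step to verify is that $L'$ is a simplicial complex with $|L'|=|L|$. Closure under taking faces is immediate from the construction. For the intersection property, unchanged simplices of $L'$ intersect as in $L$; the new simplices that share a common star $\overline{\operatorname{St}}(\sigma)$ meet in joins of the form $\{y\}*(\sigma'\cap\sigma'')*(\rho_\tau\cap\rho_{\tau'})$, which are again simplices of $L'$; and a new simplex meets an unchanged simplex only along $\partial\overline{\operatorname{St}}(\sigma)$, where the subdivision restricts to the identity. The polyhedral identity $|L'|=|L|$ follows because, on each closed simplex $\tau=\sigma*\rho_\tau$, the collection $\{y\}*\sigma'*\rho_\tau$ with $\sigma'$ ranging over codimension-one faces of $\sigma$ tiles $|\tau|$ by standard stellar subdivision. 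Iterating this step over $y_1,\dots,y_N$ yields the triangulation $K'$ with $Y\subset V'$.

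The only subtlety I anticipate is bookkeeping in the iteration: after refining at $y_i$, the simplex of the refined complex containing $y_{i+1}$ may differ from the simplex of $K$ containing $y_{i+1}$. This is harmless because the single-point step applies to any triangulation of $|K|$; the inductive hypothesis only requires that each $K_i$ is a triangulation of the same underlying space containing $\{y_1,\dots,y_i\}$ as vertices, which the construction preserves.
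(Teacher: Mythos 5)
Your proposal is correct and follows essentially the same route as the paper: induction on the points of $Y$, with each point adjoined by a stellar subdivision of the simplex containing it in its relative interior, together with the corresponding replacement of its cofaces. Your join-based description $\tau=\sigma*\rho_\tau \mapsto \{y\}*\sigma'*\rho_\tau$ is in fact a cleaner and more explicit account of the coface replacement that the paper only sketches.
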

\begin{proof}
    We proceed by induction on the size of $Y$. The base case when $Y = \emptyset$ is solved by letting $K' = K$. Now the inductive step. Let $y \in Y$, so that $Y = \set{y} \cup \paren{Y \setminus \set{y}}$. If $y$ is already a vertex of $K'$ then we are done. Otherwise, there is some $\sigma \in S$ so that $y \in \sigma^o$ where $\sigma^o$ denotes the relative interior of $\sigma$. Then, we can subdivide (like a barycentric subdivision but at $y$ instead of the barycenter) $\sigma$ at $y$ into $n+1$ simplices. Call these simplices $\Sigma_y$. We form a triangulation $K'$ by removing $\sigma$ and replacing it with this subdivision. For each coface $\tau \in S$ of $\sigma$, replace $\sigma$ in $\tau$ with one element each of $\Sigma_y$. A long but straight forward calculation shows that after this coface replacement is legitimate, this yields a simplicial complex $\tilde K$ with vertex at $y$. Now apply the inductive hypothesis to $\tilde K$ and the points $Y \setminus \set{y}$.
\end{proof}

Now we prove Theorem \ref{thm:approximation-of-locdiff}.

\begin{proof}
    The manifold $\cM_2$ is smooth, and so triangulable. Let $K_2 = (V_2,S_2)$ be a triangulation of $\cM_2$ with diffeomorphism $h_2\colon K_2\to \cM_2$.
    
    We apply Lemma \ref{lem:vertexed-triangulations} to the triangulation $K_2 = (V_2,S_2)$ where the set of points that we want to add to the vertex of the triangulation is $h^{-1}_2(Y)$. Note that $Y$ is finite, and $Y \subset \cM_2$, and so $h^{-1}_2(Y)\subset \abs{K_2}$, and so the lemma applies. Thus, there is a triangulation $K' = (V',S')$ of $\cM_2$ with vertices that include $h^{-1}_2(Y)$. Now we apply Theorem \ref{thm:covering-map-decomp}, and obtain projection $p \colon C \to \abs{K_2}$ and $h\in \emb^1(\cM_1,C)$ so that for $C = \abs{K_2}\times [0,d]\times [0,1]^n$ we have that $h_2^{-1}\circ g = p \circ h$. Moreover, we have that for each $y \in h^{-1}_2(Y) \subset V'$, $p^{-1}(v)\cap h(\cM_1) = \set{(v,i,\bszero)}$ for $i = 1,\dots,d$.
    
    Applying $h_2$ to both sides, we get that $g = h_2\circ p \circ h$, and for every $y \in Y$
    \begin{align*}
        g^{-1}(y) = h^{-1}\circ p^{-1} \circ h^{-1}_2(y) = h^{-1}\paren{\set{v,i,\bszero}_{i = 1,\dots,d}}
    \end{align*}
    where $v = h_2^{-1}(y)$.
    
    To form the sequence of $f_i$ and $X_i$ that satisfy Eqn.s 
        \ref{eqn:thm:approximation-of-locdiff:x-i-def} and
        \ref{eqn:thm:approximation-of-locdiff:unif-conv-of-inverses}, choose a sequence $\infseq E1i$ and $\infseq T1i$ so that $E_i$ and $T_i$ are, respectively, bistable uniform approximators to the embedding $h$ and diffeomorphism $h_2$. If we define $f_i \coloneqq T_i \circ p \circ E_i$ then by compactness $\infseq f1i$ is a bistable uniform approximator. for $g$. Further, by inverse Lipschitz-ness of $g$ and $f_i$, we have for all $y \in Y$ that 
    \begin{align}
        \sup_{y \in Y} \norm{h^{-1}\circ \paren{h^{-1}_2(y),i,\bszero} - E_i^{-1}\paren{T_i^{-1}(y),i,\bszero}_{\R^{m_1}}} \leq \epsilon_i
    \end{align}
    where $\epsilon_{i} \to 0$ as $i \to \infty$. This proves Eqn. \ref{eqn:thm:approximation-of-locdiff:unif-conv-of-inverses}.
\end{proof}

\subsection{Proof of Corollary \ref{cor:approx-to-maps-betwen-non-local-homeo-manifs}}
\label{sec:proof:cor:approx-to-maps-betwen-non-local-homeo-manifs}

In this subsection we present the proof of Corollary \ref{cor:approx-to-maps-betwen-non-local-homeo-manifs}.

\begin{proof}
    The proof of this follows from $\overline{\emb^{1}(\cM_1,\R^{m_2})} \subset \lochom(\cM_1,\cM_2)$. If $\infseq fn1$ was a uniform approximating sequence for $g$, then $g$ would necessarily be a local homeomorphism between $\cM_1$ and $\cM_2$. This is a contradiction, hence no such sequence exists.
\end{proof}

\subsection{Proof of Lemma \ref{lem:symm-and-quotient-manifs}}
\label{sec:proof:lem:symm-and-quotient-manifs}

In this section we present the proof of Lemma \ref{lem:symm-and-quotient-manifs}

\begin{proof}
    First, we prove that $\cM_1/\Sigma$ is a manifold.  Because $\Sigma$ is finite, $\pi_{\Sigma}$ is a proper and continuous. It is free because orbits are all the same size, hence $\cM_1 / \Sigma$ is a manifold from \cite[Theorem 21.10]{lee2013smooth}.
    \begin{enumerate}
        \item $\restr{f}{\cM_1 / \Sigma}$ is the unique map satisfying Eqn. \ref{eqn:symmetrization-identity}. This is shown by passing to the quotient as in \cite[Theorem A.30]{lee2013smooth}.
        \item Clearly $\pi_{\Sigma}$ is a covering space of $\cM_1 / \Sigma$ by $\cM_2$. From $f$ and $\pi_\Sigma$ being $d$-to-one we obtain that $f$ is one-to-one, and so it is a diffeomorphism, so $f_{\cM_1 / \Sigma} \circ \pi_\Sigma$ is a smooth covering map, and so is a local diffeomorphism
    \end{enumerate}
\end{proof}

\subsection{Proof of Corollary \ref{cor:recov-of-group-action}}
\label{sec:proof:cor:recov-of-group-action}

In this section we present the proof of Corollary \ref{cor:recov-of-group-action}.

\begin{proof}
    This follows from Eqn. \ref{eqn:thm:approximation-of-locdiff:unif-conv-of-inverses}. By uniform inverse lipschitzness of $E_i$, we have that $X_i$ converges $f_i^{-1}(Y)$, and so Eqn. \ref{eqn:thm:approximation-of-locdiff:unif-conv-of-inverses} becomes that $d_H(f_i^{-1}(Y),g^{-1}(Y)) \to 0$. From inverse Lipschitz-ness of $g$ and $f$ we have that, for $i$ large enough, that $f_i^{-1}(Y)$ and $g^{-1}(Y)$ have the same number of points, hence $d_H(f_i^{-1}(y),g^{-1}(y))$ for each $y \in Y$.
\end{proof}

\subsection{Proof of Lemma \ref{lem:local-dif-surfaces}}
\label{sec:proof:lem:local-dif-surfaces}

In this section we present the proof of Lemma \ref{lem:local-dif-surfaces}.

\begin{proof}

Let $m = k\paren{n-1} + 1$, where $n,m$ and $k$ are positive integers, $k\ge 2$.
In this section we explicitly construct a covering of a genus $n$ surface $S_n$ with a genus $m$ surface $S_m$. We consider coordinates $(x,y,s,z,t)$ in $\R^5$. Let $P_4:\R^5\to \R^4, P_2:\R^5\to \R^3, P_2:\R^5\to \R^2, Q_4:\R^5\to \R^3, R_2:\R^4\to \R^2$ be the projections where
\begin{align*}
    P_2(x,y,s,z,t)&=(x,y)\\
    P_3(x,y,s,z,t)&=(x,y,s)\\
    P_4(x,y,s,z,t)&=(x,y,s,z)\\
    Q_3(x,y,s,z,t)&=(x,y,z)\\
    Q_4(x,y,s,z,t)&=(x,y,z,t)\\
    R_2(x,y,z,t)&=(x,y).
\end{align*}

Consider curve $\gamma:[-2(k-1)\pi,2(k-1)\pi]\to \R^5$ given by
\begin{align}
    \gamma(r)=(\cos(r),\sin(r),0,\phi(r),\psi(r))
\end{align}
where $\phi,\psi\in C^\infty(\R)$ are such that
\begin{align*}
 &    \psi(r)\geq 0, \quad \text{ for all $r$, and}\\
   & \psi(r)>0, \quad \text{ if and only if $|r-(-2j+1)\pi|<1$ for some $j=1,2,\dots,k-1$}
   \end{align*}
   and
   \begin{align*}
   & \phi(-r)=\phi(r,) \\
    &\phi(r)=1, \quad  \text{for $r\in [0,\pi/2]$,}\\
    &\phi(r)=j+1, \quad  \text{for $r\in [(2j-\frac 12)\pi,(2j+\frac 12)\pi]$, $j=1,2,3,\dots,k-2$},\\
    &\phi(r)=k, \quad  \text{for $r\in [((k-1)-\frac 12)\pi,2(k-1)\pi]$},\\
    &\phi((2j-1)\pi)=j-\frac 12 ,\quad j=1,2,\dots,k-1&\\
    &r\to \phi(r), \quad  \text{is strictly increasing on the intervals $r\in [(2j+\frac 12)\pi,(2j+\frac 32)\pi]$, $j=0,1,2,\dots,k-2$}.
\end{align*}
Then
\begin{align}
    P_4(\gamma(-r))=P_4(\gamma(-s))\hbox{ if and only if $r= s =j\pi$ for some $j=1,2,\dots,k-1$}.
\end{align}
and $\gamma([-2(k-1)\pi,2(k-1)\pi])$ is a smooth closed curve in $\R^5$ which does not intersect itself.

Let $\mu(r)=Q_4(\gamma(r))$  be a smooth curve in $\R^4$ that has no self-intersections. 
Then $\alpha(r)=R_2(\mu(r))$ is such that the set $\alpha([0,2\pi])=\alpha([-2(k-1)\pi,2(k-1)\pi])\subset \R^2$ is the unit circle in $\R^2$ and the projection
\begin{align}
    R_2:\mu([-2(k-1)\pi,2(k-1)\pi])\to \alpha([0,2\pi])
\end{align}  
is a $k$-to-1 covering map.

We will first consider a 2-dimensional torus $\Sigma_0$ in $\R^5$ whose ``central curve'' is the path $\gamma$. In other words, a tube with central axis $\gamma$. To this end, let 
\begin{align}
    v(r)=\frac 1{|\partial_r P_2(\gamma(r))|}\partial_r P_2(\gamma(r))
\end{align} 
be the unit tangent vector of the curve $r\to P_2(\gamma(r))$ in $\R^2$ and let
\begin{align}
    \nu(r)=v(r)^\perp
\end{align}
be the  unit normal  vector of the curve $r\to P_2(\gamma(r))$ in $\R^2$ which is obtained by rotating $v(r)$ $90\degree$ clockwise in $\R^2$. Let 
\begin{align}
    \tilde \nu(r)=(\nu(r),0,0,0)
\end{align}
be a vector in $\R^5$ and $\hat e=(0,0,1,0,0)$ be a unit vector in $\R^5$
pointing to the direction of the $s$-axis.

Let $H:[-2(k-1)\pi,2(k-1)\pi]\times [-\pi,\pi]\to \R^5$ be the function
\begin{align}
    H(r,\theta)=\gamma(r)+\cos(\theta)\nu(r)+\sin(\theta)\hat e.
\end{align} 
We define 
\begin{align}
    \Sigma_0=H([-2(k-1)\pi,2(k-1)\pi]\times [-\pi,\pi])
\end{align}
to be a 2-dimensional surface in $\R^5$. The surface $\Sigma_0$ is a 2-dimensional torus in $\R^5$. It has the property that the surface $\Sigma_1=P_3(\Sigma_0)$ is a 2-dimensional torus in $\R^3$ and the projection
\begin{align}
    P_3:\Sigma_0\to \Sigma_1
\end{align}
is a 2-to-1 covering map. Observe that the points $q_j=H(2j\pi,0)=(1,0,0,j+1,0)$, $j=0,1,2,\dots,k-1$ have neighborhoods $U_j$ in $\R^5$ such that $\Sigma_0\cap U_j$ is a subset of $\R^3\times \{(j,0)\}$
and the maps
\begin{align}
    \rho_{i,j}:(x,y,s,z,t)\to (x,y,s,z+(j-i),t),\quad i,j\in \{0,1,2,\dots,k-1\},\ i<j
\end{align} 
are bijections
\begin{align}
    \rho:\Sigma_0\cap U_i\to \Sigma_0\cap U_j.
\end{align}
Now, we add $n$ handlebodies to $\Sigma_0$. We modify $\Sigma_0$ in the sets $U_j$ by smoothly gluing to $ \Sigma_0\cap U_j$ a collection of $(n-1)$ 2-dimensional toruses $T_{j,p}\subset U_j\cap \R^3\times \{(j,0)\}$,
$p=1,2,\dots,n-1$ so that we obtain a $C^\infty$-smooth surface 
\begin{align}
    S_m=\Sigma_0\# (T_{1,1} \# T_{1,2} \# \dots  \# T_{1,n-1})\# (T_{2,1} \# T_{2,2} \# \dots  \# T_{2,n-1})
\#\dots \# (T_{j,1} \# T_{j,2} \# \dots  \# T_{j,n-1})
\end{align}
that has genus $m = k\paren{n-1} + 1$ and moreover, $S_m\cap U_i\subset  \R^3\times \{(i+1,0)\}$ and the maps
\begin{align}
    \rho_{i,j}:S_m\cap U_i\to S_m\cap U_j
\end{align}
are bijection for all $i<j$. Then $S_n=P_3(S_m)$ is a smooth surface in $\R^3$ that has genus $n$ and the projection
\begin{align}
    P_3:S_m\to S_n
\end{align}
is a $k$-to-1 covering map. This show that in $\R^5$ there is a  surface $S_m$ with genus $m$ that is mapped in the projection $P_3$ to a surface $S_n$ in $\R^3$ with genus $n$ ,and for these surfaces $P_3$ is a $k$-to-1 covering map.
 
\end{proof}

\begin{figure}
    \centering
    \includegraphics[width=0.5\textwidth]{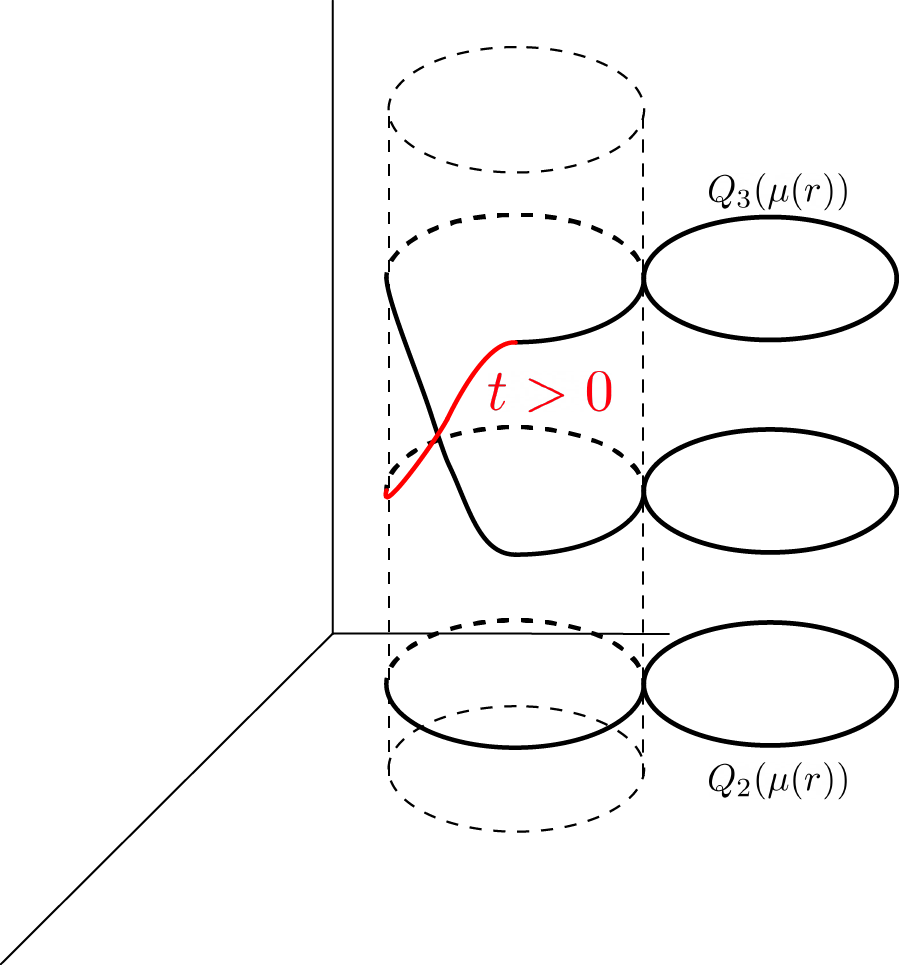}
    \caption{Explanation of the proof of Lemma \ref{lem:local-dif-surfaces} in the case when $m=3$, $n=2$ and $k=2$. Curve $Q_3(\mu(r))$ in $\R^3$. In fact, the figure can be considered also as a 4-dimensional curve $Q_4(\mu(r))$ when the red colour shows where $t>0$ and the black colour where $t=0$. Note that projection $R_2:Q_4(\mu([-2\pi,2\pi]))\to Q_2(\mu([-2\pi,2\pi]))$  is 2-to-1 covering map. With dotted lines in the figure we have also indicated the  projected  core curves of the toruses which we glue to $\Sigma_0$ to obtain a surface $S_3$ and the projected core curve of the torus that is glued in $\Sigma_1$ to obtain a surface $S_2$. }
\end{figure}

\subsection{Proof of Theorem \ref{thm:covering-maps-as-projs}}
\label{sec:proof-of-covering-maps-as-projs}

\begin{proof}
  We can assume without loss of generality that ${\tilde m_1}\ge 3m_1+1$. Otherwise, we may replace $\tilde \cM_1$ by $J_0(\tilde \cM_1)$ where $J_0\colon\R^{\tilde m_1}\to \R^{3 m_1}$ where $J_0(x)\coloneqq(x,\set{0}^{3m_1+1 - \tilde m_1})$.

    Let us now return to proof assuming that ${\tilde m_1}\ge 3m_1+1$.
    Let $J_1\colon\R^{m_1}\to \R^{m_1}\times \{0\}^{\tilde m_1-{m_1}}\subset \R^{\tilde m_1}$ be the linear injective map $J_1(x)\coloneqq(x,0)$. Then $J_1|_{\cM_1}\colon \cM_1\to \widehat  \cM_1\eqqcolon J_1(\cM_1)$ is a diffeomorphism.
    Also, let $E_1: \R^{m_1}\times \Rea^{\tilde m_1-{m_1}}\to 
     \R^{m_1}$ be the projection to the first coordinate.
    
    As $\cM_1$ and $\tilde \cM_1$ are diffeomorphic,
    there is  a diffemorphism  $g_1\colon \cM_1\to \tilde \cM_1$.
    Then, $\tilde g_1=g_1\circ E_1\colon \widehat\cM_1\to \tilde \cM_1$ is a diffeomorphism, too.
    We recall that $\cM_1\subset \Rea^{m_1}$ and $\tilde \cM_1\subset \Rea^{\tilde m_1}$.
    Then, as $\tilde m_1\ge 3m_1+1$, we can apply \cite[Lemma 7.6]{madsen1997calculus} and \cite[Thm.\ 3.8]{puthawala2022universal} to the embedding
    $g_1\colon \cM_1\to \Rea^{\tilde m_1}$
    and see that there is an injective linear map $L_1:\Rea^{m_1}\to \Rea^{\widetilde m_1}$ and a diffeomorphims $D_1:\Rea^{\widetilde m_1}\to \Rea^{\widetilde m_1}$ such that 
    $$
    g_1=D_1\circ L_1|_{\widehat  \cM_1},
    $$
    that is, the embedding $g_1$ is an extendable
    embedding (see \cite[Def.\ 3.7]{puthawala2022universal}.
    By basic results of linear algebra, there is a linear bijection $B_1:\R^{m_1}\times \Rea^{\widetilde m_1-{m_1}}=\Rea^{\widetilde m_1}\to \Rea^{\widetilde m_1}$ such that for $x\in \Rea^{m_1}$
    we have $B_1(x,0)=L_1(x)$. This
    implies that $\tilde g_1:\tilde \cM_1\to\Rea^{\widetilde m_1} $ extends to a diffeomorphism $T_1=D_1\circ B_1\colon\R^{\widetilde m_1}\to \R^{\widetilde m_1}$, so that $T_1|_{\widehat  \cM_1}=\tilde g_1$. Observe that $T_1\circ J_3|_{\cM_1}=g_3$
    is a diffeomorphims from $\cM_1$ to $\tilde \cM_1$. We recall that by assumption, the projection
    $p:\tilde \cM_2\to \tilde \cM_1$ is a covering map.
        Then the restriction of the map $p\circ T_3\circ J_3\colon\R^{m_1}\to \R^{\widetilde m_2}$ on ${\cM_1}$,  that we denote by $p\circ T_3\circ J_3|_{\cM_1}\colon \cM_1\to \tilde \cM_2$, is a covering map.
    
    Next, consider the submanifold $\cM_2$ in $\R^{m_2}$. Let us use the map $E_2\colon\R^{m_2}\to \R^{{m_2}+\widetilde k}$ where $\widetilde k = m_2 - k$, defined by $E_2(x)\coloneqq (x,0)$, to embed $\cM_2$ into $\R^{{m_2}+\widetilde k}$ to the manifold $\widehat \cM_2\coloneqq E_2(\cM_2)\subset \R^{\widetilde k}$. We then have that $p_2\colon \R^{{m_2}+\widetilde k}\to \R^{m_2}$, $p_2\colon(x,y)=x$ defines a diffeomorphism $p_2\colon\widehat  \cM_2\to \cM_2$. We assume that $\widetilde m_2+\widetilde k \ge 3m_2+1$.
    
    Similarly to the above, let $J_2\colon\R^{\widetilde m_2}\to \R^{\widetilde m_2}\times \{0\}^{\widetilde k-\widetilde m_2}\subset \R^{\widetilde k}$ be a linear injective map $J_2(x)\coloneqq (x,0)$. Then $J_2|_{\tilde \cM_2}:\tilde \cM_2\to \hat \cM_2\eqqcolon J_2(\tilde \cM_1)$ is a diffeomorphism.
    
    As there is a diffeomorphism $\tilde g_2\colon \tilde \cM_2\to \widehat  \cM_2$ and $k\ge 3m_2+1$, we see as above by using  \cite[Lemma 7.6]{madsen1997calculus} and \cite[Thm.\ 3.8]{puthawala2022universal} that $\tilde g_2$ extends to a diffeomorphism
     $T_2\colon \R^{k}\to \R^{k}$, so that $T_2|_{\tilde \cM_2}=\tilde g_2$. 
     Then we see that the map
    $$
    p_2\circ T_2\circ J_2\circ p\circ T_3\circ J_3 \bigg |_{\cM_1}\colon \cM_1\to \cM_2
    $$
    is a covering map.
\end{proof}
 
\end{document}